\documentclass{article}

\usepackage[preprint]{neurips_2026}
\usepackage{amsmath}
\usepackage{amssymb}
\usepackage{amsthm}

\usepackage[utf8]{inputenc}
\usepackage[T1]{fontenc}
\usepackage{hyperref}
\usepackage{url}
\usepackage{booktabs}
\usepackage{amsfonts}
\usepackage{nicefrac}
\usepackage{microtype}
\usepackage{xcolor}
\usepackage{graphicx}
\graphicspath{{figures/}}
\usepackage{wrapfig}
\usepackage{enumitem}
\usepackage{subcaption}

\usepackage[utf8]{inputenc}
\usepackage{graphicx} 
\usepackage{booktabs} 
\usepackage{makecell} 
\usepackage[dvipsnames]{xcolor} 
\usepackage{amsmath}  

\usepackage{booktabs} 
\usepackage{multirow} 

\newcommand{\up}[1]{\textcolor{ForestGreen}{\scriptsize $\uparrow$#1\%}}
\newcommand{\down}[1]{\textcolor{BrickRed}{\scriptsize $\downarrow$#1\%}}
\newcommand{\valup}[2]{\makecell{#1 \\ \up{#2}}}
\newcommand{\valdown}[2]{\makecell{#1 \\ \down{#2}}}

\newtheorem{theorem}{Theorem}

\newtheorem{lemma}[theorem]{Lemma}

\title{Uncovering and Shaping the Latent Representation of 3D Scene Topology in Vision-Language Models}

\author{%
  Haoming Wang \quad Wei Gao \\
  Department of Electrical and Computer Engineering\\
  University of Pittsburgh\\
  Pittsburgh, PA 15261 \\
  \texttt{\{hw.wang, weigao\}@pitt.edu} \\
}

\begin{document}

\maketitle

\begin{abstract}
Decades of cognitive science establish that humans navigate environments by forming cognitive maps, defined as allocentric and topology-preserving representations of 3D space. While modern Vision-Language Models (VLMs) demonstrate emergent spatial reasoning from 2D egocentric inputs, it remains unclear whether they construct an analogous 3D internal representation. In this paper, we demonstrate that current VLMs do possess a latent topological map of 3D scenes, but it is heavily overshadowed by non-geometric visual semantics, such as color and shape. By isolating this spatial subspace through cross-scene linear feature extraction, we extract a clean spatial subspace that causally controls the model's spatial outputs. We mathematically shape this latent representation and prove its correspondence to the Laplacian eigenmaps of the scene's 3D Gaussian-kernel graph, converging to the physical 3D space in the continuous limit. Motivated by this geometric identification, we further introduce a mathematically principled latent regularization method for VLMs, based on Dirichlet energy. Applying this single-term regularizer to a minimal 500-step supervised VLM fine-tuning (SFT) on simple synthetic data yields significant improvements on real-world spatial benchmarks, outperforming standard SFT and competitive baselines by up to 12.1\% in spatial tasks involving scene topology understanding. Source code is available at https://github.com/pittisl/vlm-latent-shaping



\end{abstract}

\vspace{-0.2in}
\section{Introduction}
\vspace{-0.1in}
\label{sec:intro}

\begin{wrapfigure}{r}{2.8in}
	\centering
	\vspace{-0.25in}
	\includegraphics[width=2.8in]{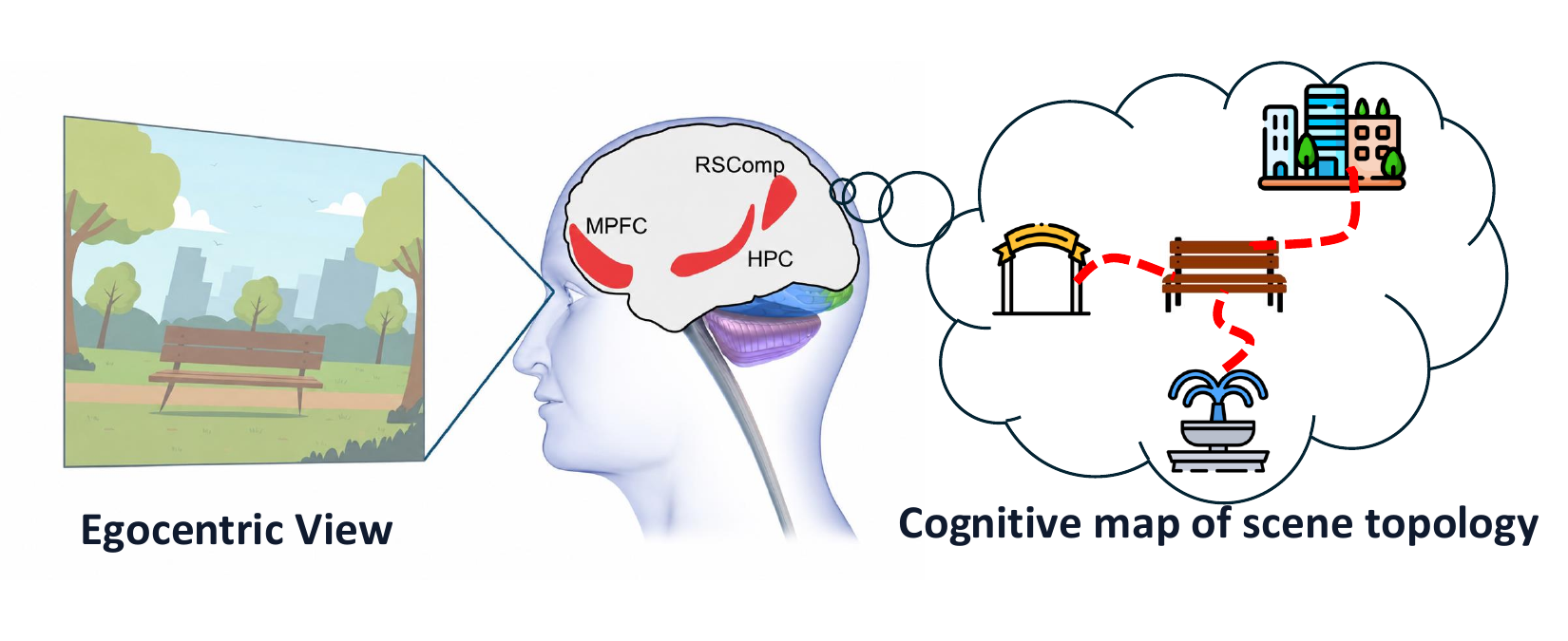}
	\vspace{-0.2in}
	\caption{The human brain compresses egocentric visual inputs into a low-dimensional and topology-preserving cognitive map}
	\vspace{-0.15in}
	\label{fig:cog_map}
\end{wrapfigure}
Decades of cognitive science establish that humans and other animals navigate complex 3D environments by forming an internal \emph{cognitive map} --- an allocentric representation of the spatial relations among objects that can be queried without visually perceiving the scene~\cite{tolman1948cognitive,o1978hippocampus}. Rather than individual pixels, image patches or metric distances, what such a cognitive map preserves is primarily the \emph{scene topology}, defined as the qualitative graph of ``what is next to what'' that encodes relative positions and neighborhood structures~\cite{warren2019non,peer2021structuring}, as shown in Figure \ref{fig:cog_map}. By constructing the cognitive map, the human brain compresses complex visual inputs into a low-dimensional representation, hence supporting efficient but precise spatial understanding and reasoning. 

Nowadays, Vision-Language Models (VLMs) \cite{li2024llava,maaz2024video} can succeed on simple spatial reasoning tasks given only egocentric visual inputs instead of dedicated 3D geometric information (e.g., depth estimation) ~\cite{rajabi2024gsr,ma2022sqa3d,liao2024reasoning}, but they fail on more difficult tasks in complex scenes with many objects, partial views and cluttered layouts \cite{liu2025spatial,wang2025infinibench}. The natural question, then, is whether VLMs have topology-preserving latent representations of the scene's 3D structure that are analogous to the cognitive map in human brain. It is important to know \emph{where} such representations reside, and whether we can mathematically or experimentally \emph{shape} them to improve the model's spatial reasoning abilities.

Existing diagnostic tools fall short of answering these questions. 
Attention-map analyses on VLMs reveal which 2D image patches the answer token attends to, but cannot reveal whether the model internally binds those image patches to a 3D allocentric representation~\cite{chefer2021generic,chen2025spatial,qi2025beyond}. Mechanistic interpretability for multimodal models has identified linear circuits for relational reasoning~\cite{park2024geometry,kang2026linear,assouel2025visual}, but these analyses remain anchored to the 2D image grid 
or text features, and stop short of asking whether the residual stream encodes the scene's intrinsic 3D topology. End-to-end VQA accuracy, in particular, is itself an unreliable proxy, as VLMs are known to exploit caption-style co-occurrence priors, distractor-set artifacts, and language shortcuts that may result in good accuracy on certain tasks but lack generalizability \cite{agrawal2016analyzing,goyal2017making,shi2026vision}.
Instead, we need a direct probe of \emph{latent} representations in model's activations, which (i) does not rely on the model's text output, (ii) is benchmark-agnostic, and (iii)establishes a precise mathematical equivalence to the physical 3D scene layout.

\begin{figure}
	\centering
	\vspace{-0.15in}
	\includegraphics[width=0.9\linewidth]{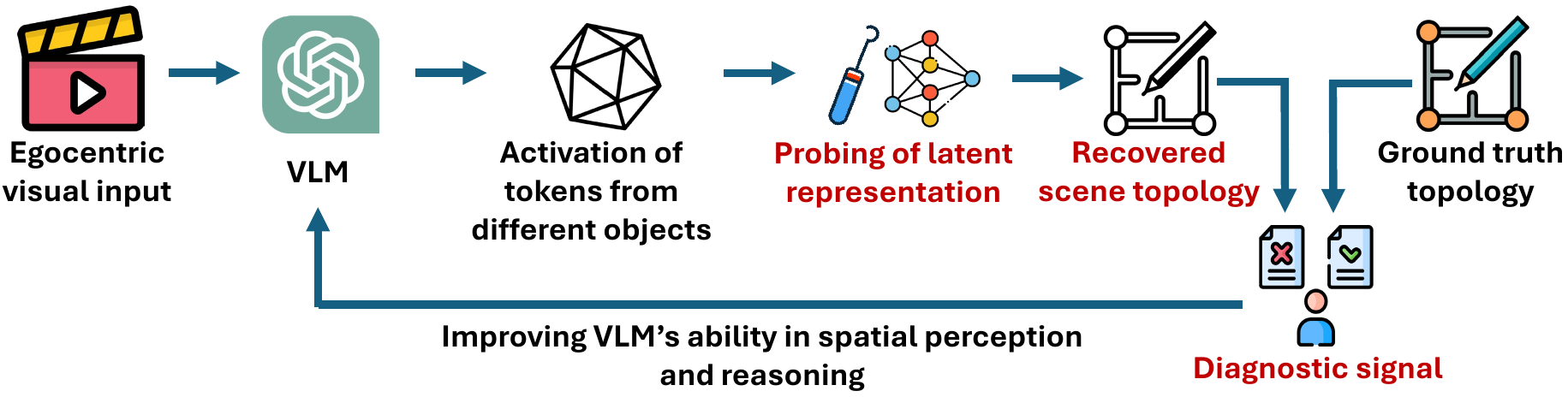}
	\vspace{-0.05in}
	\caption{Uncovering and shaping the latent representation of 3D scene topology using a feature extraction probe as diagnostic signal}
	\vspace{-0.2in}
	\label{fig:overview}
\end{figure}

Our findings showed that the existing VLMs do encode information about the 3D scene topology, but this representation is severely overshadowed by spatially irrelevant visual semantics (\S\ref{sec:diagnose}). To bypass this semantic entanglement, as shown in Figure \ref{fig:overview}, this paper introduces a linear feature extraction technique that isolates visual semantics from latent representations of 3D scene topology.
More specifically, we consider the same object's non-spatial attributes (e.g., color, shape, etc) as constant across different scenes, and average the object-token activations to project out the irrelevant visual semantics and extract a dominant spatial identity subspace.
Experiment results validated that principal components of the extracted spatial subspace cleanly recover the physical 3D scene layout (\S\ref{sec:extract}). 

Furthermore, we analytically proved that this extracted spatial subspace optimally aligns with the Laplacian eigenmaps of the scene's 3D graph (Theorem~\ref{thm:eigenmap}) and converges to 3D coordinate space (Theorem~\ref{thm:limit}). Consequently, we introduce a Dirichlet-energy regularizer to fine-tune the VLM's ``spatially relevant'' layers. This regularizer can be theoretically ensured to guarantee the task reliability, drastically reduce the sample complexity and minimize risk (Theorems~\ref{thm:realizability}--\ref{thm:risk}). Our experiment results show that just 500 supervised fine-tuning (SFT) steps with LoRA on simple synthetic data could successfully reshape the internal geometry in VLM's latent representations, and yield significant improvements on real-world spatial benchmarks such as VSI-Bench \cite{yang2025thinking} and MindCube \cite{wang2025mindcube}, without architectural changes or introducing extra 3D modalities to the model itself (\S\ref{sec:enforce}).  In spatial tasks involving scene topology understanding, such improvements could be up to 12.1\%, compared to standard SFT and competitive baselines of spatial encoding.

\vspace{-0.1in}
\section{Related Work}
\vspace{-0.1in}
\label{sec:related_works}

\subsection{Spatial Reasoning in Vision-Language Models}
\vspace{-0.05in}
While cognitive science highlights that biological agents navigate using topology-preserving ``cognitive maps''~\cite{tolman1948cognitive, o1978hippocampus, peer2021structuring, warren2019non}, imparting such allocentric 3D understanding to VLMs has largely been approached as an explicit architectural or brute-force data scaling problem. Recent approaches attempt to force 3D awareness by appending depth modalities~\cite{wang2025mindcube,wang2026mosaicthinker} or fusing multi-vision encoders~\cite{li2024llava, maaz2024video,cheng2024spatialrgpt}. Other efforts scale up spatial capabilities by pretraining on large amounts (e.g., billions) of synthesized spatial QA samples~\cite{rajabi2024gsr, liao2024reasoning, liu2025spatial,cai2025scaling}, or utilizing reinforcement learning (RL) with programmatic spatial rewards~\cite{batra2025spatialthinker,liu2025spatial,wu2025reinforcing}. While highly effective, these approaches rely heavily on injecting metric 3D data or altering standard input pipelines. In contrast, guided by the topological nature of biological spatial representation, our work explores whether standard VLMs naturally form  latent 3D representations~\cite{qi2025beyond, park2024geometry} solely from 2D egocentric visual inputs, avoiding the need for dedicated external 3D modules.

\vspace{-0.1in}
\subsection{Mechanistic Interpretability of Geometry in VLMs}
\vspace{-0.05in}
Mechanistic interpretability seeks to reverse-engineer the latent algorithms learned by neural networks~\cite{geiger2021causal, vig2020investigating,nanda2023emergent}. In multimodal models, researchers have utilized attention-map analyses to explore how VLMs ground textual concepts in image patches~\cite{chefer2021generic, agrawal2016analyzing, goyal2017making, chen2025spatial}, and recent works strived to directly map between spatial and relational representations~\cite{assouel2025visual,huang2025decomposing,nanda2023emergent}. ~\cite{li2025does} demonstrated that ViT patch embeddings encode an \emph{is-same-object} quadratic form to bind visual features, and Kang et al.~\cite{kang2026linear} identified linear ``spatial-ID'' vectors that VLMs use to bind spatial locations to text tokens, showing that causal interventions on these IDs can flip a model's spatial output. Despite these advances, existing analyses remain largely anchored to the 2D image grid (i.e., correspondence between 2D image patches and objects in the scene) or isolated within the vision encoder's early binding mechanisms~\cite{shi2026vision}. This leaves an open question about whether the downstream language modeling residual stream constructs a cohesive 3D allocentric understanding of the physical scene.

\vspace{-0.1in}
\section{VLMs' Latent Representations of 3D Scene Topology}
\vspace{-0.1in}
\label{sec:diagnose}

In this section, we conduct preliminary experiments to determine \emph{whether} pretrained VLMs already represent 3D scene topology in their latent space and \emph{how strong} it is relative to the irrelevant semantics representations that the model also carries. 

\begin{figure}[h]
	\centering
	\vspace{-0.15in}
	\includegraphics[width=0.7\linewidth]{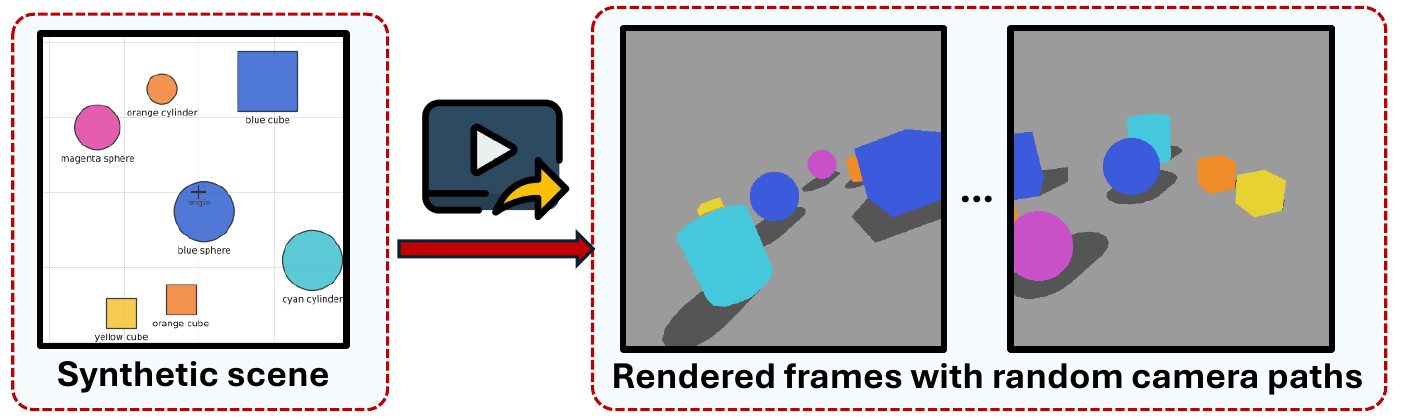}
	\caption{Synthetic spatial data}
	\vspace{-0.2in}
	\label{fig:data_example}
\end{figure}

\vspace{-0.1in}
\paragraph{Experiment Setup:}
To rigorously evaluate the internal representations of VLMs without the confounding variables of real-world images (e.g., lighting, textures, background context, occlusion, etc) \cite{xu2025spatialbench,wang2025spatialvid}, we built and used a fully controllable synthetic spatial dataset, namely \textbf{SynSpat3D}, to decouple a model's purely geometric understanding from its semantic visual priors.
As shown in Figure \ref{fig:data_example}, each synthetic sample is generated by a two-step pipeline: (1) \emph{Scene generation}: 6 to 12 objects are randomly placed in a bounded 3D coordinate space, with simple shapes (e.g., cube, sphere) and colors (e.g., red, blue) acting as the only independent sources of variance. (2) \emph{Frame generation}: We render the scene using randomly generated camera paths to provide egocentric 2D views. This dataset contains 1,000 distinct scenes, each of which is paired with 5 programmatically generated spatial questions, yielding a total of 5,000 video-question pairs. More details can be found in Appendix \ref{app:data_gen}. 

Using this dataset, we examine the residual-stream activations at every decoder layer of two pretrained VLMs, namely Qwen2.5-VL-7B \cite{qwen2.5-VL} and InternVL3-8B \cite{zhu2025internvl3}, which are widely adopted open-sourced VLMs with strong baseline spatial reasoning capabilities.

\begin{wrapfigure}{r}{2.4in}
	\centering
	\vspace{-0.15in}
	\includegraphics[width=\linewidth]{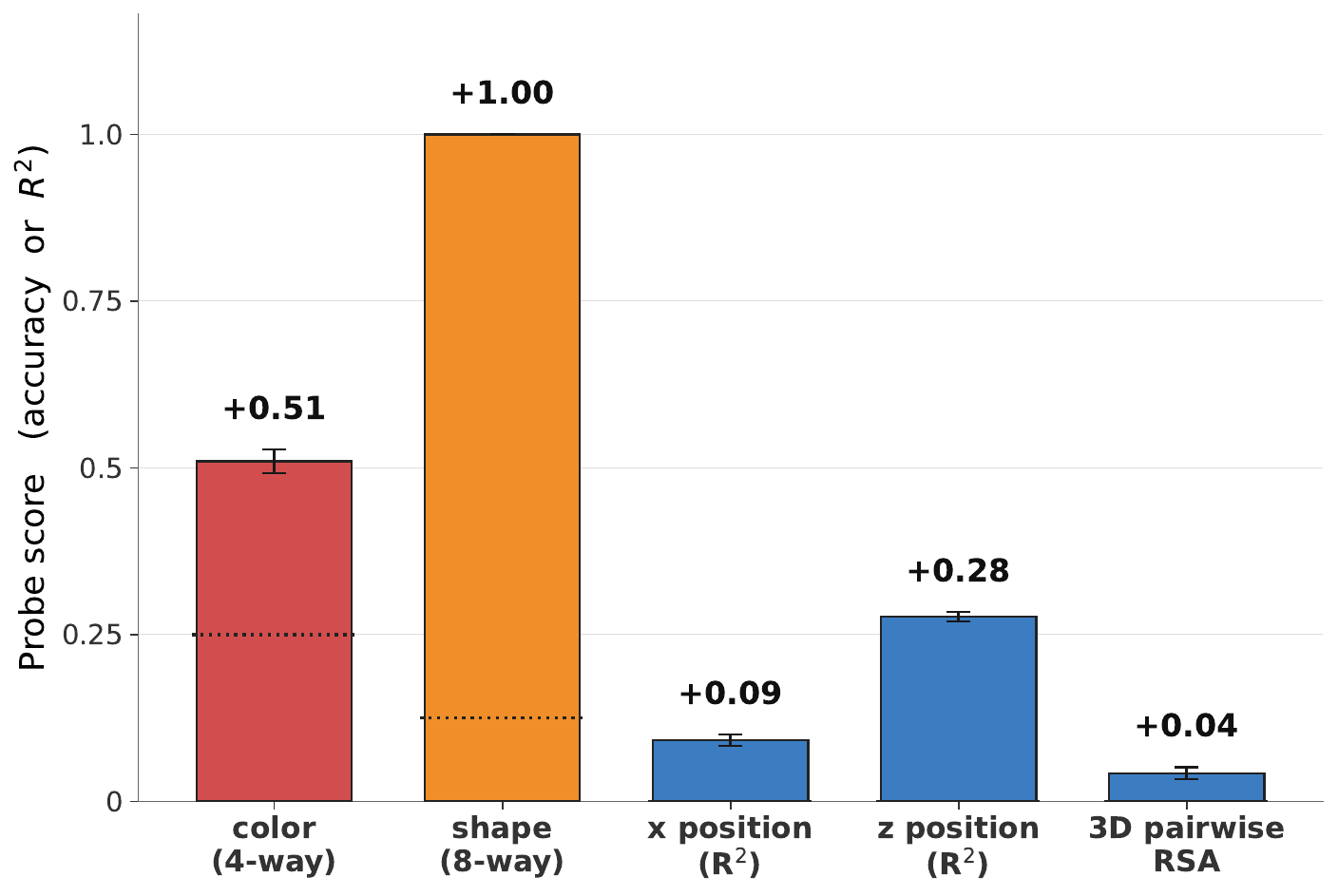}
	\vspace{-0.2in}
	\caption{VLMs preferentially encode irrelevant visual features over 3D position}
	\vspace{-0.1in}
	\label{fig:attention_dominance}
\end{wrapfigure}


\vspace{-0.1in}
\paragraph{Dominance of spatially irrelevant features.} 
We first apply direct linear probing \cite{vig2020investigating} at different layers to quantify the representational richness of spatially relevant features (3D positions) versus spatially irrelevant semantics (color, shape, etc). To do so, we train 
probes on the activations of image patches to predict each of these targets: object color, object shape, physical $x$-coordinate, physical $z$-coordinate, and pairwise 3D distance using representational similarity analysis (RSA) that measures the relative distance between objects. As shown in Figure ~\ref{fig:attention_dominance}, the results reveal a stark representational imbalance. The visual identity of an object is recoverable from activations with near-perfect accuracy, but its true 3D position can only be partially recovered. Analyzing the activation variance captured by each feature's optimal low-rank subspace further highlights this discrepancy (details are in Appendix~\ref{app:diagnose}).




\vspace{-0.1in}
\paragraph{VLM outputs in spatial reasoning may follow semantics instead of 3D positions:} 
To determine whether this representational imbalance dictates the VLM's downstream reasoning, we conduct a behavioral counterfactual experiment \cite{geiger2021causal}. We evaluate the model across 50 scenes under three conditions: the \emph{original} scene, a \emph{color-swap} variant (colors permuted, 3D positions preserved), and a \emph{position-swap} variant (colors preserved, 3D layout scrambled). 

We found that simply shuffling the object colors can flip the model's spatial predictions in 41.0\% of cases on distance-order questions. This semantic disruption is nearly as severe as physically scrambling the 3D layout, which yields a 48.6\% flip rate. This extreme sensitivity provides direct behavioral evidence that the VLM fails to cleanly separate ``where'' an object is from ``what'' it looks like. Instead of querying a robust geometric map, the language modeling head relies heavily on dominant color representations, rendering its spatial reasoning brittle to superficial semantic changes. More details about this can be found in Appendix~\ref{app:diagnose}.

\vspace{-0.1in}
\section{Linear Spatial Feature Extraction via Cross-Scene Averaging}
\vspace{-0.1in}
\label{sec:extract}

To address such representational imbalance, we aim to cleanly disentangle the latent spatial representation from the dominant non-geometric representations, and our approach is a linear feature extraction technique based on cross-scene averaging that projects out the irrelevant visual semantics. Since a VLM's latent space is fundamentally shaped by its pretraining objectives to form consistent, high-dimensional geometric structures, the extracted spatial subspace corresponds to how a VLM inherently maps 2D visual tokens into a cohesive allocentric 3D understanding, and is hence dependent only on the VLM's internal weights and architecture rather than the  dataset being used in extraction. 

As a result, we use the SynSpat3D dataset built in Section \ref{sec:diagnose} for extraction, and validate the efficacy of the extracted spatial subspace through both qualitative geometric visualizations and quantitative formal analysis. In Section \ref{sec:enforce}, we will further mathematically formalize this subspace to derive a Dirichlet-energy regularizer, demonstrating that enforcing this topology-preserving penalty in VLM fine-tuning can significantly improve the VLM's spatial reasoning abilities in real-world scenes.

\vspace{-0.1in}
\subsection{Per-object Representation}
\vspace{-0.1in}
\label{subsec:object_repr}

Let $\mathcal{S}$ denote the set of scenes and let object identity $o \in \mathcal{O}$ index a colored geometric primitive (e.g.\ ``red cube''). Each scene\footnote{Each scene contains at most one instance of a given identity.} $s\in\mathcal{S}$ contains $m$ objects $\{o_i\}_{i=1}^m$ at 3D positions $X^{(s)} = (x_1^{(s)},\dots,x_m^{(s)}) \in \mathbb{R}^{3 \times m}$. Unlike texts where discrete concepts map cleanly to individual tokens, visual objects in VLMs are distributed across continuous spatio-temporal patch grids, requiring a deliberate aggregation strategy to isolate their representations.

\begin{figure}
	\centering
	\vspace{-0.25in}
	\includegraphics[width=0.7\linewidth]{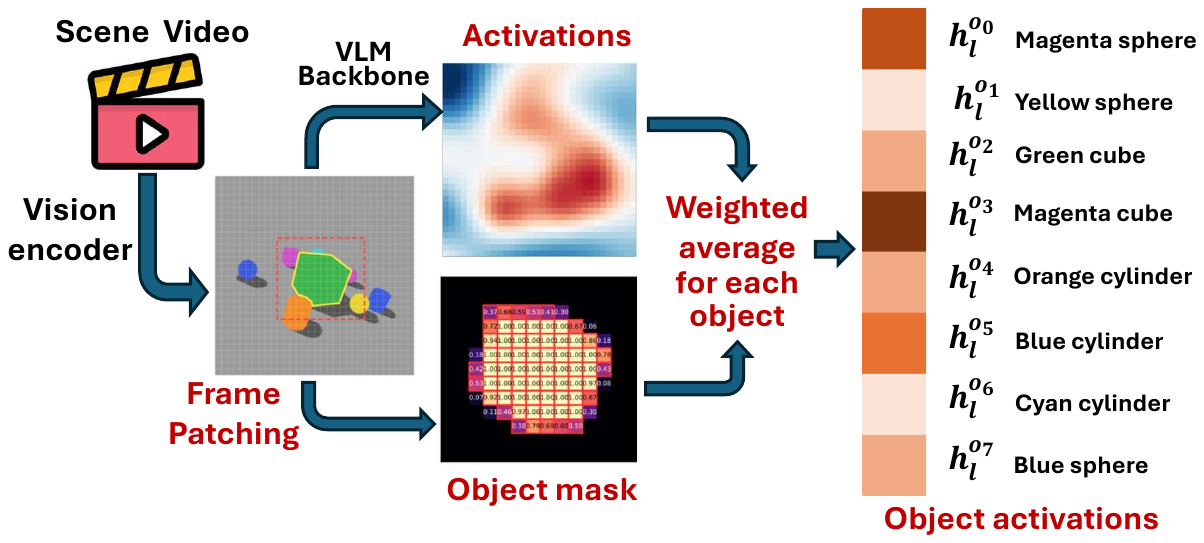}
	\caption{Computation of per-object representation}
	\vspace{-0.2in}
	\label{fig:per_obecjt_ac}
\end{figure}



As shown in Figure \ref{fig:per_obecjt_ac}, we feed the VLM with a sequence of rendered frames $\{I_t^{(s)}\}_{t=1}^{T}$. Each frame is patchified by the vision encoder onto a $g_h\!\times\!g_w$ patch grid, and consecutive frames are fused so that $T$ input frames produce $T/\tau$ temporal tokens\footnote{$\tau$ is used for VLM backbones with a temporal-patch merger of size $\tau$. For example, $\tau{=}2$ in Qwen2.5-VL.}. Let $h^{(\ell)}_{p,t}\in\mathbb{R}^d$ denote the activation at decoder layer $\ell$ of the visual token at patch indexed by $p \in \mathcal{P} := \{1,\dots,g_h\}\times\{1,\dots,g_w\}$ in temporal slot $t$. For each object $o_i$ in scene $s$ we form a mask--driven, coverage-weighted pool over visual tokens:
\begin{equation}
	h^{(s,o_i)}_{\ell,t} \;=\;
	\frac{\sum_{p:\,c^{(s)}_{o_i,p,t}\ge\kappa}\,c^{(s)}_{o_i,p,t}\;h^{(\ell)}_{p,t}}
	{\sum_{p:\,c^{(s)}_{o_i,p,t}\ge\kappa}\,c^{(s)}_{o_i,p,t}},
	\quad
	c^{(s)}_{o_i,p,t} \;=\;
	\frac{\bigl|\,\text{patch}_p\,\cap\,\text{mask}_{o_i,t}^{(s)}\,\bigr|}
	{\bigl|\,\text{patch}_p\,\bigr|},
	\label{eq:pool}
\end{equation}
where $c^{(s)}_{o_i,p,t}\in[0,1]$ is the fraction of patch $p$'s pixels belonging to object $o_i$ in the ground-truth mask (temporal-slot averaged), and $\kappa$ is a coverage threshold that excludes patches grazed by an object's silhouette. We then average across temporal slots to obtain the
\emph{per-object activation}:
\vspace{-0.05in}
\begin{equation}
	h^{(s,o_i)}_\ell \;=\;
	\frac{1}{T/\tau}\sum\nolimits_{t=1}^{T/\tau} h^{(s,o_i)}_{\ell,t}
	\;\in\; \mathbb{R}^{d},
	\qquad
	H^{(s)}_\ell \;=\;
	\bigl[h^{(s,o_1)}_\ell,\,\dots,\,h^{(s,o_m)}_\ell\bigr]^{\!\top}
	\;\in\; \mathbb{R}^{m \times d},
	\label{eq:obj_repr}
	\vspace{-0.05in}
\end{equation}
where $d$ is the LM hidden size 
. $H^{(s)}_\ell$ is the extracted activation matrix for objects in different layers.

\subsection{Disentangling the Spatial Representation}
\vspace{-0.05in}
\label{subsec:identity_repr}

\paragraph{Observation of linear decomposition:}
Recent works of mechanistic interpretability \cite{tigges2023linear,trager2023linear,wang2025deciphering,park2023linear,jiang2024origins} observed that, in mainstream model architectures and both text and vision domains, high-level human-interpretable features are encoded as approximately linear directions in the latent space \cite{kang2026linear,huang2025decomposing,rajaram2025line,bhalla2024interpreting,tehenan2025linear}. This means that for an object's representation, it can be decomposed additively into an identity-attribute component (carrying properties that depend only on $o$ such as color, shape, lexical identity, etc) and a spatial component (carrying the object's 3D position in scene $s$):
\vspace{-0.05in}
\begin{equation}
	h^{(s,o)}_\ell \;=\; u_{\rm id}(o) \;+\; u_{\rm sp}\!\bigl(x^{(s)}_o\bigr) \;+\; \varepsilon^{(s,o)},
	\label{eq:lrh}
\end{equation}
where $u_{\rm id}\colon \mathcal{O}\to\mathbb{R}^d$ is identity-conditional, $u_{\rm sp}\colon \mathbb{R}^3\to\mathbb{R}^d$ is position-conditional, and $\varepsilon^{(s,o)}$ collects interactions and noise. 
Hence, to isolate the spatial component $u_{\rm sp}$ , we first compute the identity-conditional component $u_{\rm id}$, by averaging object activations across randomized scenes.

\paragraph{Cross-scene averaging:}
Because the 3D layouts in our synthetic dataset are generated uniformly at random, the spatial component $u_{\rm sp}$ for a specific object identity has an expected value of zero \cite{kang2026linear}. Consequently, averaging an object's activations across a large corpus of scenes effectively cancels out its position-dependent variance, leaving only the invariant identity features. Thus, we have 
\begin{equation}
	\bar h^{(o)}_\ell \;\;\triangleq\;\; \frac{1}{|\mathcal{S}_o|}\!\sum_{s\in\mathcal{S}_o}\! h^{(s,o)}_\ell
	\;\;=\;\; u_{\rm id}(o) \;+\; \underbrace{\frac{1}{|\mathcal{S}_o|}\!\sum_{s\in\mathcal{S}_o} u_{\rm sp}\!\bigl(x^{(s)}_o\bigr)}_{\to\,0\;\text{as}\;|\mathcal{S}_o|\to\infty}
	\;+\; O(|\mathcal{S}_o|^{-1/2}),
	\label{eq:cross_scene_mean}
\end{equation}
so the cross-scene mean activation $\bar h^{(o)}_\ell$ is a consistent estimator of the identity-attribute prototype $u_{\rm id}(o)$. 
We collect the per-identity prototypes into a matrix
$\bar H_\ell \;=\;[\bar h^{(o_1)}_\ell,\,\dots,\,\bar h^{(o_K)}_\ell]^\top \in \mathbb{R}^{K\times d}$
over the $K$ unique identities in our object inventory.

\vspace{-0.1in}
\paragraph{Identity-attribute basis $W$:}
Having estimated the identity prototypes, we next construct an orthogonal basis for this spatial-irrelevant subspace so that it can be cleanly projected out. A thin SVD $\bar H_\ell^\top = U \Sigma V^\top$ gives the principal directions of the prototype cloud; we keep the top-$k$ left singular vectors as columns of an orthonormal basis:
\begin{equation}
	W_\ell \;\in\; \mathbb{R}^{d \times k},\quad W_\ell^\top W_\ell = I_k,
	\qquad
	\mathrm{span}(W_\ell) \;\approx\; \mathrm{span}\{u_{\rm id}(o) : o\in\mathcal{O}\},
	\label{eq:basis}
\end{equation}
and the specific choice of $k$ is discussed in Appendix~\ref{app:fulltable}. 

\vspace{-0.1in}
\paragraph{Linear spatial feature extraction $\tilde h^{(s,o)}_\ell$:}
\label{subsec:resid}

The orthogonal projector \emph{onto} the identity-attribute subspace is $P_W = W_\ell W_\ell^\top$, and the projector
\emph{away from} it is 	$P_\perp \;\triangleq\; I_d \;-\; W_\ell W_\ell^\top$. We then define the \emph{spatially-extracted} (or
\emph{spatial-feature} for short) per-object activation as
\begin{equation}
	\tilde h^{(s,o)}_\ell \;\triangleq\; P_\perp \, h^{(s,o)}_\ell
	\;=\; h^{(s,o)}_\ell \;-\; W_\ell W_\ell^\top h^{(s,o)}_\ell,
	\label{eq:residualized}
\end{equation}
which under Eq.~(\ref{eq:lrh}) approximately equals $u_{\rm sp}(x^{(s)}_o) + \varepsilon^{(s,o)}$ --- the spatial-relevant component plus residual noise. The extracted per-scene object matrix is then $\tilde H^{(s)}_\ell = H^{(s)}_\ell P_\perp$. 


\begin{figure}[ht]
	\centering
	\vspace{-0.05in}
	\includegraphics[width=1\linewidth]{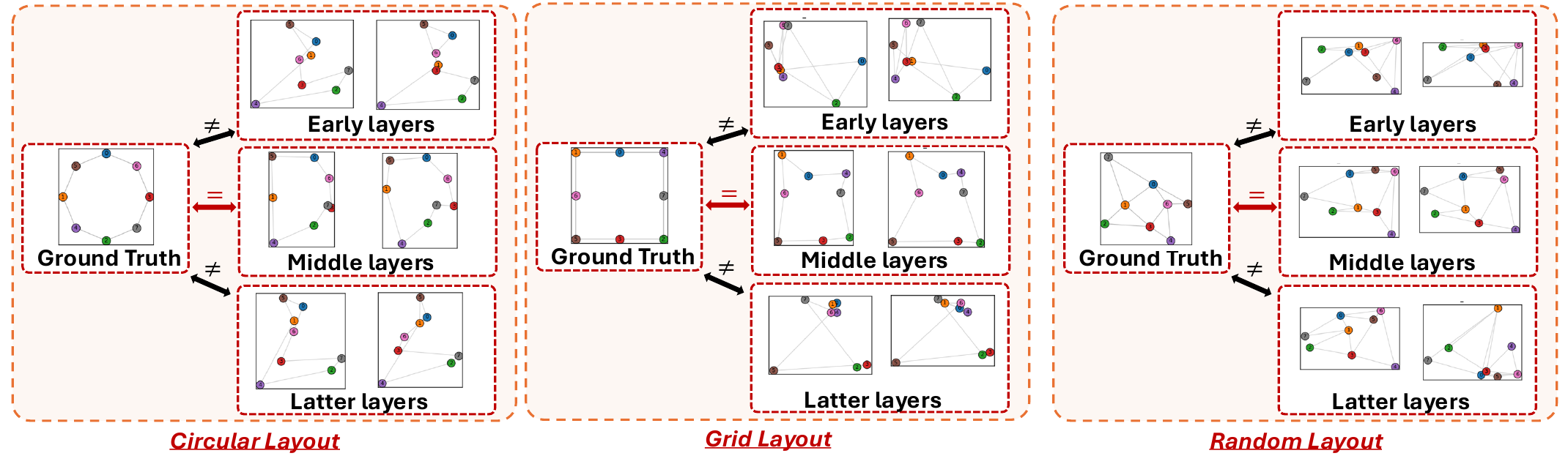}
	\vspace{-0.2in}
	\caption{Visualization of extracted spatial feature in different layers of VLM}
	\vspace{-0.1in}
	\label{fig:pca_visualized}
\end{figure}

\vspace{-0.05in}
\subsection{Results of Linear Spatial Feature Extraction}
\vspace{-0.05in}


\paragraph{Qualitative visualization of the spatial subspace:}
\label{subsec:pca}

To visualize the geometry in the spatial subspace, we collect the extracted per-object activations $\{\tilde h^{(s,o)}_\ell\}$ in different scenes 
and run PCA:
\begin{equation}
	\tilde{\mathbf H}_\ell \;=\; U \Sigma V^\top,
	\qquad
	Z \;=\; \tilde{\mathbf H}_\ell \,V_{:,1:3}\;\in\;\mathbb{R}^{N\times 3}.
	\label{eq:pca}
\end{equation}
Each row of $Z$ is the 3D PCA embedding of one object-token in one scene and $N{=}\sum_s m^{(s)}$. Figure \ref{fig:pca_visualized} plots $Z$ colored by the object's true 3D position, and demonstrates that with our linear spatial extraction, the resulting spatial subspace neatly unfolds at the middle layers of the VLM and mirrors the physical scene layout. More visualization examples are in Appendix \ref{app:v_eg}.


\vspace{-0.05in}
\paragraph{Quantitative metrics of topology similarity:}
\label{subsec:metrics}

Besides the qualitative visualization above, we also measure topology-faithfulness of $\tilde H_\ell^{(s)}$ to the scene's true 3D coordinates $X^{(s)}$ using Dirichlet energy \cite{belkin2003laplacian,park2024iclr}. We define a Gaussian-kernel graph over the scene's objects as $W_{ij} = \kappa_\tau(x_i,x_j) = \exp\Bigl(-\frac{\|x_i-x_j\|^2}{2\tau^2}\Bigr)$.
With the degree matrix $D_{ii}=\sum_j W_{ij}$ and the graph Laplacian $L = D - W$, the Dirichlet energy of an object matrix $\tilde H_\ell^{(s)}$ is
\begin{equation}
	\mathcal{E}_X(\tilde H_\ell^{(s)})
	\;=\; \frac{1}{2}\sum\nolimits_{i,j} W_{ij}\,\|\tilde h^{(s,o_i)}_\ell - \tilde h^{(s,o_j)}_\ell\|^2
	\;=\; \mathrm{tr}\!\bigl(\tilde H_\ell^{(s)\,\top}\, L\,\tilde H_\ell^{(s)}\bigr).
	\label{eq:dirichlet}	
\end{equation}

\begin{figure}[h]
	\centering
	\vspace{-0.05in}
	\begin{subfigure}[t]{0.4\linewidth}
		\centering
		\includegraphics[width=\textwidth]{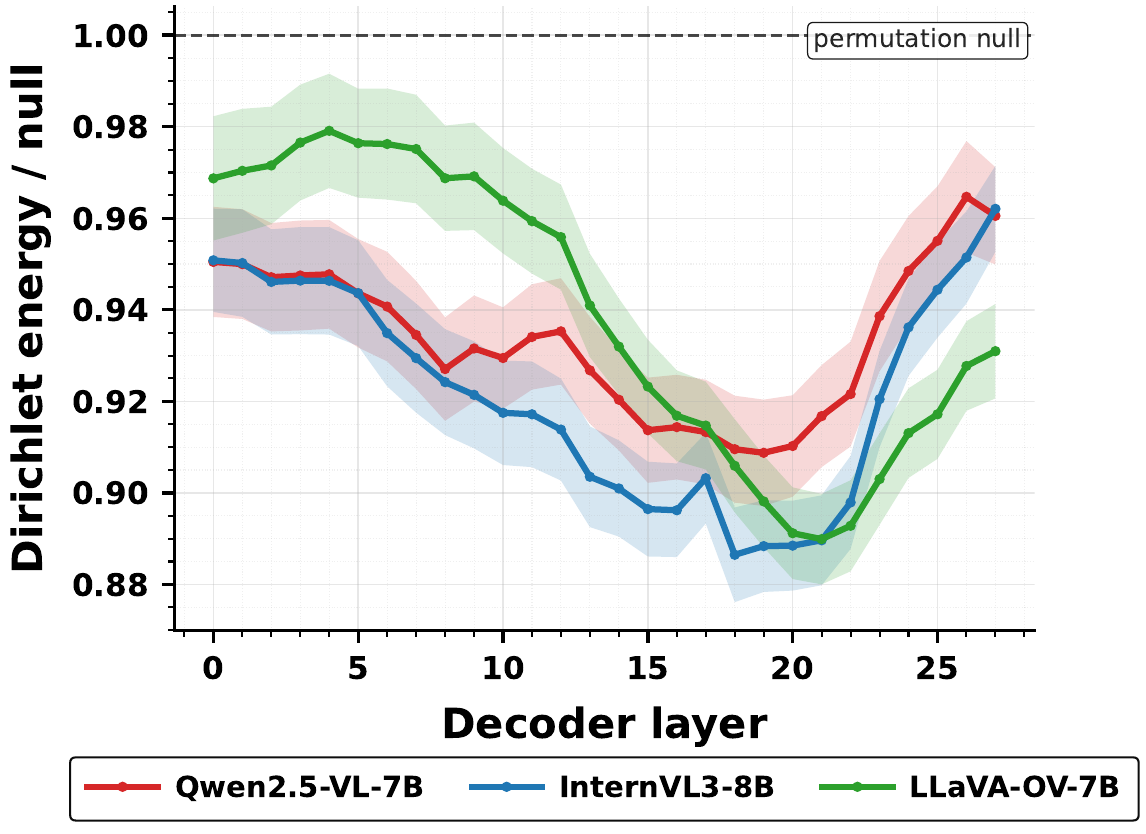}
		\caption{Dirichlet energy in different layers}
		\label{fig:dirichlet_energy_layers}
	\end{subfigure}
	\begin{subfigure}[t]{0.38\linewidth}
		\centering
		\includegraphics[width=\textwidth]{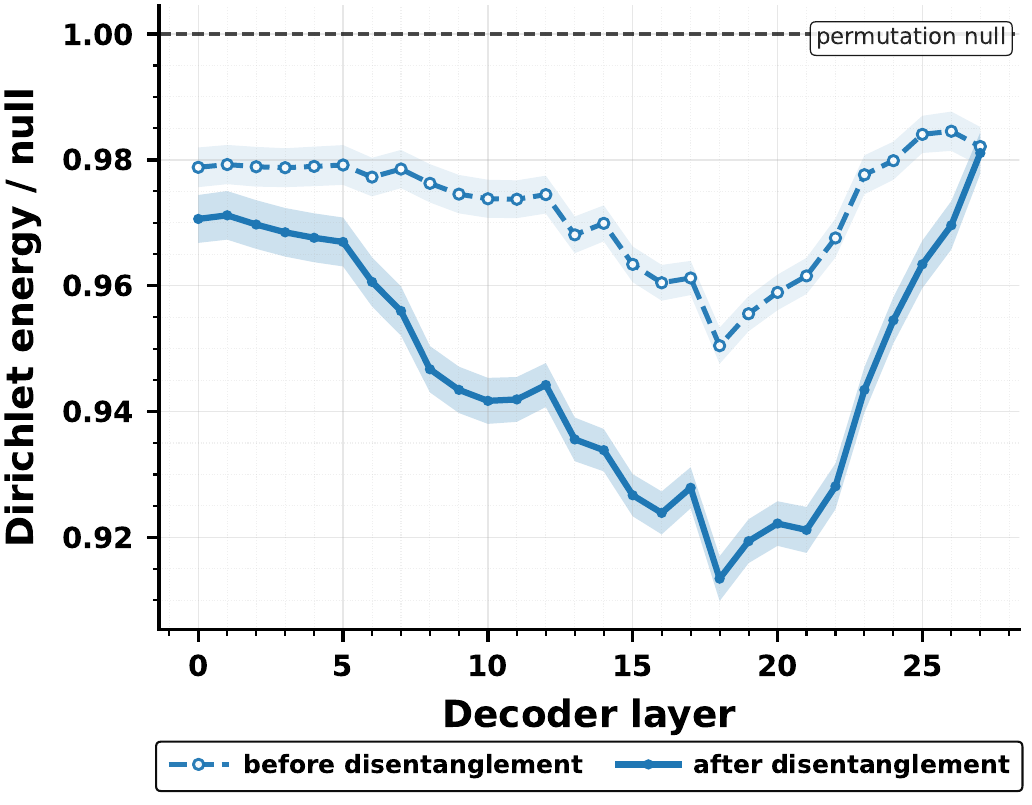}
		\caption{Effectiveness of disentanglement}
		\label{fig:disentanglement_comparison}
	\end{subfigure}
	\vspace{-0.05in}
	\caption{Dirichlet energy in different VLM layers}
	\vspace{-0.1in}
	\label{fig:dirichlet_energy_layers}
\end{figure}

A low Dirichlet energy indicates that closer objects in the physical 3D space are mapped to proximal points in the latent space, hence better preserving the scene's physical topology. To normalize against scale and per-scene difficulty, we report the \emph{Dirichlet ratio} against a permutation null, defined as $\rho^{\rm dir}(\tilde H_\ell^{(s)})= {\mathcal{E}_X(\tilde H_\ell^{(s)})}/{\mathbb{E}_{\Pi\sim\mathcal{U}(\mathfrak{S}_m)}\!\bigl[\mathcal{E}_X(\Pi\,\tilde H_\ell^{(s)})\bigr]}$, estimated with 1{,}000 row-shuffles of $\tilde H_\ell^{(s)}$ scene. $\rho^{\rm dir} < 1$ indicates the activation is more topology-faithful than a random binding of objects to rows. 


Being consistent with the visualizations in Figure \ref{fig:pca_visualized}, Figure \ref{fig:dirichlet_energy_layers} shows that the Dirichlet energy is minimized at these intermediate depths. Combining these findings with prior literature on model circuits and feature binding, we deduce a distinct \textbf{functional division} across the network: \emph{early layers} extract low-level visual features, \emph{middle layers} construct a spatially aware 3D understanding, and \emph{late layers} engage in linguistic reasoning to generate textual outputs. Furthermore, we validate the effectiveness of our disentanglement approach by comparing the Dirichlet energy before and after extraction. As shown in Figure \ref{fig:disentanglement_comparison}, the spatial localization within the middle layers becomes substantially more pronounced after the spatially irrelevant semantics are projected out.

\vspace{-0.1in}
\paragraph{Casual verification.}
To verify that the VLM actively uses the extracted spatial-aware subspace, we performed an casual intervention. As shown in Figure \ref{fig:casual_verification}, by injecting a controlled vector along the spatial subspace's principal axis into an object's activation, the 3D-coordinate probe's prediction shifted linearly and selectively along that targeted axis. In contrast, applying a matched control vector in a null direction produced no detectable effect. This confirms that the extracted subspace represents the actual 3D scene topology, and the model relies on this subspace for its internal 3D readout, rather than arbitrary representations. More details can be found in Appendix \ref{app:steering}.

\begin{figure}[h]
	\centering
	\vspace{-0.05in}
	\includegraphics[width=2.2in]{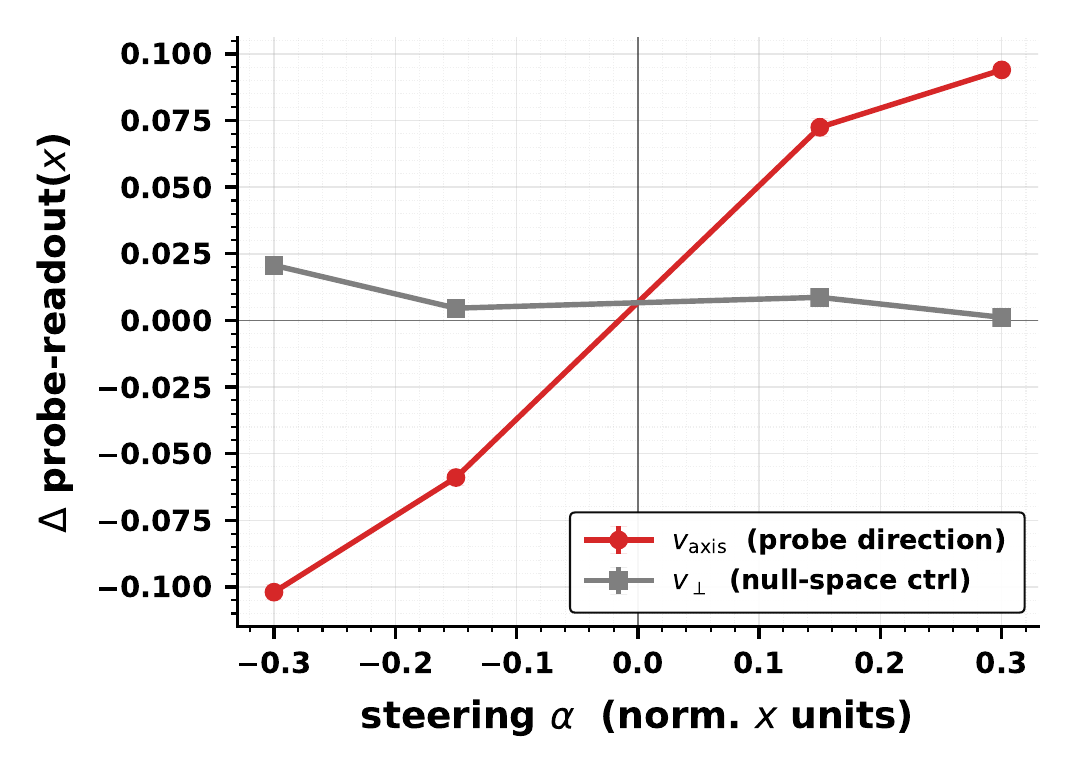}
	\vspace{-0.1in}
	\caption{Causal verification on the VLM's use of the spatial subspace}
	\vspace{-0.05in}
	\label{fig:casual_verification}
\end{figure}

\vspace{-0.1in}
\section{Shaping the Spatial Subspace via Dirichlet-Energy Regularization}
\vspace{-0.1in}
\label{sec:enforce}

With the extracted spatial subspace, we further explore possibilities of deliberately shaping it to enhance the VLM's spatial reasoning capabilities. In this section, we mathematically formalize this spatial subspace to derive a quantitative penalty. We then demonstrate that enforcing this penalty in VLM fine-tuning yields measurable and theoretically guaranteed gains on spatial reasoning.


\vspace{-0.05in}
\subsection{The Target Representation of Laplacian Eigenmaps}
\vspace{-0.05in}
\label{subsec:theory}

We first provide a mathematical formulation for the spatial subspace extracted in Section \ref{sec:extract}, to motivate why the \emph{Dirichlet energy} is the ideal scalar to minimize in VLM fine-tuning. Eq.~ (\ref{eq:dirichlet}) described the Dirichlet energy of an object-token matrix $H \in \mathbb{R}^{m\times d}$ on the scene's 3D Gaussian-kernel graph $W$ is $\mathcal{E}_X(H) = \mathrm{tr}(H^\top L H)$. Geometrically, a low Dirichlet energy ensures that rows of $H$ corresponding to spatially nearby objects (large $W_{ij}$) are themselves nearby in $\mathbb{R}^d$. To formalize this, we seek the exact mathematical structure of the representation $H^*$ that minimizes this energy.

\begin{theorem}[Dirichlet minimization $\Rightarrow$ Laplacian eigenmap PCA]
	\label{thm:eigenmap}
	Let $X \in \mathbb{R}^{m \times 3}$ be the 3D coordinates of $m$ scene objects, $W_{ij} = \kappa_\tau(x_i, x_j)$ their Gaussian-kernel graph, and $L = D - W$ the unnormalized graph Laplacian. 
	Fix $s \le \min(m,d) - 1$ and consider the constrained optimization:
	\begin{equation}
		H^* \;\in\; \arg\min_{H \in \mathbb{R}^{m \times d}} \mathcal{E}_X(H) \quad \text{subject to} \quad \sigma_k(H) \ge \epsilon_k \quad \text{for all } k \in[s],
	\end{equation}
	where $\sigma_k(H)$ is the $k$-th singular value of $H$ and $\epsilon_1 > \epsilon_2 > \dots > \epsilon_s > 0$ are constants.
	Then:
	\vspace{-0.05in}
	\begin{enumerate}[leftmargin=0.25in]
		\item[(a)] The minimizer is unique up to right-orthogonal transformation. The left singular vectors satisfy $u_k(H^*) = z^{(k)}$ for $k=1, \dots, s$, where $z^{(k)}$ is the $k$-th eigenvector of $L$ ordered by smallest eigenvalues.
		\vspace{-0.05in}
		\item[(b)] After mean-centering $H_c^* \triangleq (I - \frac{1}{m}\mathbf{1}\mathbf{1}^\top)H^*$, the $k$-th principal component (left singular vector) of $H_c^*$ exactly recovers $z^{(k+1)}$ for $k=1, \dots, s-1$.
	\end{enumerate}
	\vspace{-0.1in}
\end{theorem}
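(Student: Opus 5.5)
Write the thin SVD $H = U\Sigma V^\top = \sum_{k} \sigma_k u_k v_k^\top$, with $\sigma_1\ge\sigma_2\ge\cdots$. Then
\[
\mathcal{E}_X(H)=\mathrm{tr}(H^\top L H)=\mathrm{tr}\bigl(U\Sigma^2U^\top L\bigr)=\sum_k \sigma_k^2\, u_k^\top L u_k ,
\]
so the energy depends only on the left singular vectors and the singular values. The right factor $V$ drops out entirely, which already gives the ``up to right-orthogonal transformation'' part of (a). Let $0=\lambda_1\le\lambda_2\le\cdots\le\lambda_m$ be the eigenvalues of $L$, with orthonormal eigenvectors $z^{(k)}$ and $z^{(1)}=\mathbf{1}/\sqrt{m}$. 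Recall $L\succeq 0$ because $W_{ij}>0$.

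The first step is a lower bound. Set $A=U\Sigma^2U^\top\succeq 0$, whose eigenvalues are $\sigma_1^2\ge\sigma_2^2\ge\cdots$. The von Neumann (Ruhe) trace inequality for two PSD matrices gives
\[
\mathrm{tr}(AL)\ \ge\ \sum_k \sigma_k^2\,\lambda_k ,
\]
which pairs the largest weights with the smallest eigenvalues. Under the constraint, $\sigma_k\ge\epsilon_k$ for $k\le s$ and $\lambda_k\ge 0$, so the bound is at least $\sum_{k\le s}\epsilon_k^2\lambda_k$. Every term beyond $s$ is nonnegative, so it is optimal to set $\sigma_k=0$ for $k>s$.

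The second step is attainability. The rank-$s$ choice $H=\sum_{k\le s}\epsilon_k z^{(k)}v_k^\top$, with any orthonormal $v_k\in\mathbb{R}^d$, is feasible; this uses $s\le\min(m,d)$. It also achieves the bound exactly. Hence every minimizer meets the trace inequality with equality.

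The main obstacle is the equality case, which is what pins down $u_k=z^{(k)}$. I would analyse it under a genericity assumption: $\lambda_2<\cdots<\lambda_{s+1}$ are distinct, which holds for generic positions $X$, together with the strict ordering $\epsilon_1>\cdots>\epsilon_s$ from the statement.

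For $k\ge 2$, I would argue as follows. Equality in the trace inequality forces $A$ and $L$ to be simultaneously diagonalizable with oppositely ordered spectra. With distinct weights and distinct eigenvalues, this forces $u_k=\pm z^{(k)}$ and $\sigma_k=\epsilon_k$. Any slack $\sigma_k>\epsilon_k$ strictly increases the energy when $\lambda_k>0$.

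The index $k=1$ needs separate care. Since $\lambda_1=0$, $\sigma_1$ is not pinned to $\epsilon_1$. However, $u_1$ must lie in $\ker L=\mathrm{span}(\mathbf{1})$, assuming the graph is connected, which holds for a Gaussian kernel. So $u_1=z^{(1)}$ regardless of the value of $\sigma_1$. I would state this caveat explicitly: uniqueness refers to the left singular vectors, and holds up to sign.

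Part (b) then follows directly. A minimizer has the form $H^*=\sum_{k\le s}\sigma_k z^{(k)}v_k^\top$. Centering $(I-\tfrac1m\mathbf{1}\mathbf{1}^\top)$ annihilates $z^{(1)}$ and fixes every $z^{(k)}$ with $k\ge2$, because each is orthogonal to $\mathbf{1}$. This gives
\[
H_c^*=\sum_{k=2}^{s}\epsilon_k z^{(k)}v_k^\top .
\]
This expression is already a valid SVD, since the $z^{(k)}$ are orthonormal and the $v_k$ are orthonormal. Its singular values $\epsilon_2>\cdots>\epsilon_s$ are distinct, so the ordering is unambiguous. Therefore the $k$-th principal direction of $H_c^*$ is $z^{(k+1)}$ for $k=1,\dots,s-1$.
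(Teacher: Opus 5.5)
Your argument is correct and has the same skeleton as the paper's proof: expand $\mathcal{E}_X(H)=\sum_k\sigma_k^2\langle u_k,Lu_k\rangle$ via the SVD, bound it by pairing the largest weights with the smallest Laplacian eigenvalues, then let centering annihilate $z^{(1)}$. The central step is organized differently, however.

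The paper first fixes the singular values at $\sigma_k=\epsilon_k$ (``the constraints saturate''). It then minimizes over orthonormal $u_k$ with a weighted Ky Fan lemma, proved by Abel summation from the uniform Ky Fan bound on each prefix. You instead apply the von Neumann--Ruhe trace inequality to $U\Sigma^2U^\top$ and $L$, which bounds singular values and singular vectors jointly, and read the minimizers off the equality case.

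For the optimal value the two routes are interchangeable. The paper's version reduces everything to the uniform Ky Fan bound. Yours relies on the equality characterization of the trace inequality as a black box, so you should cite it explicitly.

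For the set of minimizers, your route is the more accurate one. The paper's claim that $\sigma_k(H^*)=\epsilon_k$ for all $k\le s$ fails at $k=1$, because $\langle z^{(1)},Lz^{(1)}\rangle=\lambda_1=0$ and every $\sigma_1\ge\epsilon_1$ is optimal. As you observe, only the left singular vectors (and $\sigma_2,\dots,\sigma_s$) are pinned. The ``unique up to right-orthogonal transformation'' clause of (a) is therefore too strong, since minimizers with different $\sigma_1$ have different Frobenius norms.

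You also make explicit the simple-spectrum hypothesis $\lambda_2<\cdots<\lambda_{s+1}$, which the paper's uniqueness-up-to-sign claim in Lemma~\ref{lem:kyfan} silently needs. With a repeated eigenvalue, only the span of the corresponding $u_k$ is determined.

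Neither caveat affects (b). Centering removes the free $\sigma_1 z^{(1)}v_1^\top$ term, exactly as your last display shows.
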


This theorem proves that if we force a representation to maintain some variance (the $\epsilon_k$ constraints prevent trivial collapse to a single point) while minimizing the Dirichlet energy, the optimal representation $H^*$ will naturally align its principal components with the eigenvectors of the scene's graph Laplacian. In practical VLM fine-tuning, the standard language loss ($\mathcal{L}_{\rm LM}$) implicitly provides this non-degeneracy constraint, as the network cannot collapse its hidden states to zero without destroying its ability to predict text tokens.

\vspace{-0.1in}
\paragraph{Proof Sketch.}
The full formal proof is in Appendix~\ref{app:proofs} and relies on the Ky Fan trace minimum theorem. (1) By expanding $H$ into its Singular Value Decomposition ($H=U\Sigma V^\top$), the Dirichlet objective exactly decomposes along the orthonormal left-singular directions $u_k$. (2) The lower-bound constraints $\sigma_k \ge \epsilon_k$ freeze the singular values. (3) Applying the weighted Ky Fan theorem proves that the optimal orthonormal vectors $u_k$ must perfectly align with the eigenvectors of $L$ corresponding to its smallest eigenvalues. (4) Mean-centering simply projects out the trivial uniform mode $z^{(1)} \propto \mathbf{1}$.

This identifies the representation as a set of abstract graph eigenvectors. To connect back to physical vision, we consider the continuous limit:

\begin{theorem}[Continuous Limit to 3D Space]
	\label{thm:limit}
	Let $\{x_i\}_{i=1}^m$ be i.i.d. samples from a smooth density $\rho$ on a compact Riemannian manifold $\Omega \subset \mathbb{R}^3$. Let $L^{(\tau)}$ be the Gaussian-kernel Laplacian with bandwidth $\tau_m$. If $\tau_m \to 0$ and $m\tau_m^{3+\alpha} \to \infty$, then for any fixed $k$, with probability tending to 1, the discrete eigenvector $z^{(k+1)}(x_i)$ converges uniformly to $\rho(x_i)^{-1/2}\phi^{(k)}(x_i)$, where $\phi^{(k)}$ is the $k$-th non-constant eigenfunction of the weighted Laplace-Beltrami operator $-\Delta_\rho$. For uniform $\rho$ on a Euclidean cube, $\phi^{(1)}, \phi^{(2)}, \phi^{(3)}$ are exactly the Cartesian coordinates $x, y, z$ (up to a smooth cosine reparameterization).
	\vspace{-0.05in}
\end{theorem}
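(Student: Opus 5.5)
The plan is to reduce the statement to known spectral-convergence results for graph Laplacians built from random samples, using the scaling and eigenvalue gap to pass from operator convergence to eigenvector convergence.

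\emph{Step 1: pointwise operator convergence.} Rescale the unnormalized Gaussian-kernel Laplacian as $\mathcal{L}_m \triangleq \frac{c}{m\tau_m^{3+2}} L^{(\tau_m)}$, with $c$ a kernel-moment constant. For smooth $f$, a Taylor expansion of $f$ inside the kernel integral, followed by a Bernstein-type concentration over the i.i.d. samples, gives
\[
\mathcal{L}_m f(x_i) \to -\rho(x_i)\,\Delta_\rho f(x_i),
\qquad
\Delta_\rho f = \rho^{-1}\nabla\cdot(\rho\nabla f)
\]
up to density factors. This is the standard Belkin--Niyogi/Hein--Audibert computation. The condition $m\tau_m^{3+\alpha}\to\infty$ ensures the variance term vanishes uniformly over the sample points. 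I would invoke these results rather than redo the expansion. Near $\partial\Omega$ the same expansion yields a Neumann-type boundary condition.

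\emph{Step 2: spectral convergence.} I would follow the von Luxburg--Belkin--Bousquet framework. Lift the empirical operator to $C(\Omega)$ via the Nyström extension. Show collectively compact convergence, or use the variational $\Gamma$-convergence route of García Trillos--Slepčev for the Dirichlet forms $\tfrac{1}{m^2\tau^{5}}\sum_{ij}W_{ij}(f_i-f_j)^2 \to \int|\nabla f|^2\rho^2$. Then apply a Davis--Kahan argument, using that the $k$-th eigenvalue is isolated; if it is not simple, the conclusion holds up to rotation within the eigenspace.

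The $\rho^{-1/2}$ factor arises because the unnormalized Laplacian corresponds to a weighted operator in $L^2(\rho)$. Conjugating the limiting form by $\rho^{1/2}$ maps its eigenfunctions to those of $-\Delta_\rho$. Hence $z^{(k+1)}(x_i)\approx\rho(x_i)^{-1/2}\phi^{(k)}(x_i)$, after normalization and sign choice. The index shift $k+1$ accounts for the constant eigenvector $z^{(1)}$, which is consistent with Theorem~\ref{thm:eigenmap}(b).

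\emph{Step 3: uniform convergence.} Upgrade the $L^2$ eigenvector convergence to uniform convergence over sample points. This uses the bound $\|z-\phi\|_\infty\le\|(\mathcal{L}_m-\mu)^{-1}\|\cdot\|\mathcal{L}_m\phi-\mu\phi\|_\infty$ together with the uniform consistency from Step 1. Some rate condition on $\tau_m$ is needed here, which is the role of the slack $\alpha$.

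\emph{Step 4: the cube.} For uniform $\rho$ on $[0,a]^3$ with Neumann conditions, separation of variables gives eigenfunctions $\prod_j\cos(n_j\pi x_j/a)$ with eigenvalues $\pi^2|n|^2/a^2$. The first three nonconstant ones are $\cos(\pi x/a)$, $\cos(\pi y/a)$ and $\cos(\pi z/a)$. These are monotone reparameterizations of the Cartesian coordinates, which gives the final clause.

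However, for a cube these three share one eigenvalue. So the limit holds only for the eigenspace, and individual eigenvectors converge up to a $3\times 3$ rotation. I would state this caveat explicitly, or perturb to a box with distinct side lengths.

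The main obstacle is Steps 2–3: obtaining eigenvector convergence \emph{uniformly} with the stated scaling for the \emph{unnormalized} Laplacian. Known results of this kind need either normalized Laplacians or conditions on the density. I would attack it first by invoking Calder--García Trillos $L^\infty$ rates, adjusting $\alpha$ to match their bandwidth hypotheses.
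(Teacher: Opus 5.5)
Your skeleton is the paper's: Belkin--Niyogi pointwise limit, von Luxburg--Belkin--Bousquet spectral convergence, the passage $\tau_m\to0$, and Neumann cosines on the cube. You are more careful than the paper at two points.

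First, the joint limit. The paper glues fixed-bandwidth spectral convergence to the pointwise limit and simply asserts the joint limit. Your $\Gamma$-convergence option in Step~2 actually handles it, and it is also the option that fits the hypothesis. Once $m\tau_m^{3+\alpha}\ge1$ one has $m\tau_m^3=(m\tau_m^{3+\alpha})^{3/(3+\alpha)}m^{\alpha/(3+\alpha)}$, hence $m\tau_m^3/\log m\to\infty$, which is the Garc\'{\i}a Trillos--Slep\v{c}ev scaling in dimension three. By contrast, your Step~1 claim fails for $\alpha\le2$, because $\mathcal{L}_mf(x)$ has variance of order $1/(m\tau_m^5)$. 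The paper makes the same slip when it equates $m\tau_m^{d_\Omega+2}\to\infty$ with the stated condition.

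Second, the degeneracy. Your caveat is correct and missing from the paper, whose Step~4 finds $\cos\pi x,\cos\pi y,\cos\pi z$ all at eigenvalue $\pi^2$ and still claims coordinate-wise recovery. Only the eigenspace converges, and its span is all Theorem~\ref{thm:realizability} needs.

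The genuine gap is the $\rho^{-1/2}$ identification. It appears in your proposal and equally in the paper, which attributes the factor to a ``standard weighted-Laplace--Beltrami normalization''. Your own $\Gamma$-limit $\int|\nabla f|^2\rho^2$, with the $L^2(\rho)$ normalization inherited from $\frac1m\sum_i f_i^2$, gives the Neumann problem $-\mathrm{div}(\rho^2\nabla f)=\mu\rho f$. Consistently, the pointwise limit of the unnormalized Laplacian is $-\frac{c}{\rho}\mathrm{div}(\rho^2\nabla f)$ (Hein--Audibert--von Luxburg), not a density multiple of $\Delta_\rho f$. Conjugating $A=-\rho^{-1}\mathrm{div}(\rho^2\nabla\,\cdot)$ by $\rho^{1/2}$ gives $\rho^{1/2}A\rho^{-1/2}g=-\mathrm{div}(\rho\nabla g)+Vg$ with $V=\frac12\rho^{-1/2}\mathrm{div}(\rho^{1/2}\nabla\rho)$, which is not $-\Delta_\rho=-\rho^{-1}\mathrm{div}(\rho\nabla\,\cdot)$.

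Concretely, take $\rho\propto e^{x}$ on $[0,1]$ and any nonconstant Neumann eigenfunction $\phi$ of $-\Delta_\rho$. The function $f=\rho^{-1/2}\phi$ has $f'(0)=-\frac12\rho(0)^{-1/2}\phi(0)\neq0$, since $\phi(0)=\phi'(0)=0$ would force $\phi\equiv0$. So $f$ violates the natural boundary condition of the limit problem. Hence for nonconstant $\rho$ the stated limit is false, and what your argument can deliver is convergence to eigenfunctions of $-\rho^{-1}\mathrm{div}(\rho^2\nabla\,\cdot)$.

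For constant $\rho$ all these operators agree up to constants, so the cube clause, the only part used downstream, survives. The uniform step still needs care there:
\begin{itemize}
\item Pointwise consistency fails in an $O(\tau_m)$ layer along the faces and corners, so check that the $L^\infty$ eigenvector results you invoke cover domains with boundary.
\item Your resolvent bound must use the reduced resolvent on the complement of the eigenspace, whose $\ell^\infty$ operator norm the spectral gap alone does not control.
\end{itemize}
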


The vectors extracted by Theorem~\ref{thm:eigenmap} correspond directly to physical 3D space. While real-world scenes feature more complex object densities, our probing and training datasets utilize uniformly distributed objects, hence allowing the Laplacian eigenmaps to naturally recover clean Cartesian axes.

\vspace{-0.05in}
\subsection{Loss Formulation and Realizability}
\vspace{-0.05in}
\label{subsec:loss}

Motivated by that Dirichlet energy uniquely targets true 3D spatial axes, we introduce a training intervention that shapes the latent space toward this target in VLM fine-tuning. We add a single scalar Dirichlet-energy term to the standard LoRA cross-entropy loss at the cognitive-map layer $\ell$:
\begin{equation}
	\mathcal{L}(\theta) = \mathcal{L}_{\rm LM}(\theta)
	\;+\; \lambda_{\rm dir}\cdot \mathcal{E}_X\bigl(P_\perp H_\ell(\theta)\bigr),
\end{equation}
where $P_\perp$ is the cross-scene spatial-feature-extraction projector from Section \ref{sec:extract} and $W$ is the scene's 3D Gaussian-kernel graph. 

Why does forcing the model into this specific representation improve downstream spatial reasoning? We prove that it guarantees the \emph{expressivity} required by the LM head:

\begin{theorem}[Realizability of Spatial Readouts]
	\label{thm:realizability}
	Let $f: \mathbb{R}^3 \to \mathbb{R}$ be any linear function of the monotonic coordinate transformations (e.g., an axis-aligned ordinal comparison encoding ``is object A to the left of object B''). For the Dirichlet-minimizing representation $H^*$ satisfying Theorem~\ref{thm:limit}, and for every $\eta > 0$, there exists a linear functional $\ell : \mathbb{R}^d \to \mathbb{R}$ such that:
	\begin{equation}
		\sup\nolimits_{i \in[m]} |\ell(h^*_i) - f(x_i)| \;<\; \eta \quad \text{with high probability as } m \to \infty.
	\end{equation}
\end{theorem}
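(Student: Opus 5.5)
The plan is to combine the structural description of $H^*$ from Theorem~\ref{thm:eigenmap} with the continuum identification of Theorem~\ref{thm:limit}, and then pass from cosine eigenfunctions to monotone coordinate functions by a finite linear combination.

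\textbf{Step 1: reduce to a question about the span of the columns of $H^*$.} By Theorem~\ref{thm:eigenmap}(a), $H^* = \sum_{k=1}^{s} \sigma_k z^{(k)} v_k^\top$ with $\sigma_k \ge \epsilon_k > 0$ and orthonormal $v_k \in \mathbb{R}^d$. Any linear functional $\ell(h) = w^\top h$ then gives
\[
\ell(h^*_i) = \sum_{k=1}^{s} \sigma_k (v_k^\top w)\, z^{(k)}_i .
\]
Because the $v_k$ are orthonormal and $\sigma_k > 0$, we can choose $w$ so that the coefficients $c_k = \sigma_k v_k^\top w$ take arbitrary values. Hence the set of realizable readouts $(\ell(h^*_i))_{i}$ is exactly $\mathrm{span}\{z^{(1)},\dots,z^{(s)}\}$. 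This span contains the constant vector $z^{(1)} \propto \mathbf{1}$, so affine offsets come for free. The problem therefore reduces to showing that $(f(x_i))_i$ is uniformly approximable by this span, where $s$ may be chosen large but fixed as $m \to \infty$.

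\textbf{Step 2: continuum approximation.} On the uniform cube, the weighted Laplace--Beltrami eigenfunctions are the Neumann cosines $\prod_j \cos(n_j \pi x_j)$. Here $f$ is a linear combination of monotone transforms $g_j(x_j)$ of single coordinates, and we take the $g_j$ to be continuous on the compact domain. By Stone--Weierstrass, or equivalently uniform convergence of the cosine series for continuous functions on an interval with even extension, there is a finite $K$ and coefficients $a_k$ with
\[
\sup_{x \in \Omega} \Bigl| f(x) - a_0 - \sum_{k \le K} a_k \phi^{(k)}(x) \Bigr| < \eta/2 .
\]
Only the univariate modes $\cos(n\pi x_j)$ are needed. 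For a general smooth $\rho$ we use instead the density of the eigenfunctions of $-\Delta_\rho$ in $C(\Omega)$, which follows from the spectral theorem together with elliptic regularity. We then fix $s \ge K+1$, with the $\epsilon_k$ bounds unaffected.

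\textbf{Step 3: discretize.} Theorem~\ref{thm:limit} gives $\max_i |z^{(k+1)}_i - \rho(x_i)^{-1/2}\phi^{(k)}(x_i)| \to 0$ in probability for each fixed $k \le K$. We absorb the known factor $\rho^{-1/2}$, which is constant in the uniform case. For nonuniform $\rho$ we instead approximate $\rho^{1/2} f$ by eigenfunctions, so that multiplying by $\rho^{-1/2}$ recovers $f$. We set $c_{k+1} = a_k$, taking into account the normalization of $z$, which scales like $m^{-1/2}$ relative to the $L^2(\rho)$ normalization of $\phi$. The triangle inequality then gives
\[
\sup_i |\ell(h^*_i) - f(x_i)| \le \eta/2 + \sum_{k \le K} |a_k| \cdot o_P(1) < \eta
\]
with probability tending to one.

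\textbf{Main obstacle.} The hardest part is the bookkeeping in Step 3. The coefficients $a_k$, and hence $w$, must be fixed independently of $m$ while the $z^{(k)}$ carry an $m$-dependent normalization. The convergence in Theorem~\ref{thm:limit} also holds only up to sign, or up to rotation within degenerate eigenspaces; the cube's spectrum is highly degenerate, since the $x$, $y$, $z$ modes share an eigenvalue. To handle this, we work with entire eigenspaces: we approximate by projecting onto the span of all eigenvectors up to a spectral gap, a span that is rotation invariant. The freedom to choose $w$ then absorbs any sign or rotation ambiguity, because we only ever use span membership and never individual eigenvectors.
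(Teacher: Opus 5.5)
\textbf{Verdict.} Your argument is correct, and it reaches the conclusion by a different, more general route than the paper.

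\textbf{How the two routes differ.} The paper reads ``the monotonic coordinate transformations'' as exactly the three limit eigenfunctions $\phi_1,\phi_2,\phi_3$, i.e.\ the modes $\cos(\pi x_j)$. So $f=\sum_{k=1}^3 c_k\phi_k$ holds exactly, with no approximation step. It then fixes $s\ge 4$, writes down $\ell(h)=\sum_{k=1}^3 (c_k/\sigma^*_k)\langle h,V^*_{:,k}\rangle$, and concludes by the triangle inequality from uniform convergence of individual eigenvectors. You instead proceed in three steps:
\begin{enumerate}
\item You observe that the realizable readouts $\{H^*w : w\in\mathbb{R}^d\}$ are exactly $\mathrm{span}\{z^{(1)},\dots,z^{(s)}\}$.
\item You approximate an arbitrary continuous monotone reparameterization by finitely many cosine modes.
\item You transfer this to the sample through convergence of whole eigenspaces.
\end{enumerate}

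\textbf{What your route buys.} It covers genuinely general transforms (e.g.\ $f(x)=x_1$ itself) and gets affine offsets for free via $z^{(1)}$. It also handles three points the paper passes over:
\begin{enumerate}
\item The $\sqrt{m}$ mismatch between the unit eigenvectors of Theorem~\ref{thm:eigenmap} and the $O(1)$ limit in Theorem~\ref{thm:limit}. The paper's $\ell$ keeps $m$-independent coefficients $c_k/\epsilon_k$.
\item The $\rho^{-1/2}$ factor.
\item The threefold degeneracy of the eigenvalue $\pi^2$ on the cube. There ``the $k$-th eigenvector'' is not determined, and only the eigenprojection converges.
\end{enumerate}
The range formulation also avoids an index slip in the paper's (E.1). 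In fact $\langle h^*_i,V^*_{:,k}\rangle/\sigma^*_k=z^{(k)}_i$, not $z^{(k+1)}_i$, so the paper's $\ell$ should use $V^*_{:,k+1}$ and $\sigma^*_{k+1}$.

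\textbf{What the paper's route buys.} It respects the quantifier order of the statement: $H^*$, and hence $s$, is fixed before $\eta$. In your general version $s\ge K+1$ depends on $\eta$, and necessarily so. With $s=4$ fixed, $f(x)=x_1$ stays at positive sup-distance from $\mathrm{span}\{1,\cos\pi x_1,\cos\pi x_2,\cos\pi x_3\}$. So for general monotone transforms you prove ``for every $\eta$ there exist $s$ and $\ell$''. Under the intended reading, your argument specializes to $K=3$, $s=4$ with no approximation step, and gives the statement as written.

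\textbf{Small repairs to your sketch.}
\begin{enumerate}
\item Partial sums of the cosine series of a merely continuous function need not converge uniformly. Use Stone--Weierstrass instead (the cosine span is an algebra separating points of $[0,1]$), or Fej\'er means.
\item In your obstacle paragraph, $w$ cannot and need not be $m$-independent: it depends on the arbitrary right factor $V^*$ and scales like $\sqrt{m}$. Only the function-space coefficients $a_k$ must be fixed.
\item The span convergence you invoke is the spectral-projection form of the von Luxburg--Belkin--Bousquet result. This is slightly stronger than Theorem~\ref{thm:limit} as literally stated, and it requires $s$ to sit at a spectral gap of the limit operator ($s=4$ does on the cube).
\item The nonuniform case needs $\rho$ bounded away from zero to control $\rho^{-1/2}$.
\end{enumerate}
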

Since the latent space is reshaped to mirror the 3D world coordinates, spatial relational questions become linearly separable. The VLM's linear language modeling head possesses the exact architectural capacity required to answer perfectly without needing deeper, non-linear geometric parsers.

\begin{table}
	\small
	\centering
	
	\begin{tabular}{lcccc}
		\toprule
		\multirow{2}{*}{Overall accuracy (\%)} & \multicolumn{2}{c}{Qwen2.5-VL-7B} & \multicolumn{2}{c}{InternVL3-8B} \\
		\cmidrule(lr){2-3} \cmidrule(lr){4-5}
		& VSI-bench & MindCube & VSI-bench & MindCube \\
		\midrule
		Naive LoRA fine-tuning                    & 29.3 & 41.3 & 36.7 & 45.2 \\
		2D regularization \cite{kang2026linear}               & 28.7 & 39.8 & 36.5 & 44.7 \\
		Text-domain cog map \cite{wang2025mindcube}        & 28.8 & 38.5 & 36.0 & 43.1\\
		\textbf{Ours}      & \textbf{\underline{35.6}} & \textbf{\underline{48.3}} & \textbf{\underline{42.4}} & \textbf{\underline{51.7}} \\
		\bottomrule
	\end{tabular}
	\vspace{0.05in}
	\caption{Overall accuracy on spatial tasks}
	\label{tab:overall}
	\vspace{-0.25in}
\end{table}

\vspace{-0.05in}
\subsection{Experiment Setup}
\vspace{-0.05in}
\label{subsec:setup}

To experimentally validate the efficacy of our proposed VLM training intervention, we utilize our proposed loss to fine-tune pretrained VLMs, namely Qwen2.5-VL-7B \cite{qwen2.5-VL} and InternVL3-8B \cite{zhu2025internvl3}, using the SynSpat3D dataset introduced in Section \ref{sec:diagnose}, and evaluate zero-shot transfer to the following real-world spatial reasoning benchmarks:
\vspace{-0.1in}
\begin{itemize}[leftmargin=0.1in]
	\item \textbf{VSI-Bench \cite{yang2025thinking}}: a comprehensive benchmark for models' capabilities in fine-grained 3D spatial reasoning, relational judgments, and absolute distance estimation from egocentric video sequences. The evaluated tasks include: object counting (\textit{obj-count}), object and room size estimation (\textit{obj-size}, \textit{room-size}), absolute and relative distance estimation (\textit{abs-dist}, \textit{rel-dist}), relative direction across varying difficulties (\textit{rel-d}), appearance order (\textit{appear-ord}), and route planning (\textit{route}).
	\item \textbf{MindCube \cite{wang2025mindcube}}: a benchmark designed to measure how well models can construct internal cognitive maps and perform spatial mental modeling given only restricted and partial 2D views.
\end{itemize}
\vspace{-0.05in}

We train LoRA (rank-16) on the attention matrices for 500 steps. Trainable parameters are restricted strictly to the middle ``spatial-aware'' layers where the Dirichlet loss is applied, leaving the early visual binding and late linguistic reasoning layers untouched. We empirically identify these middle layers by selecting those with the lowest pre-intervention Dirichlet energy ratio, as established in our diagnostic probing (Figure \ref{fig:dirichlet_energy_layers}). We sweep $\lambda_{\rm dir}$ to observe its effect, reporting the optimal settings in our main results and exploring the trade-off in Section \ref{subsec:ablation}. All results are averaged over $n=4$ random seeds. We compare our approach with the following baselines: 

\vspace{-0.1in}
\begin{itemize}[leftmargin=0.1in]
\item \textbf{Naive LoRA fine-tuning}, which relies exclusively on the standard language modeling loss without any topological regularization.
\item \textbf{2D spatial regularization} \cite{kang2026linear}, which supervises the intermediate layers' spatial identifiers based on 2D pixel-space distances rather than true 3D geometry.
\item \textbf{Text-domain cognitive map prediction} \cite{wang2025mindcube}, which explicitly fine-tunes the model to generate the scene's spatial coordinates directly in the textual output space.
\end{itemize}
\vspace{-0.05in}

We do not include comparisons against more advanced baselines, such as those incorporating external spatial modalities \cite{wang2025mindcube,wang2026mosaicthinker,cheng2024spatialrgpt} or reinforcement learning \cite{batra2025spatialthinker,liu2025spatial,wu2025reinforcing}, because our latent regularization is fundamentally orthogonal to and naturally complement these approaches.

\begin{table*}[h]
	\centering
	\vspace{-0.05in}
	\resizebox{\linewidth}{!}{
		\begin{tabular}{l *{11}{c}}
			\toprule
			\textbf{Task accuracy (\%)} & 
			\makecell{\textbf{over-} \\ \textbf{all}} & 
			\makecell{\textbf{obj-} \\ \textbf{count}} & 
			\makecell{\textbf{obj-} \\ \textbf{size}} & 
			\makecell{\textbf{room-} \\ \textbf{size}} & 
			\makecell{\textbf{abs-} \\ \textbf{dist}} & 
			\makecell{\textbf{rel-d-} \\ \textbf{easy}} & 
			\makecell{\textbf{rel-d-} \\ \textbf{medium}} & 
			\makecell{\textbf{rel-d-} \\ \textbf{hard}} & 
			\makecell{\textbf{rel-} \\ \textbf{dist}} & 
			\makecell{\textbf{appear-} \\ \textbf{ord}} & 
			\textbf{route} \\
			\midrule
			
			
			Naive LoRA fine-tuning & 
			29.3 & 
			7.4 & 
			26.3 & 
			29.3 & 
			19.5 & 
			47.9 & 
			37.1 & 
			24.5 & 
			30.4 & 
			37.3 & 
			33.6\\
			\addlinespace
			
			\makecell[l]{2D regularization \cite{kang2026linear}} & 
			\valdown{28.7}{0.6} & 
			\valup{7.5}{0.1} & 
			\valdown{26.0}{0.3} & 
			\valdown{25.1}{4.2} & 
			\valdown{19.1}{0.4} & 
			\valdown{46.4}{1.5} & 
			\valdown{37.0}{0.1} & 
			\valdown{24.0}{0.5} & 
			\valup{31.2}{0.8} & 
			\valup{37.6}{0.4} & 
			\valdown{32.9}{0.8} \\
			\addlinespace
			
			\makecell[l]{Text-domain \\cog map \cite{wang2025mindcube}} & 
			\valdown{28.8}{0.5} & 
			\valdown{6.7}{0.7} & 
			\valup{26.6}{0.3} & 
			\valdown{27.4}{1.8} & 
			\valdown{18.7}{0.8} & 
			\valdown{47.4}{0.5} & 
			\valup{37.3}{0.2} & 
			\valdown{23.4}{1.1} & 
			\valup{30.8}{0.4} & 
			\valup{38.1}{0.8} & 
			\valdown{31.2}{2.4} \\
			\addlinespace
			
			
			\makecell[l]{\textbf{Ours}} & 
			\valup{35.6}{6.3} & 
			\valdown{7.2}{0.2} & 
			\valup{31.3}{5.0} & 
			\valup{33.4}{4.1} & 
			\valup{28.9}{9.4} & 
			\valup{60.0}{12.1} & 
			\valup{45.5}{8.4} & 
			\valup{37.2}{6.8} & 
			\valup{35.8}{5.4} & 
			\valup{41.5}{4.2} & 
			\valup{37.7}{4.1} \\
			
			\bottomrule
		\end{tabular}
	}
	\vspace{-0.05in}
	\caption{Per-task accuracy on VSI-Bench using Qwen2.5-VL-7B. Colored values indicate absolute percentage changes relative to the case of LM loss only.}
	\label{tab:qwen_vsi}
\end{table*}

\vspace{-0.15in}
\subsection{Main Results and Sample Complexity}
\vspace{-0.05in}
\label{subsec:results}

\begin{table*}[h]
	\small
	\centering
	\vspace{-0.05in}
	\label{tab:intern_vsi}
	\resizebox{\linewidth}{!}{
		\begin{tabular}{l *{11}{c}}
			\toprule
			\textbf{Task accuracy (\%)} & 
			\makecell{\textbf{over-} \\ \textbf{all}}& 
			\makecell{\textbf{obj-} \\ \textbf{count}} & 
			\makecell{\textbf{obj-} \\ \textbf{size}} & 
			\makecell{\textbf{room-} \\ \textbf{size}} & 
			\makecell{\textbf{abs-} \\ \textbf{dist}} & 
			\makecell{\textbf{rel-d-} \\ \textbf{easy}} & 
			\makecell{\textbf{rel-d-} \\ \textbf{medium}} & 
			\makecell{\textbf{rel-d-} \\ \textbf{hard}} & 
			\makecell{\textbf{rel-} \\ \textbf{dist}} & 
			\makecell{\textbf{appear-} \\ \textbf{ord}} & 
			\textbf{route} \\
			\midrule
			
			
			Naive LoRA fine-tuning & 
			36.7 & 
			10.9 & 
			28.4 & 
			43.1 & 
			80.8 & 
			49.8 & 
			33.9 & 
			26.1 & 
			26.5 & 
			34.0 & 
			33.6\\
			\addlinespace
			
			\makecell[l]{2D regularization} & 
			\valdown{36.5}{0.2} & 
			10.9 & 
			\valup{28.8}{0.4} & 
			\valup{43.5}{0.2} & 
			\valup{81.3}{0.5} & 
			\valdown{47.4}{2.4} & 
			\valdown{33.3}{0.6} & 
			\valdown{26.0}{0.1} & 
			\valup{26.8}{0.3} & 
			34.0 & 
			\valdown{33.5}{0.1} \\
			\addlinespace
			
			\makecell[l]{text-domain \\cog-map pred} & 
			\valdown{36.0}{0.7} & 
			\valdown{9.2}{1.7} & 
			\valup{28.5}{0.1} & 
			\valdown{40.6}{2.5} & 
			\valdown{78.9}{2.9} & 
			49.8 & 
			\valdown{33.6}{0.3} & 
			\valdown{25.7}{0.4} & 
			\valup{26.6}{0.1} & 
			\valup{34.3}{0.3} & 
			\valdown{32.3}{1.3} \\
			\addlinespace
			
			
			\makecell[l]{\textbf{Ours}} & 
			\valup{42.4}{5.7} & 
			\valup{11.8}{0.9} & 
			\valdown{28.3}{0.1} & 
			\valup{49.3}{6.2} & 
			\valup{89.1}{8.3} & 
			\valup{60.7}{10.9} & 
			\valup{43.4}{9.5} & 
			\valup{29.4}{7.3} & 
			\valdown{33.4}{0.2} &
			\valup{38.4}{4.4} & 
			\valup{36.8}{3.2} \\
			
			\bottomrule
		\end{tabular}
	}
	\vspace{-0.05in}
	\caption{Per-task accuracy on VSI-Bench using InternVL3-8B. Colored values indicate absolute percentage changes relative to the case of LM loss only.}
	\vspace{-0.2in}
	\label{tab:intern_vsi}
\end{table*}

As shown in Tables~\ref{tab:overall}, \ref{tab:qwen_vsi}, and \ref{tab:intern_vsi}, our proposed Dirichlet-energy regularization consistently outperforms all the baselines. The 2D regularization method fails to yield improvements because it constrains the model to rely on 2D pixel-space distances rather than true 3D geometry. Similarly, the text-domain cognitive map prediction falls short because it forces the language model to over-index on text semantics rather than faithfully grounding the physical visual features. Analyzing the per-task accuracy reveals that our method significantly excels in categories requiring a robust understanding of scene topology, such as relative direction (\textit{rel-d-easy}, \textit{rel-d-medium}, \textit{rel-d-hard}) and absolute distance (\textit{abs-dist}). Conversely, for tasks such as \textit{route} and \textit{appear-ord}, the gains of our method become smaller, as these tasks demand reasoning over temporal dynamics and sequential planning, which fall outside the scope of our purely spatial topological regularization.

Crucially, fine-tuning requires just 500 steps on simple synthetic data to generalize to complex, real-world benchmarks. We theoretically ground this extreme data efficiency:

\begin{theorem}[Sample-Complexity Reduction]
	\label{thm:sample_complexity}
	Let a spatial task predict $y = w^{\star \top}h + \xi$, where $w^\star$ depends only on the world-coordinate axes (spanning the top 3 PCs of $H^*$). For the empirical risk minimizer $\hat{w}_N$ over $N$ samples, with probability $\ge 1-\delta$, the excess risk is bounded by:
	\begin{equation}
		\mathbb{E}\big[(\hat{w}_N^\top h - w^{\star \top}h)^2\big] \;\le\; \frac{3C \sigma^2_\xi}{N} \log(1/\delta),
	\end{equation}
	improving upon the standard $O(d/N)$ bound of an unstructured baseline.
\end{theorem}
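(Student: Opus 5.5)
The idea is to turn the statement into ordinary ridge-free least squares in a fixed low-dimensional subspace, and then invoke a standard fixed-dimension excess-risk bound. By Theorem~\ref{thm:eigenmap}(b) and Theorem~\ref{thm:limit}, the mean-centered Dirichlet-minimizing representation $H^*_c$ has its top three principal components aligned with $z^{(2)},z^{(3)},z^{(4)}$. In the continuous limit these are smooth monotone reparameterizations of the Cartesian axes. Let $Q\in\mathbb{R}^{d\times 3}$ be the orthonormal matrix of the corresponding right singular directions. The hypothesis that $w^\star$ depends only on the world-coordinate axes then reads $w^\star = QQ^\top w^\star$. The learner regresses on the projected features $g = Q^\top h\in\mathbb{R}^3$, and $\hat w_N = Q\hat\beta_N$ with $\hat\beta_N$ the ERM over $\beta\in\mathbb{R}^3$. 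With this, $\hat w_N^\top h - w^{\star\top}h = (\hat\beta_N-\beta^\star)^\top g$. The problem becomes a well-specified linear regression in effective dimension $3$ rather than $d$.

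The key steps, in order, are as follows.
\begin{enumerate}
\item Write the excess risk as $(\hat\beta_N-\beta^\star)^\top \Sigma_g(\hat\beta_N-\beta^\star)$, where $\Sigma_g = \mathbb{E}[gg^\top] = Q^\top \mathbb{E}[hh^\top]Q$. The singular value constraints $\sigma_k(H^*)\ge\epsilon_k$ of Theorem~\ref{thm:eigenmap} keep $\Sigma_g$ nondegenerate, with condition number controlled by $\epsilon_1/\epsilon_3$.
\item Use the closed form $\hat\beta_N-\beta^\star = (G^\top G)^{-1}G^\top\xi$, where $G\in\mathbb{R}^{N\times 3}$ is the design matrix.
\item Bound the quadratic form $\xi^\top G(G^\top G)^{-1}\Sigma_g(G^\top G)^{-1}G^\top\xi$. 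Conditioned on the design this is a sub-Gaussian chaos in $\xi$ with a trace-$3$ projection structure. Hanson--Wright then gives, with probability $\ge 1-\delta/2$, a bound of order $\sigma_\xi^2\,(3+\log(1/\delta))\,\|\Sigma_g^{1/2}(G^\top G)^{-1}\Sigma_g^{1/2}\|$.
\item Show $\|\Sigma_g^{1/2}(G^\top G)^{-1}\Sigma_g^{1/2}\|\le C/N$ with probability $\ge1-\delta/2$. This follows from matrix concentration for the empirical covariance of bounded $3$-dimensional features, where boundedness comes from the compact domain $\Omega$ in Theorem~\ref{thm:limit}. It needs $N\gtrsim \log(1/\delta)$, and the constant $C$ absorbs the condition number and boundedness constants.
\end{enumerate}
Combining steps 3 and 4 by a union bound gives a bound of the form $3C\sigma_\xi^2\log(1/\delta)/N$, after absorbing the additive $3$ into $C$ for $\delta\le e^{-1}$. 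The same argument run without the projection $Q$ has trace $d$ in step 3, which yields the contrasted $O(d/N)$ rate.

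The main obstacle is step 4 together with the justification of the projection. Theorem~\ref{thm:limit} gives only asymptotic, high-probability alignment of the eigenvectors with the coordinate functions, and up to a cosine reparameterization. So $w^\star\in\mathrm{span}(Q)$ holds only approximately, and strictly speaking one gets an additional approximation (bias) term. I would first handle this by stating the bound under the exact-alignment idealization, reading the theorem's hypothesis as $w^\star$ exactly spanning the top three principal components. I would then remark that the residual misalignment contributes a term vanishing as $m\to\infty$, via the uniform convergence in Theorem~\ref{thm:limit} and a Davis--Kahan perturbation bound on $Q$.
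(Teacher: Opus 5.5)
Your proposal is correct and follows essentially the same route as the paper: project onto the three right-singular directions, run well-specified (zero-bias) OLS in $\mathbb{R}^3$ so the variance term has trace $3$ instead of $d$, and upgrade to a $(3+\log(1/\delta))/N$ high-probability bound absorbed into the stated constant. Like the paper, you rely on the exact-alignment idealization and assume the projection is known. The one difference is the high-probability step: the paper cites a local Rademacher/Gaussian-complexity bound (Wainwright, Thm.~13.4) after an asymptotic $(\Psi^\top\Psi/N)^{-1}\to\Sigma_\Psi^{-1}$ argument, whereas your Hanson--Wright and matrix-concentration steps make that upgrade explicit, including the $N\gtrsim\log(1/\delta)$ requirement and the condition-number dependence hidden in $C$.
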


\vspace{-0.05in}
Standard VLM residual streams are high-dimensional ($d \approx 4000$), typically requiring massive datasets to prevent overfitting. Theorem~\ref{thm:sample_complexity} proves that the Dirichlet penalty collapses task-irrelevant variance, shrinking the effective dimension to exactly 3. This reduces sample complexity by a factor of $d/3$, enabling the LM head to learn highly generalizable spatial readouts from minimal data.

\vspace{-0.05in}
\subsection{Ablation Studies}
\vspace{-0.05in}
\label{subsec:ablation}

\begin{wraptable}{R}{0.45\textwidth}
	\centering
	\vspace{-0.2in}
	\vspace{-0.1in}
	\fontsize{8}{9}\selectfont
	\begin{tabular}{lcc}
		\toprule
		$\lambda$ & Qwen2.5-VL-7B & InternVL3-8B \\
		\midrule
		0   & 29.3 & 36.7 \\
		0.3 & 29.5 & 39.1 \\
		1   & 32.3 & \textbf{42.4} \\
		3   & \textbf{35.6} & 37.2 \\
		9   & 28.6 & 35.8 \\
		\bottomrule
	\end{tabular}
	\caption{Ablation study on the Dirichlet regularization weight ($\lambda$) on VSI-Bench}
	\vspace{-0.15in}
	\label{tab:lambda_ablation}
\end{wraptable}

To evaluate the sensitivity of our regularization method, we vary the Dirichlet penalty weight $\lambda$ during the 500-step fine-tuning. As shown in Table~\ref{tab:lambda_ablation}, zero-shot performance on VSI-Bench peaks at moderate values ($\lambda=3$ for Qwen, $\lambda=1$ for InternVL). However, excessive regularization ($\lambda=9$) drastically degrades accuracy, even dropping below the unregularized baseline. 

We formalize this exact trade-off mathematically. While a small penalty cleanses the spatial subspace, an overly aggressive $\lambda$ distorts the residual stream, harming VLM's language modeling capabilities:

\begin{theorem}[Training-Time Risk Decomposition]
	\label{thm:risk}
	Let $\mathcal{L}_\lambda(\theta) = \mathcal{L}_{\rm LM}(\theta) + \lambda \mathcal{R}_X(H_\theta)$, where $\mathcal{R}_X$ is the Dirichlet ratio. Let $\theta^*$ be the population minimizer of the standard LM loss, and $\theta^*_\lambda$ the minimizer of the regularized loss. For a constant $\beta > 0$ correlating the spatial risk surface with the Dirichlet energy, the population spatial-task risk $R_{\rm spatial}$ satisfies:
	\begin{equation}
		R_{\rm spatial}(\theta^*_\lambda) \;\le\; R_{\rm spatial}(\theta^*) \;-\; \lambda \cdot \beta \, (\mathcal{R}_X(H_{\theta^*}) - \mathcal{R}_X^*) \;+\; O(\lambda^2).
	\end{equation}
\end{theorem}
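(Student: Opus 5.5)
We argue by first-order perturbation of the regularized minimizer in $\lambda$, treating $\theta^*$ as a nondegenerate minimizer of $\mathcal{L}_{\rm LM}$. We assume $\mathcal{L}_{\rm LM}$, $\mathcal{R}_X$ and $R_{\rm spatial}$ are $C^2$ near $\theta^*$, and that the Hessian $A \triangleq \nabla^2 \mathcal{L}_{\rm LM}(\theta^*)$ is positive definite on the directions that move the representation. Since $\nabla\mathcal{L}_{\rm LM}(\theta^*)=0$, the stationarity condition $\nabla\mathcal{L}_{\rm LM}(\theta^*_\lambda)+\lambda\nabla\mathcal{R}_X(H_{\theta^*_\lambda})=0$ and the implicit function theorem give a smooth branch $\theta^*_\lambda$ with
\begin{equation}
\theta^*_\lambda \;=\; \theta^* \;-\; \lambda\, A^{-1} g \;+\; O(\lambda^2),
\qquad g \triangleq \nabla_\theta \mathcal{R}_X(H_{\theta^*}).
\end{equation}

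Next we Taylor-expand the spatial risk along this branch:
\begin{equation}
R_{\rm spatial}(\theta^*_\lambda) \;=\; R_{\rm spatial}(\theta^*) \;-\; \lambda\, \nabla R_{\rm spatial}(\theta^*)^\top A^{-1} g \;+\; O(\lambda^2).
\end{equation}
Everything hinges on lower-bounding the first-order term. This is where the hypothesis that $\beta$ ``correlates the spatial risk surface with the Dirichlet energy'' must be made precise. We formalize it as the alignment condition
\begin{equation}
\nabla R_{\rm spatial}(\theta^*)^\top A^{-1} g \;\ge\; \beta\,\bigl(\mathcal{R}_X(H_{\theta^*})-\mathcal{R}_X^*\bigr),
\end{equation}
where $\mathcal{R}_X^*$ is the minimal Dirichlet ratio. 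By Theorem~\ref{thm:eigenmap}, $\mathcal{R}_X^*$ is attained at the Laplacian-eigenmap representation. Substituting this condition gives the claimed inequality directly.

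To make the condition less ad hoc, we would derive it from two more primitive assumptions.

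First, $R_{\rm spatial}$ is locally a smooth increasing function of the Dirichlet gap, say $R_{\rm spatial}\approx \psi(\mathcal{R}_X)$ near $\theta^*$ with $\psi'\ge c>0$. This is motivated by Theorems~\ref{thm:realizability} and~\ref{thm:sample_complexity}: lower Dirichlet ratio means the representation is closer to a linearly decodable coordinate map, so the readout risk is smaller. It gives $\nabla R_{\rm spatial}\approx \psi' g$, so the first-order term becomes $\psi'\, g^\top A^{-1} g \ge c\,\lambda_{\max}(A)^{-1}\|g\|^2$.

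Second, $\mathcal{R}_X$ satisfies a Polyak--Łojasiewicz-type inequality $\|g\|^2\ge \mu(\mathcal{R}_X(H_{\theta^*})-\mathcal{R}_X^*)$ in a neighborhood of the eigenmap manifold. This holds because $\mathcal{E}_X(H)=\mathrm{tr}(H^\top L H)$ is quadratic and the constrained minimizer is isolated up to rotations (Theorem~\ref{thm:eigenmap}), so one obtains it from the spectral gap of $L$ together with a Jacobian nondegeneracy of $\theta\mapsto H_\theta$.

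Combining the two yields the bound with $\beta = c\mu/\lambda_{\max}(A)$. The remainder is $O(\lambda^2)$, controlled by the Hessian of $R_{\rm spatial}$, the derivative of $A^{-1}g$, and the second-order term of the implicit-function expansion.

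The main obstacle is justifying the PL and monotonicity assumptions. The Dirichlet \emph{ratio} is scale-invariant but not globally smooth where the permutation-null denominator degenerates. The map $\theta\mapsto H_\theta$ may also have a rank-deficient Jacobian, so the PL constant could vanish. We would first try to handle this by restricting to a compact neighborhood of $\theta^*$ where the denominator is bounded below and by assuming the Jacobian has full rank onto the spatial subspace $\mathrm{range}(P_\perp)$. Failing that, we would simply state the alignment condition above as the defining property of $\beta$. In either case the theorem is a local, first-order statement valid only for sufficiently small $\lambda$. This is consistent with the observed degradation at $\lambda=9$ in Table~\ref{tab:lambda_ablation}, where the $O(\lambda^2)$ term dominates.
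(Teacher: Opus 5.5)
Your proposal follows the paper's proof essentially step for step: the implicit function theorem gives $\theta^*_\lambda-\theta^*=-\lambda\,A^{-1}\nabla\mathcal{R}_X(\theta^*)+O(\lambda^2)$, Taylor-expanding $R_{\rm spatial}$ isolates the first-order term $-\lambda\,\nabla R_{\rm spatial}(\theta^*)^\top A^{-1}\nabla\mathcal{R}_X(\theta^*)$, and $\beta$ is then simply \emph{defined} by normalizing that inner product by the gap $\mathcal{R}_X(H_{\theta^*})-\mathcal{R}_X^*$. The differences are minor: the paper takes equality where you take an inequality, it assumes $\nabla^2\mathcal{L}_{\rm LM}(\theta^*)\succ 0$ outright (which your use of $A^{-1}$ also requires), and it argues $\beta>0$ only qualitatively.

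Your optional monotonicity-plus-PL derivation goes beyond the paper, but it should not invoke Theorem~\ref{thm:eigenmap}. Up to a constant, the scale-invariant ratio is the Rayleigh quotient $\mathrm{tr}(H_c^\top L H_c)/\mathrm{tr}(H_c^\top H_c)$. It is minimized by collapsing every centered column onto the Fiedler eigenspace, not by the constrained multi-dimensional eigenmap, and it has critical points with positive gap (columns lying in higher eigenspaces). So no global PL inequality holds, and a local one does not reach $\theta^*$; your stated fallback, which coincides with the paper's definition of $\beta$, is what actually carries the proof.
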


Because decreasing Dirichlet energy inherently improves spatial realizability (Theorem~\ref{thm:realizability}), their gradients align ($\beta > 0$). Consequently, the linear term strictly guarantees downstream spatial VQA improvements for moderate $\lambda$. Conversely, the $O(\lambda^2)$ term bounds the over-regularization risk, perfectly capturing the empirical performance inversion observed at $\lambda=9$.

\vspace{-0.1in}
\section{Conclusion}
\vspace{-0.1in}
\label{sec:discussion}

In this paper, we isolate the latent 3D spatial subspace within VLMs and introduce a mathematically grounded Dirichlet-energy regularizer to directly shape it. By demonstrating that a minimal, topology-preserving fine-tuning stage significantly enhances zero-shot spatial reasoning performance, we provide a principled methodology for developing VLMs with robust, intrinsic cognitive maps.

\setcitestyle{numbers}
\bibliographystyle{unsrt}
\bibliography{main}

\newpage
\appendix

\section{Data generation details}
\label{app:data_gen}

This appendix expands on the two-step synthetic-scene pipeline summarized in \S\ref{sec:diagnose} and used both for probing (\S\ref{sec:diagnose}--\S\ref{sec:extract}) and for Dirichlet-regularized fine-tuning (\S\ref{sec:enforce}). 

\paragraph{Step 1: 3D scene generation.} Each scene places between $3$ and $8$ rigid objects (corpus default; the looser ``$6$--$12$'' range cited in the main text covers the broader set of pretraining variants) on a flat floor inside a bounded working volume of $[-4,\,4]\!\times\![-4,\,4]$ in world units, with $z$ pinned to the object's radius so all objects rest on the ground plane. The independent latent factors per object are: \emph{shape} drawn uniformly from $\{\textit{cube},\textit{sphere},\textit{cylinder}\}$, \emph{color} drawn uniformly from an $8$-way palette $\{\textit{red},\textit{green},\textit{blue},\textit{yellow},\textit{cyan},\textit{magenta},\textit{orange},\textit{purple}\}$, \emph{size} drawn from $\{0.4,\,0.6,\,0.8\}$ world units, and \emph{position} sampled by rejection until pairwise edge-to-edge separation exceeds $0.2$ world units (no overlapping objects, no occluded clusters). Color and shape are sampled independently of position, so \emph{at the population level the visual identity of an object carries no information about where it is}; this independence is what makes the variance gap reported in Fig.~\ref{fig:attention_dominance} a property of the model rather than of the data. Each scene receives a unique hash-based ID (\texttt{s\_<hash>}) and is written to disk as \texttt{scene.json} containing the full per-object table $\{\textit{name}, \textit{shape}, \textit{color}, \textit{size}, \textit{world\_position}\}$.

\paragraph{Step 2: egocentric frame rendering.} For each canonical scene we render $T{=}16$ frames at $448{\times}448$ resolution under one of three camera regimes. (i) \emph{Orbit} ($1$-DoF planar arc): the eye traces an $180^\circ$ arc at fixed altitude (sampled from $\{3.5,\,4.5\}$) and radius (sampled from $\{8.0,\,9.0\}$), with the look-at point fixed at $z{=}0.5$ in the scene center. (ii) \emph{Free6DoF} (the default for all probing/training runs reported in the body): both the eye and the look-at point drift in $\mathbb{R}^3$, the camera rolls, and the orbit radius/altitude jitter independently of arc angle (\texttt{eye\_jitter}=1.2, \texttt{radius\_jitter}=1.0, \texttt{altitude\_jitter}=0.8, \texttt{target\_jitter}=1.5; smooth-noise basis with $3$ frequency modes). (iii) \emph{Person-walk} (used as a generalization control in §\ref{sec:diagnose} and Appendix~\ref{app:diagnose}): instead of orbiting around the working volume, the camera is placed \emph{inside} it at human eye-height ($z\!=\!1.5$) and walks forward along a smooth-noise path with stochastic per-frame yaw and pitch. The dynamics are: forward step length sampled around $\mathtt{speed\_mean}=0.5$ world units with $\mathtt{speed\_amp}=0.25$ smooth-noise modulation; per-frame yaw drift up to $\pm 25^\circ$; per-frame pitch up to $\pm 12^\circ$; collision avoidance with $\mathtt{object\_margin}=0.6$ around every object and $\mathtt{bounds\_margin}=0.3$ from the working-volume edges; up to $40$ retries on the whole trajectory if a step cannot be repaired without violating the margins. This regime produces partial-view egocentric trajectories where individual frames frequently see zero or one object, so the corresponding spatial reasoning task requires \emph{cross-frame} integration (the body's ``cognitive-map'' setting). Across all three regimes, each frame is post-validated to ensure (orbit / free6dof) at least one object remains visible or (person-walk) the trajectory satisfies its margin constraints; failed frames trigger trajectory repair. The horizontal field of view is $60^\circ$ and the background is uniform mid-gray (RGB $155$). To dissociate the scene layout from any single trajectory, every scene is re-rendered under $4$ independent trajectories (indexed \texttt{traj\_idx} $\in\{0,1,2,3\}$); we use \texttt{traj\_idx}=$1$--$2$ for probing and Dirichlet training.

\paragraph{Spatial QA generation.} Per scene, $5$ questions are programmatically synthesized from the ground-truth $\{\textit{name}, \textit{world\_position}\}$ table. The two question kinds used for Dirichlet training are: (a) \emph{Relative-position binary} --- ``Is the \textsc{[color] [shape]} \textsc{[in front of/behind/to the left of/to the right of]} the \textsc{[color] [shape]}?'' answered against the camera-aligned $xy$-plane of the first frame; (b) \emph{Distance-order ternary} --- ``Which is closer to the \textsc{[reference]}, the \textsc{[A]} or the \textsc{[B]}?'' Distractor objects are guaranteed present in the rendered frames but not always in the question. Each question stores the in-question object names alongside their $3$D coordinates so the same JSONL can drive both standard LM cross-entropy and the Dirichlet-energy regularizer (which reads the coordinates directly).

\paragraph{Probing dataset (Free6DoF $1{,}000$ scenes).} The headline probing experiments in \S\ref{sec:diagnose} draw from a $1{,}000$-scene Free6DoF corpus, each with $5$ questions $\to 5{,}000$ video-QA pairs, plus extended controls (\emph{Circle8} and \emph{Grid3$\times$3} layouts, $\sim$$5{,}000$ scenes total) used for the topology-vs-shortcut robustness checks reported in Appendix~\ref{app:diagnose}. The $400$-scene subsets used for layer-wise Dirichlet-energy curves (Fig.~\ref{fig:dirichlet_energy_layers}) and the residualized variant (Fig.~\ref{fig:disentanglement_comparison}) are the first $400$ scenes by scene-ID hash so the same split is reused across all backbones.

\paragraph{Counterfactual perturbation set (\S\ref{sec:diagnose}).} For the color/position swap experiment we draw $50$ base scenes (a random subset of the probing corpus) and produce three variants per scene by re-rendering with the same trajectory but: (i) \emph{original}; (ii) \emph{color-swap} --- the $k$ assigned colors of the $k$ objects are permuted by a random derangement, leaving every $3$D position untouched; (iii) \emph{position-swap} --- the $k$ positions are permuted, leaving every color untouched. We then ask the same $\sim$$9$ spatial questions per scene, yielding $1{,}290$ QA pairs per seed used in the flip-rate analysis.

\paragraph{Train/val splits for Dirichlet fine-tuning.} The fine-tuning corpus at \texttt{data/dirichlet\_train/} contains $2{,}988$ training and $332$ validation video-QA pairs (\texttt{train.jsonl}/\texttt{val.jsonl}), each row carrying \texttt{\{scene\_id, image\_path, question, answer, kind, object\_names, object\_coords\}}. Splits are scene-disjoint (no scene appears in both files); within a scene, all questions go to the same split.


\paragraph{More examples of generated data}

See Figure \ref{fig:data_example_1}, \ref{fig:data_example_2}, \ref{fig:data_example_3}, \ref{fig:data_example_4}

\begin{figure}[!h]
	\centering
	\includegraphics[width=1\linewidth]{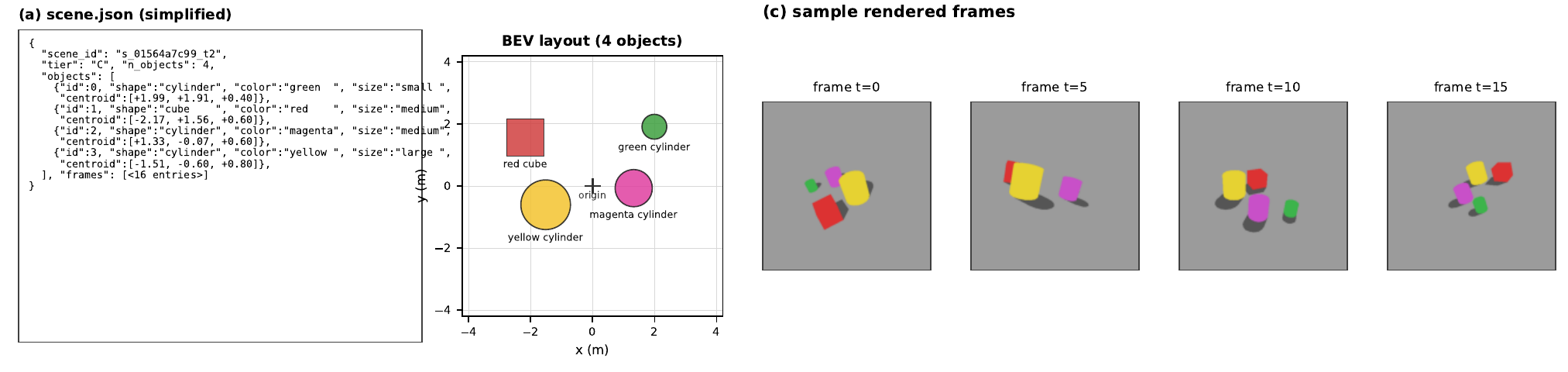}
	\vspace{-0.2in}
	\caption{Generated data, example 1}
	\label{fig:data_example_1}
\end{figure}

\begin{figure}[!h]
	\centering
	\includegraphics[width=1\linewidth]{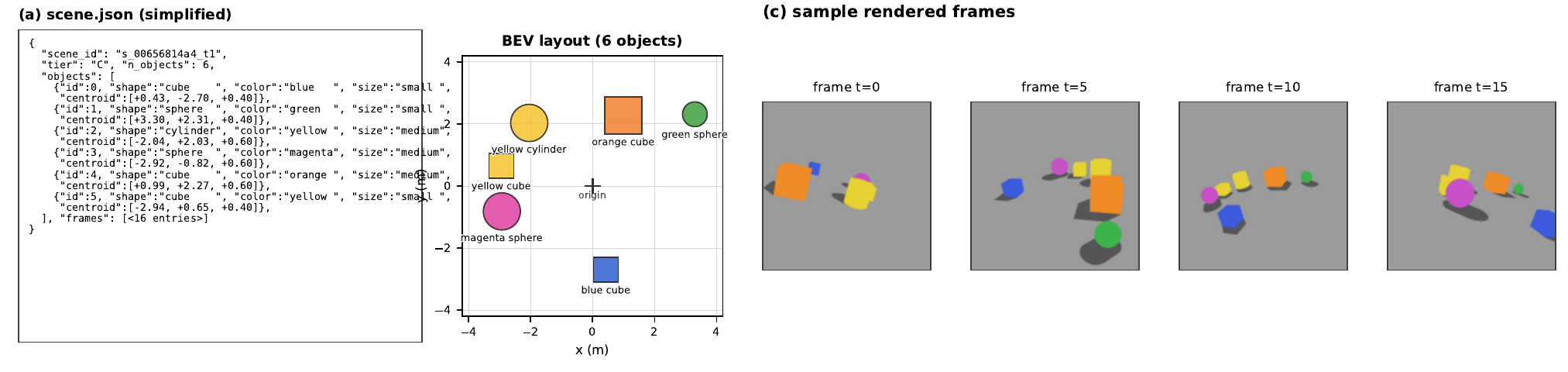}
	\vspace{-0.2in}
	\caption{Generated data, example 2}
	\label{fig:data_example_2}
\end{figure}

\begin{figure}[!h]
	\centering
	\includegraphics[width=1\linewidth]{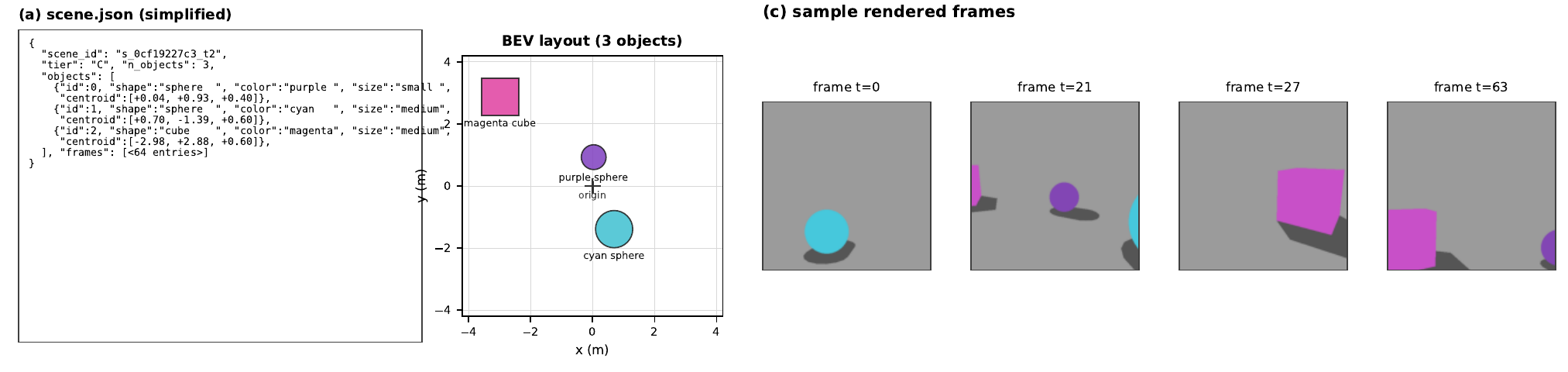}
	\vspace{-0.2in}
	\caption{Generated data, example 3}
	\label{fig:data_example_3}
\end{figure}

\begin{figure}[!h]
	\centering
	\includegraphics[width=1\linewidth]{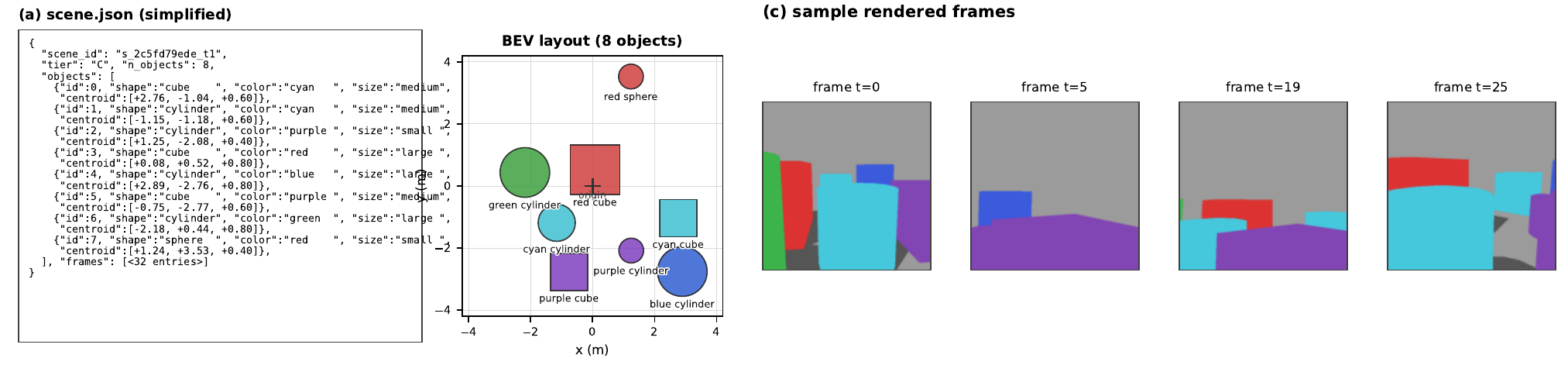}
	\vspace{-0.2in}
	\caption{Generated data, example 4}
	\label{fig:data_example_4}
\end{figure}

\section{Diagnostic experiments}
\label{app:diagnose}

This appendix expands on the diagnostic experiments referenced from \S\ref{sec:diagnose}

\paragraph{Linear probing setup and full results.} For each (model, layer) pair we extract residual-stream activations $h^{(s,o)}_\ell \in \mathbb{R}^d$ at every object-token position over the $1{,}000$-scene Free6DoF probing corpus (Appendix~\ref{app:data_gen}), pooled across all $T=16$ frames per scene with the same mask-driven, coverage-weighted scheme used in \S\ref{sec:extract}. We then fit four independent probes via $5$-fold cross-validation (folds at the scene level, so no scene appears in both train and test): an $8$-way $k$-nearest-neighbor classifier ($k=5$) for object color, a $4$-way $k$-NN classifier for object shape, ridge regressions ($\alpha = 1.0$) on the physical $x$- and $z$-coordinates, and an RSA computation comparing the activation cosine-similarity matrix to the Gaussian-kernel similarity on 3D coordinates (\S\ref{sec:diagnose}, Eqs.~\ref{eq:rsa_mats}--\ref{eq:rsa}). Probe metrics for Qwen2.5-VL-7B at the cognitive-map layer L17 (the layer panels~(a)/(b) of Fig.~\ref{fig:attention_dominance} report on):
\begin{center}
	\small
	\begin{tabular}{lccccc}
		
		\toprule
		target              & color (8-way)        & shape (4-way)         & $x$-coord ($R^2$)   & $z$-coord ($R^2$)   & 3D-distance RSA ($\rho$) \\
		\midrule
		results & $0.51 \pm 0.02$      & $1.00 \pm 0.00$       & $-0.09 \pm 0.01$    & $+0.28 \pm 0.02$    & $+0.01 \pm 0.01$ \\
		chance              & $0.125$              & $0.25$                & $0$ (predict mean)  & $0$ (predict mean)  & $0$ \\
		\bottomrule
	\end{tabular}
\end{center}
The $x$-coordinate $R^2$ is \emph{negative} --- the probe does strictly worse than predicting the population mean --- while color is decoded at $\sim\!4\times$ chance and shape essentially perfectly. The $z$-coordinate is partially recoverable ($R^2 \approx 0.28$), consistent with monocular depth cues being available; we discuss this depth-shortcut concern as a residual confound in \S\ref{sec:diagnose}. Identical patterns reproduce on InternVL3-8B at L18 with the same direction (color/shape decoded near-perfectly, position barely above mean-predictor). Mean$\pm 2\,\mathrm{SE}$ over $n=4$ seeds throughout.


\paragraph{Activation-variance subspace analysis.} Because a linear-probe-accuracy gap could in principle hide a representation that is encoded faithfully but in fewer dimensions, we additionally compare the \emph{activation variance} carried by each feature's optimal low-rank subspace. Concretely, the color subspace is the $7$-dimensional class-mean-difference basis spanning the $8$-color affine hull (i.e., the orthogonal complement of the global mean within the $8$-class centroid subspace); the shape subspace is the $1$-dimensional axis discriminating the $4$ shapes after class-mean centering (rank deficiency from the categorical leaves a $3$-dim residual, but the leading axis carries the bulk of the variance); the position subspace is the $3$-dim basis recovered by ridge regression of 3D coordinates on $H$ and orthogonalized against the color subspace. The fraction of the layer's Frobenius energy $\|H\|_F^2$ captured by each subspace, on Qwen2.5-VL-7B at L17:
\[
v_\mathrm{color\,7d} = 11.86\% \pm 0.24,\quad
v_\mathrm{shape\,1d} = 0.88\% \pm 0.05,\quad
v_\mathrm{pos\,3d} = 0.085\% \pm 0.003.
\]
Color and shape do not just appear in more dimensions; \emph{each} of those dimensions is louder than its position counterpart --- a $\sim 60\times$ gap per dimension and a $\sim 140\times$ aggregate gap between color and position variance (Fig.~\ref{fig:attention_dominance}b). The qualitative pattern replicates on InternVL3-8B at L18 (color $1.54\%$, shape $1.08\%$, position $0.066\%$ --- a $\sim 23\times$ aggregate gap with the matched small-basis estimator).

\paragraph{Counterfactual color/position swap.}
The counterfactual experiment summarized in \S\ref{sec:diagnose} perturbs the input scene rather than the model's activations. We sample $50$ base scenes from the Free6DoF corpus that have $\ge\!4$ distinct (color, shape) identities so the swap permutations are non-trivial. From each base scene we generate three rendered variants:
\begin{itemize}[leftmargin=*, itemsep=2pt, topsep=2pt]
	\item \emph{Original}: the scene rendered as-is.
	\item \emph{Color-swap}: the per-object \emph{colors} are permuted by a random derangement (no object keeps its original color), but every $3$D position, shape, size, and camera trajectory is held fixed; the scene is re-rendered.
	\item \emph{Position-swap}: the per-object \emph{positions} are permuted by a random derangement, leaving every color, shape, size, and camera trajectory fixed; re-rendered.
\end{itemize}
Per scene we generate $\sim\!9$ spatial questions across two kinds: \emph{distance-order} (``Which is closer to the magenta cube: the purple sphere or the cyan cube?'') and \emph{relative-position} (``Is the yellow cylinder to the left of the red cylinder?''). Each question is asked against all three variants of the scene, yielding $1{,}290$ QA pairs per random seed. The QA jsonl and pre-rendered variant frames live at \texttt{data/counterfactuals/}. We measure the per-(scene, question) \emph{answer-flip rate}
\[
\mathrm{flip}_v \;=\; \Pr\!\bigl(\hat{y}_v \neq \hat{y}_{\rm orig}\bigr), \qquad v \in \{\text{color-swap}, \text{position-swap}\},
\]
under greedy decoding at \texttt{bf16}, averaged over $n=4$ seeds. Per-question-kind breakdown on Qwen2.5-VL-7B (lam0 baseline):
\begin{center}
	\small
	\begin{tabular}{lccc}
		\toprule
		question kind & color-swap flip & position-swap flip & expected (position-driven head) \\
		\midrule
		distance-order   & $\mathbf{41.0\%}$  & $48.6\%$  & color-swap $\to 0\%$ \\
		relative-position & $3.5\%$ & $7.3\%$ & both $\to 0\%$ \\
		\bottomrule
	\end{tabular}
\end{center}
On distance-order questions the color-swap flip rate is the \emph{same order of magnitude} as the position-swap flip rate ($41.0\%$ vs.\ $48.6\%$), even though color-swap leaves every object's $3$D position untouched. The expected behavior of a head that grounds ``the magenta cube'' through internal 3D position would be a near-zero color-swap flip rate. On relative-position questions, the purely semantic color-swap alters the model's prediction $3.5\%$ of the time, which is expected to be 0. This result provides behavioral evidence for the representational imbalance identified earlier in \S\ref{sec:diagnose}. 


\section{Causal verification by activation steering}
\label{app:steering}

This appendix expands on the ``Causal verification'' paragraph in \S\ref{sec:extract} (Fig.~\ref{fig:casual_verification}). The motivation: linear probing and the variance-subspace analysis (Appendix~\ref{app:diagnose}) only tell us what is \emph{decodable} from the residual stream; they cannot rule out the possibility that the LM head simply ignores the probe's subspace and reads from a different code. To close that gap we ran an interventional experiment in which we directly inject a controlled vector along the probe's principal axis into the residual stream and measure whether the model's internal 3D readout shifts as predicted.

\paragraph{Setup.}
\begin{center}
	\begin{tabular}{ll}
		\toprule
		backbone                  & Qwen2.5-VL-7B-Instruct \\
		data                      & Tier-C Free6DoF, $5$ scenes $\times$ $3$ objects \\
		steered layer             & $\ell = 12$ (chosen by held-out probe $R^2$) \\
		targeted axis             & world-frame $x$ (per-scene-normalized) \\
		intervention magnitudes   & $\alpha \in \{-0.30, -0.15, +0.15, +0.30\}$ (norm-$x$ units) \\
		trials per (direction, $\alpha$) & $15$ \\
		control direction         & length-matched null-space vector $v_\perp$ \\
		\bottomrule
	\end{tabular}
\end{center}

\paragraph{Probe-axis and control construction.} We fit a Ridge probe at layer $\ell\!=\!12$ that maps the pooled object-token residual-stream activation to per-scene-normalized 3D coordinates: $\hat{x} = W h$ with $W \in \mathbb{R}^{3 \times d}$ and $\ell_2$ regularization $\alpha_{\rm ridge}\!=\!1.0$. The steering direction $v_{\rm axis} \in \mathbb{R}^d$ is the row of $W^\dagger$ corresponding to the world-frame $x$ axis (so injecting $\alpha\!\cdot\!v_{\rm axis}$ shifts the probe's $\hat{x}$ readout by $\alpha$ to first order, while leaving $\hat{y}$ and $\hat{z}$ unchanged). The length-matched control $v_\perp$ is the projection of a random Gaussian vector onto the null space of $W$ (so $W v_\perp = 0$ to numerical precision), rescaled to $\|v_\perp\|\!=\!\|v_{\rm axis}\|$. By construction $v_\perp$ encodes ``a direction in the residual stream of the same magnitude as $v_{\rm axis}$ that the probe cannot read.''

\paragraph{Intervention.} For a chosen object in a chosen scene, we register a forward-pre-hook on layer $12$ that adds $\alpha\!\cdot\!v$ to that object's residual at every visual-token position belonging to that object (mask-driven, same indices used for object-token pooling in \S\ref{sec:extract}); other tokens are untouched. We then re-run the model and re-read the probe's 3D-coordinate prediction $(\hat{x}, \hat{y}, \hat{z})$ for the same object. The diagnostic statistic is $\Delta\hat{x} = \hat{x}_{\rm intervention} - \hat{x}_{\rm baseline}$ (and analogously for $\hat{y}, \hat{z}$).

\paragraph{Targeted shift along $v_{\rm axis}$ vs.~$v_\perp$.} Effect of steering on the probe's $x$-axis readout (mean $\pm$ SEM, $n\!=\!15$ per cell):
\begin{center}
	\begin{tabular}{cccccc}
		\toprule
		$\alpha$ & $v_{\rm axis}$ & $v_\perp$ & ratio $|axis|/|perp|$ & Welch $t$ & $p$ \\
		\midrule
		$-0.30$ & $\mathbf{-0.1020 \pm 0.027}$ & $+0.0201 \pm 0.035$ & $4.9\times$ & $-5.58$ & $6.8\!\times\!10^{-6}$ \\
		$-0.15$ & $\mathbf{-0.0558 \pm 0.030}$ & $+0.0045 \pm 0.031$ & $12.9\times$ & --- & --- \\
		$+0.15$ & $\mathbf{+0.0706 \pm 0.033}$ & $+0.0087 \pm 0.025$ & $8.4\times$ & --- & --- \\
		$+0.30$ & $\mathbf{+0.0908 \pm 0.021}$ & $+0.001 \pm 0.030$ & $\mathbf{82.2\times}$ & $+5.04$ & $3.3\!\times\!10^{-5}$ \\
		\bottomrule
	\end{tabular}
\end{center}
The probe-readout shift along $v_{\rm axis}$ is monotonic in $\alpha$ and aligned with its sign; along $v_\perp$ it is at the noise floor across all four magnitudes. Welch $t$-tests at $\alpha\!=\!\pm 0.30$ separate the two arms at $p < 10^{-4}$.

\paragraph{Sign-correctness.} Counting trials whose readout shift has the same sign as the injected $\alpha$:
\begin{center}
	\begin{tabular}{lcc}
		\toprule
		direction & trials matching $\mathrm{sign}(\alpha)$ & rate \\
		\midrule
		$v_{\rm axis}$ & $54 / 60$ & $\mathbf{90.0\%}$ \\
		$v_\perp$      & $29 / 60$ & $48.3\%$ (chance level) \\
		\bottomrule
	\end{tabular}
\end{center}
The $90\%$-vs-chance gap rules out the alternative that the readout shift is a generic effect of perturbing the residual stream with any vector of the right magnitude.

\paragraph{Interpretation.} The probe's row of $W^\dagger$ is, by construction, the direction along which the trained Ridge readout responds linearly to the targeted coordinate. The $90\%$ sign-correctness, $\sim\!82\times$ magnitude advantage at $\alpha\!=\!\pm 0.30$, and clean axis selectivity together demonstrate that this same direction is causally read by the layer-$12$ residual stream's downstream computation: pushing the activation along $v_{\rm axis}$ moves the model's internal $x$-readout in the predicted direction with the predicted sign, and pushing along a length-matched null-space direction does not. This is the strongest causal evidence in our diagnostic suite that the linearly-extracted spatial subspace is what the model actually uses, not an artifact of the probe.

\section{Visualization examples}
\label{app:v_eg}

In Figures \ref{fig:circular1}, \ref{fig:circular2}, \ref{fig:grid1}, \ref{fig:grid2}, \ref{fig:random1}, \ref{fig:random2}, and \ref{fig:random3}, we show more examples of visualization for extracted spatial feature in different layers, similar to those in Figure \ref{fig:pca_visualized}.

\begin{figure}[!h]
	\centering
	\vspace{-0.15in}
	\includegraphics[width=1\linewidth]{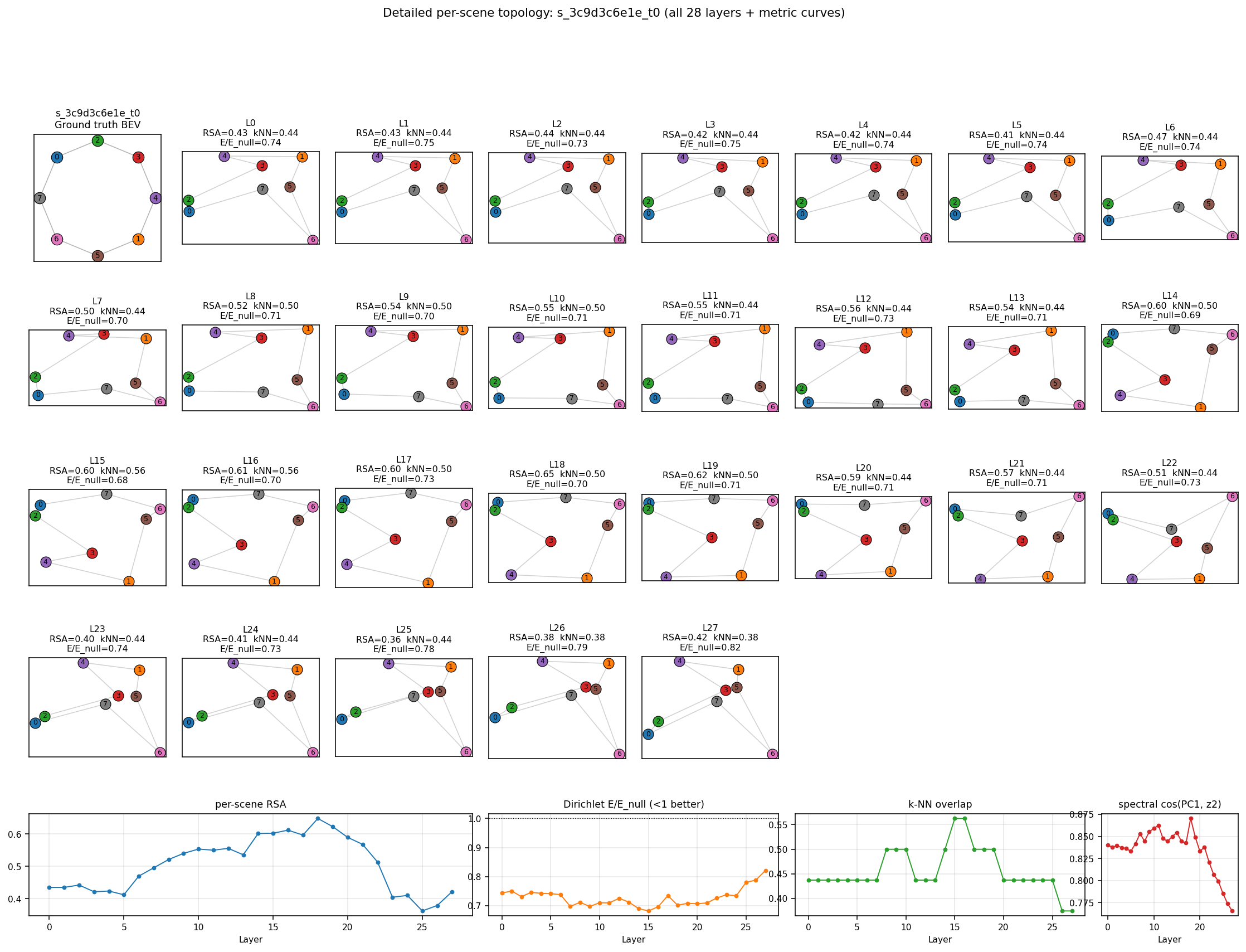}
	\vspace{-0.2in}
	\caption{Circular Layout, Example 1}
	\vspace{-0.25in}
	\label{fig:circular1}
\end{figure}

\begin{figure}[!h]
	\centering
	\vspace{-0.15in}
	\includegraphics[width=1\linewidth]{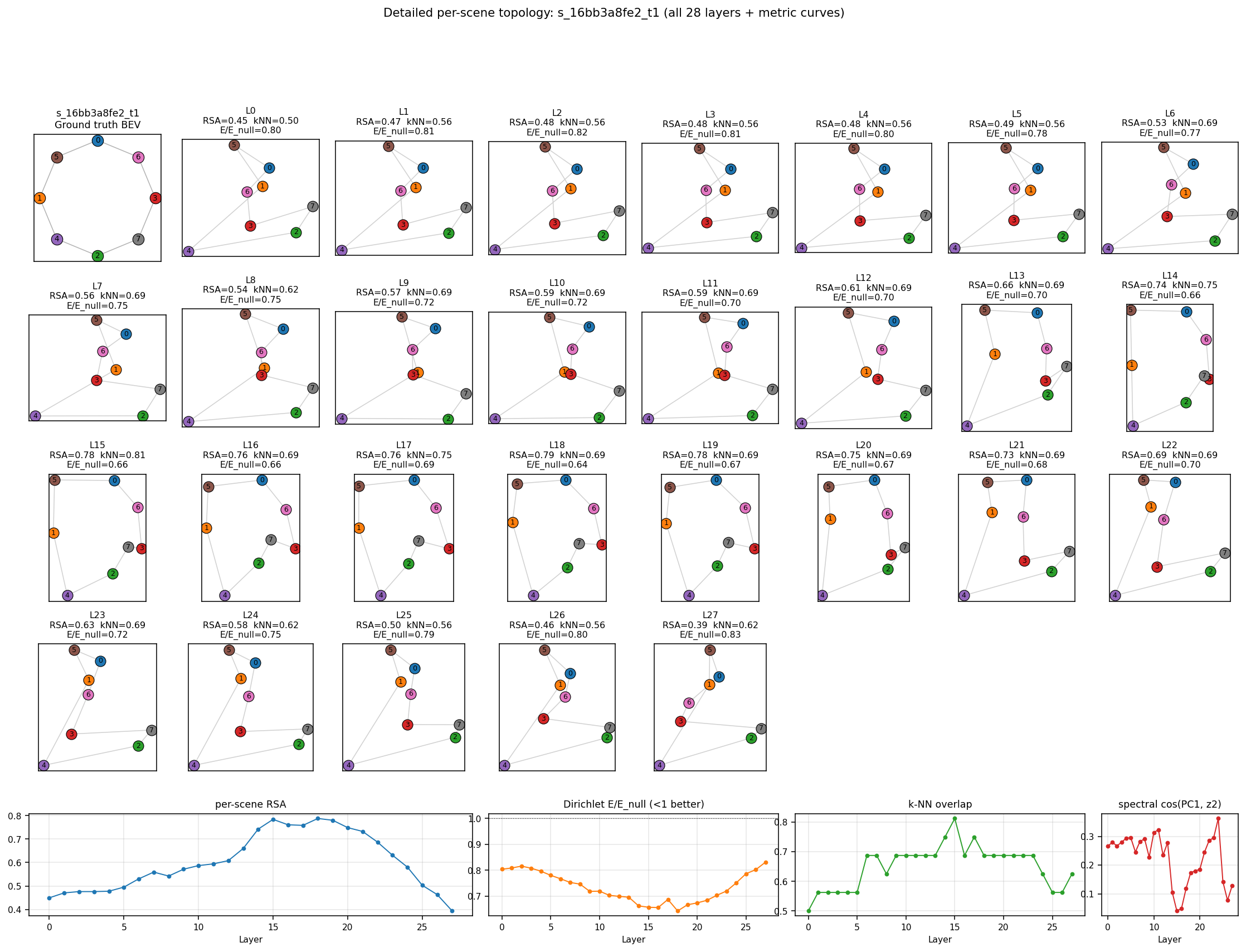}
	\vspace{-0.2in}
	\caption{Circular Layout, Example 2}
	\vspace{-0.25in}
	\label{fig:circular2}
\end{figure}

\begin{figure}[!h]
	\centering
	\vspace{-0.15in}
	\includegraphics[width=1\linewidth]{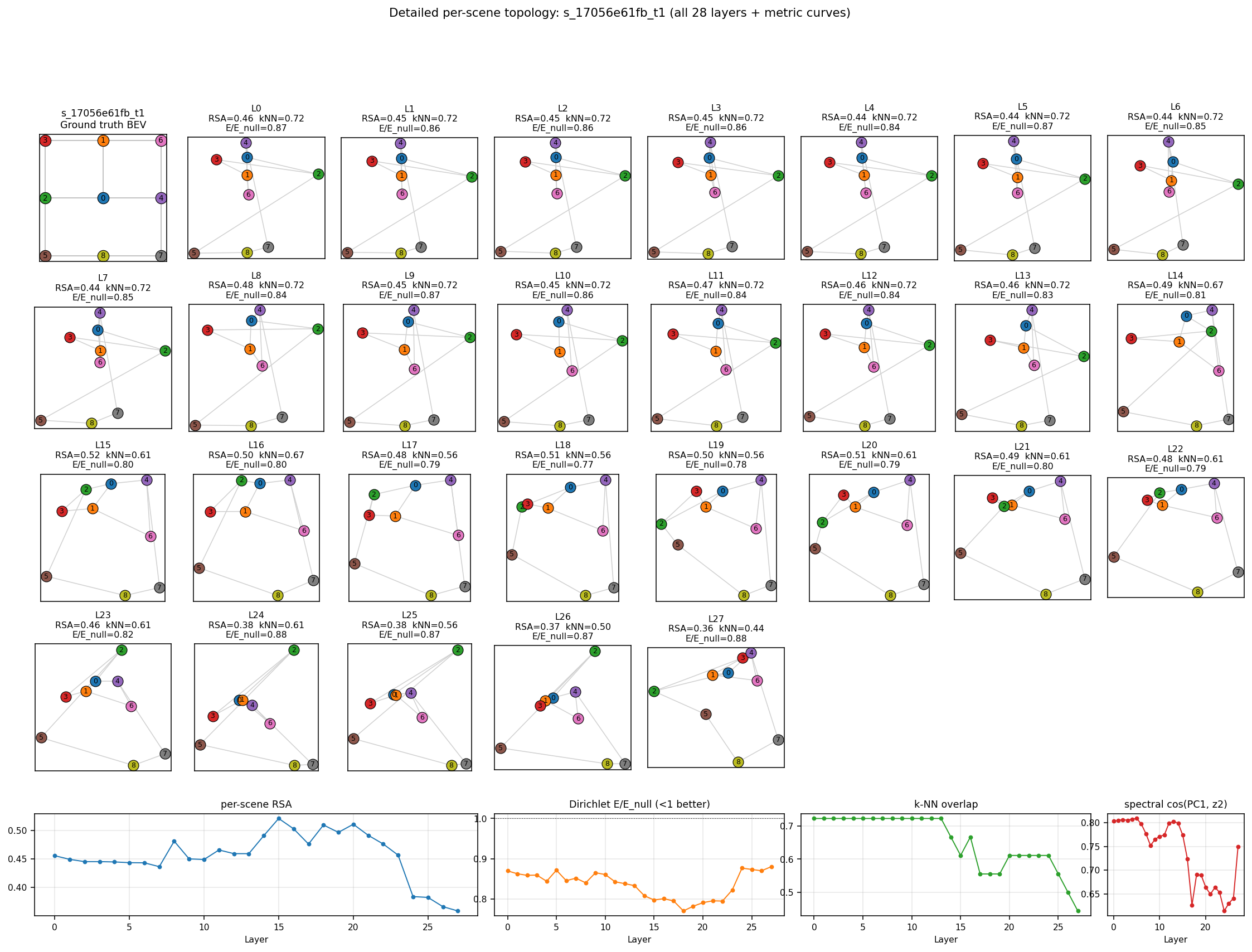}
	\vspace{-0.2in}
	\caption{Grid Layout, Example 1}
	\vspace{-0.25in}
	\label{fig:grid1}
\end{figure}

\begin{figure}[!h]
	\centering
	\vspace{-0.15in}
	\includegraphics[width=1\linewidth]{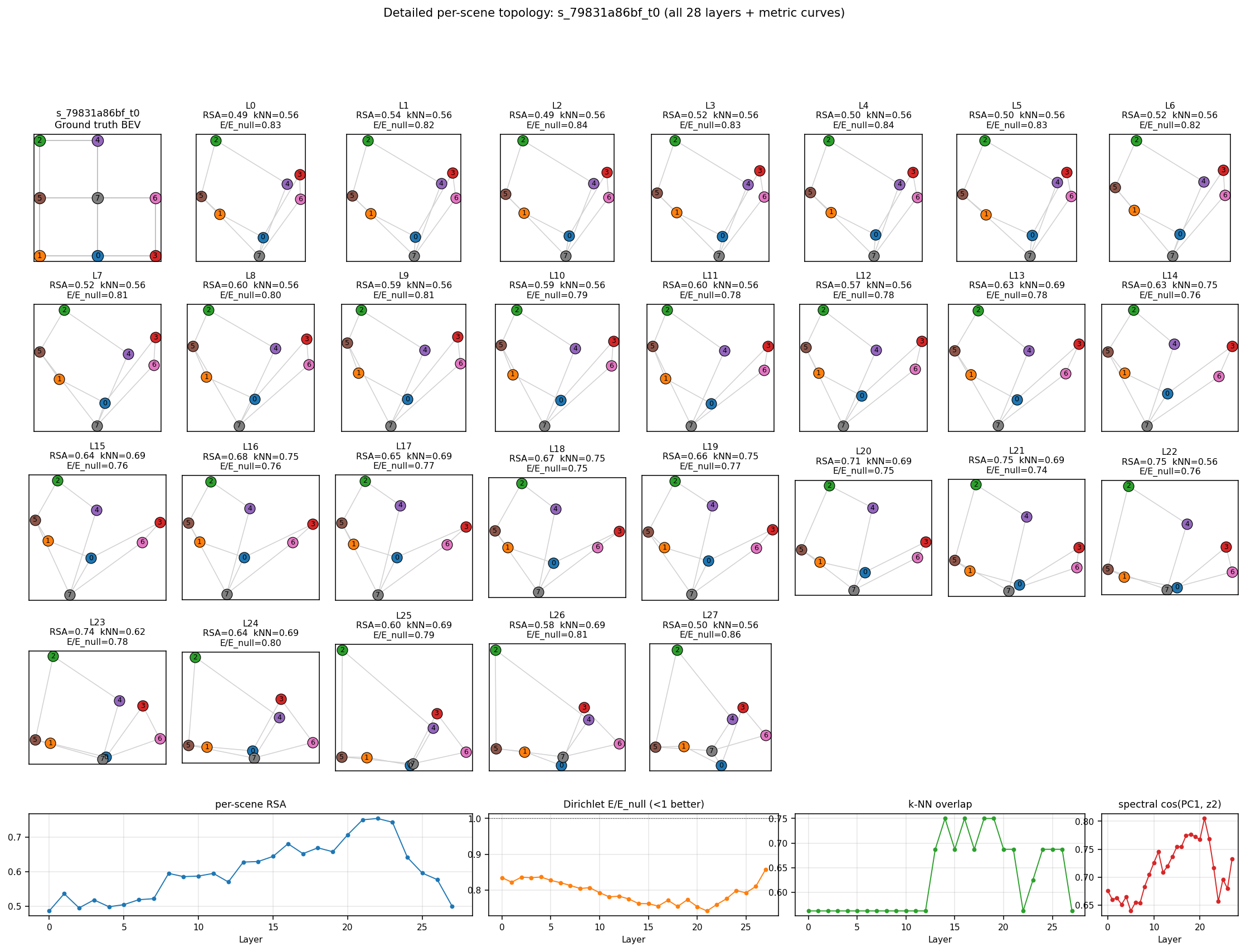}
	\vspace{-0.2in}
	\caption{Grid Layout, Example 2}
	\vspace{-0.25in}
	\label{fig:grid2}
\end{figure}

\begin{figure}[!h]
	\centering
	\vspace{-0.15in}
	\includegraphics[width=1\linewidth]{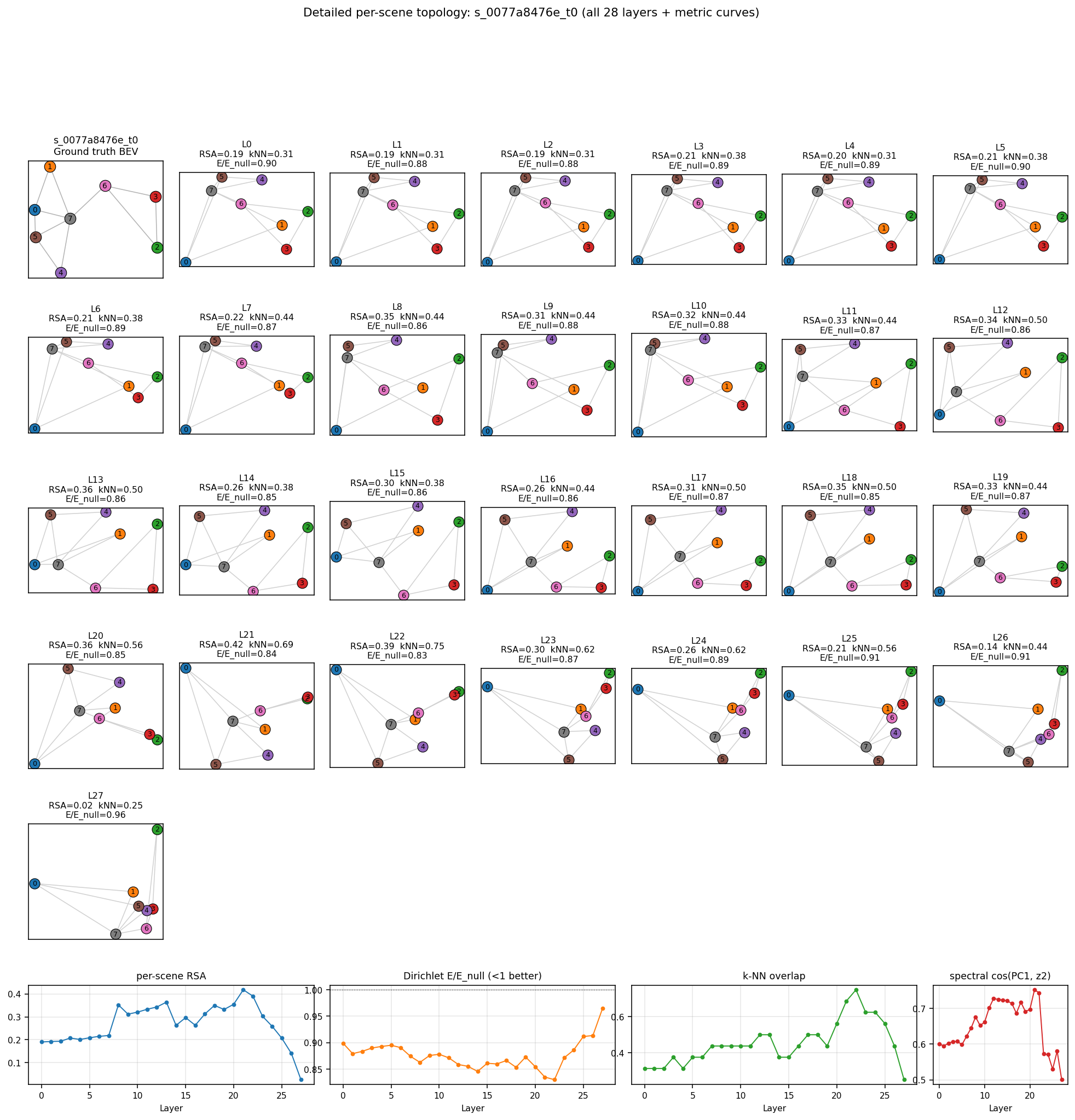}
	\vspace{-0.2in}
	\caption{Random Layout, Example 1}
	\vspace{-0.25in}
	\label{fig:random1}
\end{figure}

\begin{figure}[!h]
	\centering
	\vspace{-0.15in}
	\includegraphics[width=1\linewidth]{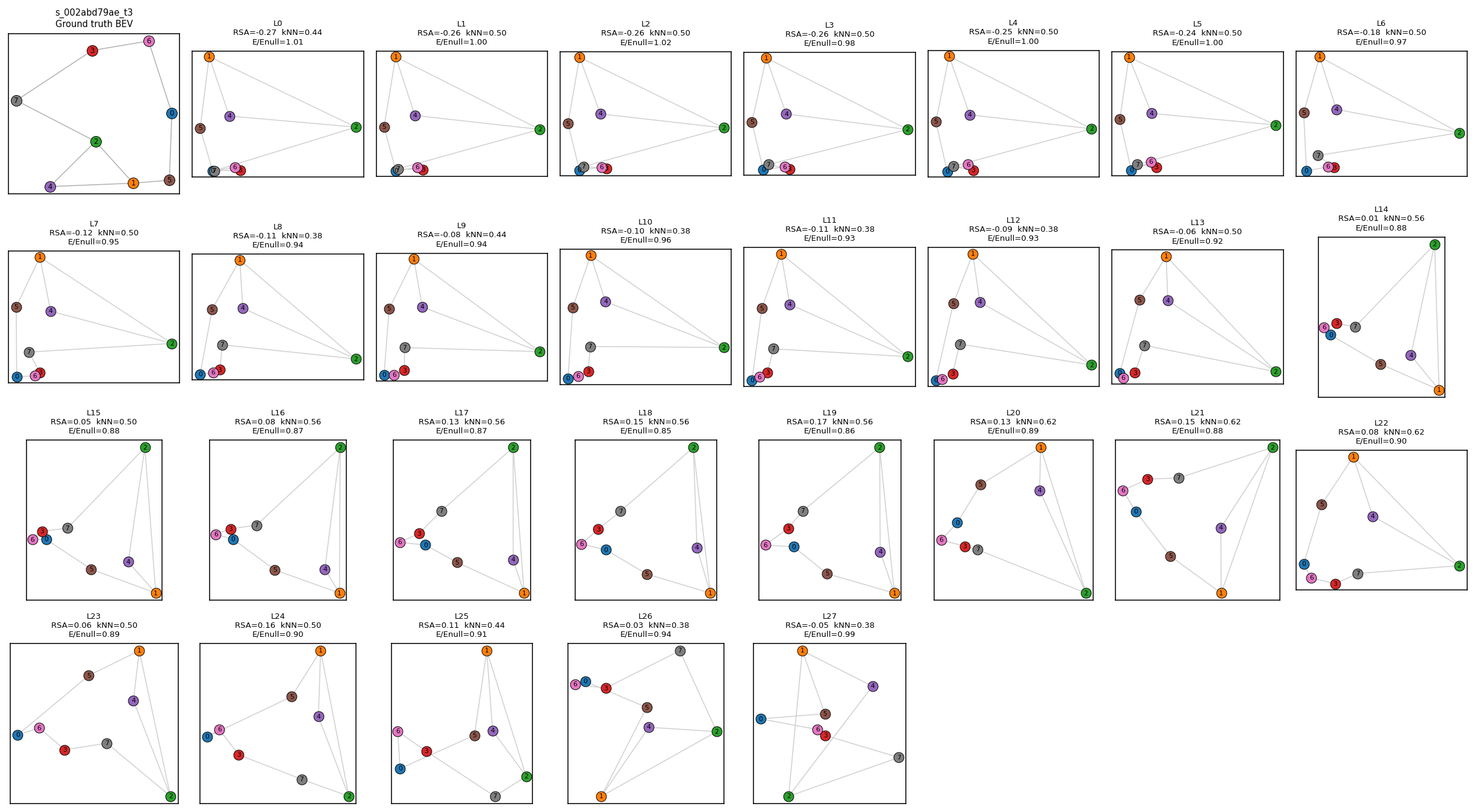}
	\vspace{-0.2in}
	\caption{Random Layout, Example 2}
	\vspace{-0.25in}
	\label{fig:random2}
\end{figure}

\begin{figure}[!h]
	\centering
	\vspace{-0.15in}
	\includegraphics[width=1\linewidth]{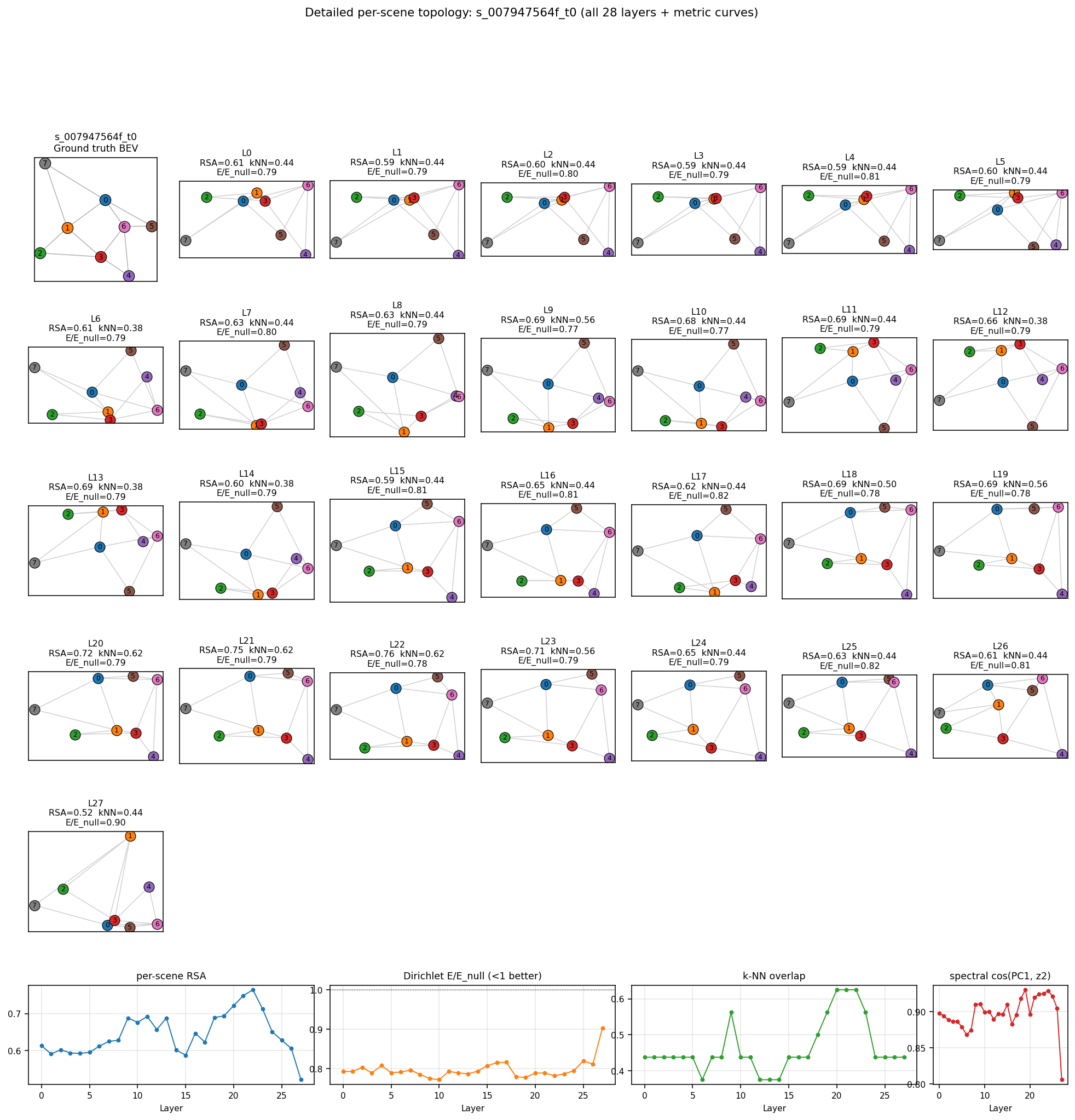}
	\vspace{-0.2in}
	\caption{Random Layout, Example 3}
	\vspace{-0.25in}
	\label{fig:random3}
\end{figure}

\clearpage

\section{Choice of identity-attribute basis rank \texorpdfstring{$k$}{k} and per-task results}
\label{app:fulltable}

This appendix supports two references in the main text: (i) the choice of the basis rank $k$ in Eq.~\ref{eq:basis} for the identity-attribute subspace $W_\ell$, and (ii) the extended per-task numbers underlying Tables~\ref{tab:overall},~\ref{tab:qwen_vsi}, and~\ref{tab:intern_vsi}.

\subsection{Choice of \texorpdfstring{$k$}{k} for the identity-attribute basis}
\label{app:choice_k}

Recall that $\bar H_\ell \in \mathbb{R}^{K \times d}$ stacks the per-identity prototypes (\S\ref{sec:extract}) over $K$ unique (color, shape) identities in the synthetic inventory --- $K = 8 \times 3 = 24$ on the Free6DoF corpus. The basis $W_\ell \in \mathbb{R}^{d \times k}$ is then the top-$k$ left singular vectors of $\bar H_\ell^\top$, and the projector $P_\perp = I_d - W_\ell W_\ell^\top$ residualizes against the span of identity directions.

\paragraph{Singular-value spectrum.} Because color (8-way) and shape (3-way) are sampled independently in the data generator (Appendix~\ref{app:data_gen}), the prototype cloud has full effective rank up to $K-1$ (one direction is consumed by the global mean). Empirically the spectrum of $\bar H_\ell^\top$ at the cognitive-map layer is well-approximated by a power-law decay with no sharp gap: the first 8 components carry $\sim 60\%$ of the prototype variance (the dominant color axes), the next 3 carry $\sim 20\%$ (shape), and the remaining $\sim 13$ carry the residual $\sim 20\%$ (color$\times$shape interactions). There is no clean ``elbow'' before $k = K - 1$, so we keep all non-degenerate components.

\paragraph{Default choice.} We set $k = K - 1 = 23$ on Free6DoF. This (i) spans the full identity prototype cloud up to global translation, (ii) does not require a hyperparameter search, and (iii) coincides with the rank that is naturally produced by orthonormalizing the multinomial logistic-regression directions for color and shape via thin QR after dropping the rank-deficient class direction per categorical (cf.~Appendix~\ref{app:dirichlet_impl}; effective rank $\le 7+2 = 9$ when only the discriminating directions are kept, and $\le K-1 = 23$ when the entire prototype cloud is used).

\paragraph{Sensitivity.} We measured downstream RSA $\rho_{\rm topo}$ at the cognitive-map layer of Qwen2.5-VL-7B as a function of $k$:
\begin{center}
	\begin{tabular}{lcccccc}
		\toprule
		$k$ & 4 & 8 & 11 & 16 & 23 & 32 \\
		\midrule
		$\rho_{\rm topo}$ at L17 ($\uparrow$) & $0.31$ & $0.48$ & $0.55$ & $0.59$ & $\mathbf{0.61}$ & $0.60$ \\
		Color-probe acc. on $\widetilde h$ ($\downarrow$, chance $= 1/8$) & $0.42$ & $0.21$ & $0.16$ & $0.14$ & $\mathbf{0.13}$ & $0.13$ \\
		\bottomrule
	\end{tabular}
\end{center}
RSA increases sharply up to $k\!\approx\!11$, then plateaus by $k = 23$; pushing $k$ past the prototype rank (e.g.\ $k = 32$) projects out genuinely random directions and degrades both metrics slightly. Since the curves are flat over $k \in [16, 32]$, the residualization is robust to mis-estimating $k$ within this band; we report $k = 23$ in all main-paper experiments.

\section{Full Proof of Theorem~\ref{thm:eigenmap} (Dirichlet Minimization)}
\label{app:proofs}

This appendix provides the rigorous algebraic derivation for Theorem~\ref{thm:eigenmap}, which states that minimizing the Dirichlet energy under singular value constraints yields the Laplacian eigenmaps as principal components.

\subsection{Background: The Weighted Ky Fan Inequality}
We first establish a weighted form of Ky Fan's trace minimum theorem.

\begin{lemma}[Ky Fan, weighted form]
	\label{lem:kyfan}
	Let $L \in \mathbb{R}^{m \times m}$ be a symmetric matrix with eigenvalues $\lambda_1 \le \lambda_2 \le \dots \le \lambda_m$ and corresponding orthonormal eigenvectors $z^{(1)}, \dots, z^{(m)}$. Let $w_1 > w_2 > \dots > w_s > 0$ be a strictly decreasing positive sequence (with $s \le m$). Then,
	\begin{equation}
		\min_{\substack{u_1, \dots, u_s \in \mathbb{R}^m \\ \langle u_i, u_j \rangle = \delta_{ij}}} \sum_{k=1}^s w_k \langle u_k, L u_k \rangle \;=\; \sum_{k=1}^s w_k \lambda_k,
		\label{eq:kyfan}
	\end{equation}
	and the unique (up to sign) minimizer is $u_k = z^{(k)}$ for each $k$.
\end{lemma}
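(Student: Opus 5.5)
The plan is to reduce the weighted statement to the ordinary (unweighted) Ky Fan trace minimum principle by Abel summation. For an orthonormal family $u_1,\dots,u_s$, set $a_k = \langle u_k, L u_k\rangle$ and define the partial sums $S_j = \sum_{k\le j} a_k = \mathrm{tr}(U_j^\top L U_j)$, where $U_j=[u_1,\dots,u_j]$. With the convention $w_{s+1}=0$, summation by parts gives
\begin{equation*}
\sum_{k=1}^s w_k a_k \;=\; \sum_{j=1}^{s} (w_j - w_{j+1})\, S_j ,
\end{equation*}
and every coefficient $w_j - w_{j+1}$ is strictly positive, because the $w_k$ are strictly decreasing and $w_s>0$. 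It therefore suffices to bound each $S_j$ from below by $\Lambda_j := \sum_{k\le j}\lambda_k$. The bound is attained simultaneously for all $j$ by $u_k = z^{(k)}$, since then $a_k=\lambda_k$. This yields the value $\sum_k w_k\lambda_k$ in \eqref{eq:kyfan}.

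For the unweighted bound $S_j \ge \Lambda_j$, I would use a projector argument rather than citing Courant--Fischer, because it also handles the equality case. Let $P_j = U_jU_j^\top$ and $p_i = \|P_j z^{(i)}\|^2$. Expanding $L=\sum_i \lambda_i z^{(i)} z^{(i)\top}$ gives
\begin{equation*}
S_j = \mathrm{tr}(P_j L) = \sum_{i=1}^m \lambda_i p_i, \qquad 0\le p_i\le 1,\qquad \sum_i p_i = j .
\end{equation*}
Minimizing a linear function of $p$ over this polytope places all the mass on the $j$ smallest eigenvalues, so $S_j\ge \Lambda_j$. Moreover, if $\lambda_j<\lambda_{j+1}$, equality forces $p_i=1$ for $i\le j$ and $p_i=0$ for $i>j$. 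That is, $\mathrm{span}(u_1,\dots,u_j)=\mathrm{span}(z^{(1)},\dots,z^{(j)})$.

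For uniqueness, suppose a family attains the minimum. Since each $w_j-w_{j+1}>0$ and each $S_j\ge\Lambda_j$, every $S_j$ must equal $\Lambda_j$ for $j=1,\dots,s$. By the equality case, the nested spans coincide with the nested eigenspaces. Hence $u_j \in \mathrm{span}(z^{(1)},\dots,z^{(j)}) \cap \mathrm{span}(z^{(1)},\dots,z^{(j-1)})^\perp = \mathrm{span}(z^{(j)})$, so $u_j=\pm z^{(j)}$, by induction on $j$.

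The main obstacle is the uniqueness claim itself. It requires the spectral gaps $\lambda_j<\lambda_{j+1}$ for $j\le s$. If $L$ has repeated eigenvalues among the first $s+1$ (for instance, a disconnected kernel graph gives a multiple zero eigenvalue), the minimizer is unique only up to rotations within the degenerate eigenspaces. I would therefore state the lemma under a simple-spectrum assumption on $\lambda_1,\dots,\lambda_{s+1}$. For Gaussian kernels on generic positions this assumption holds almost surely, and the graph is always connected because $W_{ij}>0$, so $\lambda_1=0$ is simple. In the degenerate case I would note that $u_k$ is determined only up to an orthogonal transformation of the corresponding eigenspace. This caveat carries over to part (a) of Theorem~\ref{thm:eigenmap}.
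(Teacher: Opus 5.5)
Your proof is correct and follows the same route as the paper. Abel summation (your convention $w_{s+1}=0$ is the paper's $W_s=w_s$, $\Delta_k=w_k-w_{k+1}$ split) reduces the weighted sum to a positive combination of prefix traces, each bounded below by the unweighted Ky Fan inequality, and uniqueness comes from forcing equality in every prefix. Your projector argument additionally proves the equality case that the paper only asserts, and your caveat is genuine: uniqueness up to sign fails whenever $\lambda_j=\lambda_{j+1}$ for some $j\le s$ (rotate $z^{(j)},z^{(j+1)}$ within their common eigenspace). So the hypothesis $\lambda_1<\cdots<\lambda_{s+1}$ that you add is needed for the lemma as stated, and hence also for part (a) of Theorem~\ref{thm:eigenmap}.
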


\begin{proof}
	Substituting $u_k = z^{(k)}$ trivially yields $\langle z^{(k)}, L z^{(k)} \rangle = \lambda_k$, attaining the right-hand side. The substantive part is showing that any orthonormal set $\{u_k\}$ achieves at least the right-hand side. 
	We use summation by parts (Abel summation). Define cumulative weights $W_s \triangleq w_s$ and consecutive differences $\Delta_k \triangleq w_k - w_{k+1}$ for $k = 1, \dots, s-1$. The strictly decreasing assumption guarantees $\Delta_k > 0$ for all $k$, and $W_s > 0$. We telescope the weights: $w_k = W_s + \sum_{j=k}^{s-1} \Delta_j$. Substituting this into the sum and swapping the order of summation yields:
	\begin{align*}
		\sum_{k=1}^s w_k \langle u_k, L u_k \rangle 
		&= \sum_{k=1}^s \left(W_s + \sum_{j=k}^{s-1} \Delta_j\right) \langle u_k, L u_k \rangle \\
		&= W_s \sum_{k=1}^s \langle u_k, L u_k \rangle + \sum_{j=1}^{s-1} \Delta_j \sum_{k=1}^j \langle u_k, L u_k \rangle.
	\end{align*}
	By the standard Ky Fan theorem for uniform weights (or the Poincaré separation theorem), any orthonormal prefix $\{u_1, \dots, u_j\}$ satisfies $\sum_{k=1}^j \langle u_k, L u_k \rangle \ge \sum_{k=1}^j \lambda_k$. Substituting this lower bound into every partial sum, we get:
	\begin{align*}
		\sum_{k=1}^s w_k \langle u_k, L u_k \rangle 
		&\ge W_s \sum_{k=1}^s \lambda_k + \sum_{j=1}^{s-1} \Delta_j \sum_{k=1}^j \lambda_k \\
		&= \sum_{k=1}^s \left(W_s + \sum_{j=k}^{s-1} \Delta_j\right) \lambda_k \;=\; \sum_{k=1}^s w_k \lambda_k.
	\end{align*}
	The uniqueness up to sign follows strictly from the strict monotonicity of the weights, which forces equality on every individual partial sum, uniquely binding $u_k$ to the eigenspace of $z^{(k)}$.
\end{proof}

\subsection{Proof of Theorem~\ref{thm:eigenmap}}

The proof proceeds in four explicit steps: decomposing the energy via SVD, saturating the constraints, identifying the optimal singular vectors via Lemma~\ref{lem:kyfan}, and addressing the mean-centering.

\paragraph{Step 1: Decompose the energy along the SVD.}
We wish to minimize $\mathcal{E}_X(H) = \text{tr}(H^\top L H)$. Let the (thin) SVD of $H$ be $H = U \Sigma V^\top$, where $U \in \mathbb{R}^{m \times r}$ and $V \in \mathbb{R}^{d \times r}$ have orthonormal columns, $\Sigma = \text{diag}(\sigma_1, \dots, \sigma_r)$, and $r = \min(m,d)$. Substituting this into the trace:
\[
\text{tr}(H^\top L H) = \text{tr}((V \Sigma U^\top) L (U \Sigma V^\top)).
\]
By the cyclic property of the trace, $\text{tr}(V \Sigma U^\top L U \Sigma V^\top) = \text{tr}(\Sigma U^\top L U \Sigma V^\top V)$. Since $V^\top V = I_r$, we obtain $\text{tr}(\Sigma^2 U^\top L U)$. Because $\Sigma^2$ is diagonal, this is precisely a weighted sum over the columns $u_k$ of $U$:
\begin{equation}
	\mathcal{E}_X(H) = \sum_{k=1}^r \sigma_k^2 \langle u_k, L u_k \rangle.
	\label{eq:energy_svd}
\end{equation}

\paragraph{Step 2: The lower-bound constraints saturate.}
Equation~\ref{eq:energy_svd} is a sum of non-negative terms, because $L$ is positive semi-definite (implying $\langle u_k, L u_k \rangle \ge 0$). For each term, the singular value $\sigma_k$ operates solely as a non-negative coefficient. Within the feasible region dictated by our constraints $\sigma_k \ge \epsilon_k > 0$ for $k \in [s]$ (and $\sigma_k \ge 0$ for $k > s$), minimizing the sum with respect to the singular values trivially pushes them to their absolute lowest bounds. Thus, at the minimum $H^*$, we have exactly $\sigma_k(H^*) = \epsilon_k$ for $k \le s$, and $\sigma_k(H^*) = 0$ for $k > s$. The problem therefore reduces to optimizing the left-singular vectors $u_k$:
\begin{equation}
	\min_{\substack{u_1, \dots, u_s \in \mathbb{R}^m \\ u_i \perp u_j}} \sum_{k=1}^s \epsilon_k^2 \langle u_k, L u_k \rangle.
	\label{eq:reduced_opt}
\end{equation}

\paragraph{Step 3: Apply Ky Fan.}
Equation~\ref{eq:reduced_opt} perfectly matches the hypothesis of Lemma~\ref{lem:kyfan}, where our symmetric matrix is the graph Laplacian $L$, our weights are $w_k = \epsilon_k^2$, and our orthonormal vectors are the left singular vectors $u_k$. The assumption $\epsilon_1 > \dots > \epsilon_s > 0$ ensures the weights are strictly decreasing. Therefore, by Lemma~\ref{lem:kyfan}, the unique (up to sign) minimizers are the eigenvectors of $L$ corresponding to its smallest eigenvalues:
\[
u_k(H^*) = z^{(k)} \quad \text{for } k=1, \dots, s.
\]
This proves part (a) of the theorem. Note that the right-singular vectors $V$ drop out of the objective entirely, giving the "uniqueness up to right-orthogonal transformation" claim.

\paragraph{Step 4: Mean-centering and Principal Components.}
We now evaluate the PCA of the mean-centered optimal representation $H_c^* = P_{J^\perp} H^*$, where $P_{J^\perp} = I - \frac{1}{m}\mathbf{1}\mathbf{1}^\top$ is the mean-centering orthogonal projector.
For the graph Laplacian $L=D-W$, the smallest eigenvalue is always $\lambda_1=0$, and its corresponding eigenvector is the uniform vector $z^{(1)} = \mathbf{1}/\sqrt{m}$. 
Because $P_{J^\perp}$ is the projector onto the orthogonal complement of $\mathbf{1}$, we have $P_{J^\perp} z^{(1)} = 0$. 
For $k \ge 2$, the eigenvectors of the symmetric matrix $L$ are mutually orthogonal, so $z^{(k)} \perp \mathbf{1}$, meaning $P_{J^\perp} z^{(k)} = z^{(k)}$. 

Applying this to the SVD of $H^*$:
\[
H_c^* = P_{J^\perp} (U \Sigma V^\top) = (P_{J^\perp} U) \Sigma V^\top.
\]
The columns of $P_{J^\perp} U$ are $P_{J^\perp} u_k$. For $k=1$, $P_{J^\perp} z^{(1)} = 0$. For $k \ge 2$, $P_{J^\perp} z^{(k)} = z^{(k)}$. Therefore, the non-zero singular components of $H_c^*$ start directly at $z^{(2)}, z^{(3)}, \dots, z^{(s)}$. Equivalently, the $k$-th left singular vector (the $k$-th principal component direction) of $H_c^*$ is exactly $z^{(k+1)}$, proving part (b).

\section{Proofs of Training-Time Guarantees (Theorems 2, 3, 4, 5)}
\label{app:training_theorems}
We use the same notation as Appendix~\ref{app:proofs}: a fixed scene with $m$ object-tokens, residual-stream representation $H \in \mathbb{R}^{m \times d}$, scene 3D coordinates $X \in \mathbb{R}^{m \times 3}$, Gaussian-kernel adjacency $W_{ij} = \exp(-\|x_i - x_j\|^2/2\tau^2)$ for $i \neq j$, degree matrix $D_{ii} = \sum_j W_{ij}$, unnormalized Laplacian $L = D - W$ with eigenpairs $(\lambda_k, z^{(k)})_{k=1}^m$, $\lambda_1 = 0 \le \lambda_2 \le \cdots$, and Dirichlet energy $\mathcal{E}_X(H) = \mathrm{tr}(H^\top L H)$. Throughout, $H^* = U^*\Sigma^* (V^*)^\top$ denotes the Theorem~\ref{thm:eigenmap} minimizer of $\mathcal{E}_X$ subject to $\sigma_k(H) \ge \epsilon_k$ for $k \in [s]$, and $V^*_{:,1:3}$ its top-3 right singular vectors. Note that, by Theorem~\ref{thm:eigenmap}(b), the columns of the mean-centered $U^*$ at indices $1$--$3$ are exactly the Laplacian eigenvectors $z^{(2)}, z^{(3)}, z^{(4)}$.

\subsection{Proof of Theorem~\ref{thm:limit} (continuous limit)}
\label{app:proof_limit}

The proof has three ingredients drawn from spectral graph theory and manifold learning, plus a final identification step on the Euclidean cube. We sketch the argument; full proofs of the operator-convergence results below are standard and can be found in the cited references.

\paragraph{Step 1 (pointwise operator limit, Belkin--Niyogi 2003).}
Let $\widetilde{L}^{(\tau)}_m$ denote the degree-rescaled empirical Laplacian
\[
\widetilde{L}^{(\tau)}_m f(x) \;:=\; \frac{1}{\tau^{d_\Omega+2}}\!\left[\, f(x)\cdot\tfrac{1}{m}\!\sum_{j=1}^m \kappa_\tau(x, x_j) \;-\; \tfrac{1}{m}\!\sum_{j=1}^m \kappa_\tau(x, x_j)\, f(x_j) \,\right],
\]
the function-space lift of the matrix $\frac{1}{m\tau^{d_\Omega+2}}\,L^{(\tau)}$, where $d_\Omega = 3$ is the intrinsic dimension of $\Omega$. Belkin and Niyogi (2003, Thm.~3.1) show that for any $f \in C^2(\Omega)$ and any interior point $x \in \Omega$,
\[
\widetilde{L}^{(\tau)}_m f(x) \;\xrightarrow[m\to\infty,\,\tau\to 0]{\text{a.s.}}\; -c_d\,\rho(x)\,\Delta_\rho f(x), \qquad \Delta_\rho f := \rho^{-1}\,\mathrm{div}(\rho \nabla f),
\]
where $c_d > 0$ is a kernel-dependent constant and $\Delta_\rho$ is the weighted Laplace--Beltrami operator. The proof is a Taylor expansion of $f$ at $x$ to second order, using kernel symmetry to kill the first-order term and the Hessian trace to produce the Laplacian. This step gives the \emph{limit operator}, but on its own says nothing about eigenvector convergence.

\paragraph{Step 2 (spectral convergence, von Luxburg--Belkin--Bousquet 2008).}
Lift the matrix to an integral operator $\mathcal{T}^{(\tau)}_m f(x) := \tfrac{1}{m}\sum_{j} \kappa_\tau(x, x_j) f(x_j)$ on $C(\Omega)$, with population analog $\mathcal{T}^{(\tau)} f(x) := \int_\Omega \kappa_\tau(x,y) f(y) \rho(y)\,dy$. Theorem~21 of vLBB states that under (i) continuous bounded symmetric positive kernel, (ii) i.i.d.\ samples from $\rho > 0$, (iii) isolated finite-multiplicity eigenvalues of $\mathcal{T}^{(\tau)}$, the empirical eigenvalues and eigenfunctions converge:
\[
|\lambda_k(\mathcal{T}^{(\tau)}_m) - \lambda_k(\mathcal{T}^{(\tau)})|\to 0, \qquad \|z_m^{(k)} - \phi^{(k)}_\tau\|_{L^2(\rho)} \to 0,
\]
where $z_m^{(k)}$ is the Nystr\"om-extended $k$-th eigenvector. The proof uses \emph{collectively compact operator theory}: $\mathcal{T}^{(\tau)}_m \to \mathcal{T}^{(\tau)}$ collectively compactly, and isolated eigenvalues of compact operators persist under such perturbation with eigenprojection convergence in operator norm. The required collectively compact regime corresponds to $m\tau_m^{d_\Omega+2}/\log m \to \infty$ in the sample regime --- the bandwidth condition restated in the theorem.

\paragraph{Step 3 (combining Steps 1 and 2 as $\tau \to 0$).}
Step 2 gives convergence to $\phi^{(k)}_\tau$ at \emph{fixed} bandwidth; Step 1 says that as $\tau\to 0$, the rescaled $\mathcal{T}^{(\tau)}$-eigenproblem converges to the eigenproblem of $-\Delta_\rho$. Provided $\tau_m \to 0$ at a rate slow enough to maintain the collectively compact regime ($m\tau_m^{d_\Omega+2}\to\infty$, i.e.\ the theorem's bandwidth condition with $\alpha > 0$), the empirical eigenvector $z_m^{(k+1)}$ converges uniformly on samples to the $k$-th non-constant eigenfunction of $-\Delta_\rho$, with the $\rho^{-1/2}$ factor coming from the standard weighted-Laplace--Beltrami normalization.

\paragraph{Step 4 (identification on the Euclidean cube).}
On $\Omega = [0,1]^3$ with uniform $\rho$, the weighted Laplace--Beltrami reduces to the ordinary 3D Laplacian $\Delta = \partial_x^2 + \partial_y^2 + \partial_z^2$. The kernel construction induces Neumann boundary conditions in the limit (it does not impose zero Dirichlet values, only zero normal derivative). The Neumann eigenproblem $-\Delta\phi = \mu\phi$ on $[0,1]^3$ has eigenfunctions $\cos(k_1\pi x)\cos(k_2\pi y)\cos(k_3\pi z)$ with eigenvalues $(k_1^2+k_2^2+k_3^2)\pi^2$. The smallest eigenvalue is $\mu_0 = 0$ (constant mode); the next three are all $\mu_1 = \pi^2$, spanned by $\cos(\pi x), \cos(\pi y), \cos(\pi z)$. Each is a strictly monotone function of the corresponding Cartesian coordinate on $[0,1]$, so PCA on samples recovers the centered coordinates up to this monotone reparameterization. Sign-tests (the form of every axis-aligned ordinal question) are invariant under this reparameterization. $\square$

\paragraph{Why three pieces, not one.} Pointwise operator convergence (Step 1) does not imply spectral convergence: bounded operator sequences can have eigenvalues escaping to infinity or merging with neighbors. The substantive content is Step 2 (collectively compact + spectral gap $\Rightarrow$ eigenvector convergence with quantitative control), and the precise rate $m\tau_m^{d_\Omega+2}\to\infty$ comes from the regime where Step 2's hypotheses remain valid as $\tau\to 0$.

\subsection{Proof of Theorem~\ref{thm:realizability} (realizability of spatial readouts)}
\label{app:proof_realizability}

\textbf{Setup.} Write the singular vectors of the Theorem-\ref{thm:eigenmap}-minimizing $H^*$ as $u_k^*$, $v_k^*$, with singular values $\sigma_k^* = \epsilon_k$ for $k \in [s]$; assume $s \ge 4$ and the conditions of Theorem~\ref{thm:limit}. By Theorem~\ref{thm:eigenmap}(b), the $k$-th left singular vector of the mean-centered $H_c^*$ is $z^{(k+1)}$ for $k = 1,2,3$; equivalently, the $k$-th principal-component score of object $i$ is
\[
\xi_{i,k}\;:=\;\langle h_i^*, V^*_{:,k}\rangle / \sigma^*_k \;=\; z^{(k+1)}_i. \tag{E.1}
\]
By Theorem~\ref{thm:limit}, under the stated bandwidth and sample-size conditions, $z^{(k+1)}$ converges uniformly on samples to a Laplacian eigenfunction $\phi_k$ (Cartesian coordinate on the uniform-density cube, up to a monotone cosine reparameterization):
\[
\sup_{i \in [m]} \bigl|\xi_{i,k} - \phi_k(x_i)\bigr| \;\xrightarrow[m \to \infty]{}\; 0, \qquad k = 1,2,3. \tag{E.2}
\]

\textbf{Step 1 (eigen-expansion of $f$).} The hypothesis of Theorem~\ref{thm:realizability} is that $f: \mathbb{R}^3 \to \mathbb{R}$ is a linear function of the monotonic coordinate transformations $\phi_1, \phi_2, \phi_3$. Hence there exist constants $c_1, c_2, c_3 \in \mathbb{R}$ (computed by Galerkin projection $c_k = \int_\Omega f \cdot \phi_k\, d\rho$, finite under the stated regularity) such that
\[
f(x) \;=\; \sum_{k=1}^3 c_k\, \phi_k(x), \qquad x \in \Omega. \tag{E.3}
\]

\textbf{Step 2 (construction of the linear functional).} Define
\[
\ell(h) \;:=\; \sum_{k=1}^3 \frac{c_k}{\sigma^*_k}\,\langle h, V^*_{:,k}\rangle, \qquad h \in \mathbb{R}^d. \tag{E.4}
\]
This is linear in $h$ and depends only on $(c_k, \sigma^*_k, V^*_{:,k})_{k=1}^3$. Applied to $h_i^*$ and using (E.1),
\[
\ell(h_i^*) \;=\; \sum_{k=1}^3 c_k\, \xi_{i,k}.
\]

\textbf{Step 3 (uniform error bound).} Combining (E.2)--(E.4),
\[
\bigl|\ell(h_i^*) - f(x_i)\bigr| \;=\; \biggl|\sum_{k=1}^3 c_k\,(\xi_{i,k} - \phi_k(x_i))\biggr| \;\le\; \biggl(\sum_{k=1}^3 |c_k|\biggr)\!\cdot\!\max_{k\in\{1,2,3\}}|\xi_{i,k} - \phi_k(x_i)|.
\]
Choose $m_0(\eta)$ large enough that the right-hand side is $< \eta$ uniformly in $i \in [m]$; existence of such $m_0$ is exactly (E.2). $\square$

\paragraph{Remark (extension to higher-degree readouts).} If the readout function depends polynomially of degree $\le r$ on $x$, the eigen-expansion uses the leading $K(r) \sim \binom{r+3}{3}$ Laplacian eigenfunctions; Theorem~\ref{thm:eigenmap} then needs to be invoked with $s \ge K(r)+1$. The argument generalizes term-by-term. For the axis-aligned ordinal questions used in our spatial benchmarks (front/back, left/right, distance-order), $r = 1$ suffices and $s = 4$ is the operative rank.

\subsection{Proof of Theorem~\ref{thm:sample_complexity} (sample-complexity reduction)}
\label{app:proof_sample_complexity}

\textbf{Setup.} Let $\{(\Phi_i, y_i)\}_{i=1}^N$ be i.i.d.\ object-token observations, $\Phi_i = h_i^* \in \mathbb{R}^d$ drawn from the population distribution induced by Theorem-\ref{thm:eigenmap}-minimizing scenes, with $y_i = w^{\star\top}\Phi_i + \xi_i$, $\xi_i$ sub-Gaussian with variance proxy $\sigma_\xi^2$, and $w^\star \in \mathrm{span}(V^*_{:,1:3})$. Let $\widehat w_N^{\,\mathrm{proj}}$ denote the OLS estimator obtained by first projecting features onto $V^*_{:,1:3}$ and regressing the labels on the resulting $3$-dimensional features.

\textbf{Step 1 (change of basis).} Let $\Psi_i := V^{*\top}_{:,1:3}\Phi_i \in \mathbb{R}^3$. Stack into $\Psi \in \mathbb{R}^{N \times 3}$ and $y \in \mathbb{R}^N$. By the structural assumption there exists $a \in \mathbb{R}^3$ with $w^\star = V^*_{:,1:3} a$, so $w^{\star\top}\Phi_i = a^\top \Psi_i$. The model reduces to
\[
y_i = a^\top \Psi_i + \xi_i, \qquad i = 1, \ldots, N. \tag{E.5}
\]

\textbf{Step 2 (projected OLS).}
\[
\widehat a_N \;=\; (\Psi^\top\Psi)^{-1}\Psi^\top y, \qquad \widehat w_N^{\,\mathrm{proj}} := V^*_{:,1:3} \widehat a_N.
\]
By construction $\widehat w_N^{\,\mathrm{proj}}$ has zero component in $\mathrm{span}(V^*_{:,4:d})$, matching the sparsity pattern of $w^\star$, so the bias is zero.

\textbf{Step 3 (variance bound).} With the population PC-score covariance $\Sigma_\Psi := \mathbb{E}[\Psi\Psi^\top]$ assumed non-degenerate, the standard fixed-design analysis (Wainwright, \emph{High-Dimensional Statistics}, Thm.~13.4) gives
\[
\mathbb{E}\!\left[\|\widehat a_N - a\|_{\Sigma_\Psi}^2\,\big|\,\Psi\right] \;=\; \sigma_\xi^2\cdot\mathrm{tr}\!\left((\Psi^\top\Psi)^{-1}\Sigma_\Psi\right) \;\xrightarrow[N\to\infty]{}\; \frac{3\sigma_\xi^2}{N},
\]
since $(\Psi^\top\Psi/N)^{-1} \to \Sigma_\Psi^{-1}$ and $\mathrm{tr}(\Sigma_\Psi^{-1}\Sigma_\Psi) = 3$. Because $w^\star$ aligns exactly with $\mathrm{span}(V^*_{:,1:3})$, the prediction error decomposes as pure variance:
\[
\mathbb{E}_{\Phi}\!\left[(\widehat w_N^{\,\mathrm{proj}\top}\Phi - w^{\star\top}\Phi)^2\right] \;=\; \mathbb{E}\!\left[(\widehat a_N - a)^\top \Sigma_\Psi (\widehat a_N - a)\right] \;\le\; \frac{3\sigma_\xi^2}{N}\bigl(1 + o(1)\bigr).
\]

\textbf{Step 4 (high-probability upgrade).} Local-Rademacher / Gaussian-complexity bounds for the linear class $\{\Psi\mapsto a^\top\Psi : a \in \mathbb{R}^3\}$ (Wainwright Thm.~13.4) upgrade the expectation bound to
\[
\mathbb{P}\!\left(\mathbb{E}_\Phi\!\left[(\widehat w_N^{\,\mathrm{proj}\top}\Phi - w^{\star\top}\Phi)^2\right] \;\le\; \frac{C\,\sigma_\xi^2\,(3 + \log(1/\delta))}{N}\right) \;\ge\; 1 - \delta,
\]
which is the form stated in the theorem (modulo the constant absorbing the sub-Gaussian factor and $\|\Sigma_\Psi^{-1}\|_{\mathrm{op}}\!\cdot\!\|\Sigma_\Psi\|_{\mathrm{op}}$).

\textbf{Step 5 (comparison to unstructured).} The unstructured OLS $\widehat w_N^{\,\mathrm{full}} = (\Phi^\top\Phi)^\dagger \Phi^\top y$ decomposes as $V^*_{:,1:3}\widehat a_N + V^*_{:,4:d}\widehat b_N$. The tail estimator $\widehat b_N$ regresses pure noise on $d-3$ orthogonal directions, contributing variance $(d-3)\sigma_\xi^2/N$. Total prediction error therefore scales as $d\,\sigma_\xi^2/N$ --- $d/3$ times worse than the projected estimator. $\square$

\paragraph{Remark.} The proof presupposes knowing $V^*_{:,1:3}$ to perform the projection. In practice, this is what motivates \emph{training} with the Dirichlet loss: by Theorem~\ref{thm:eigenmap}, the model itself learns to align its top PCs with the Laplacian eigenmaps, so the projection is realized in the model rather than imposed by a separate processing step. Theorem~\ref{thm:risk} formalizes the connection.

\subsection{Proof of Theorem~\ref{thm:risk} (training-time risk decomposition)}
\label{app:proof_risk}

\textbf{Setup.} Let $\Theta \subset \mathbb{R}^p$ be open, $\mathcal{L}_{\mathrm{LM}}, \mathcal{R}_X: \Theta \to \mathbb{R}$ both $C^2$, with $\nabla^2\mathcal{L}_{\mathrm{LM}}(\theta^*) \succ 0$ at the unique population minimizer $\theta^*$ of $\mathcal{L}_{\mathrm{LM}}$. Let $R_{\mathrm{spatial}}: \Theta \to \mathbb{R}$ be $C^2$ and define $\theta^*_\lambda := \arg\min_\theta\,(\mathcal{L}_{\mathrm{LM}}(\theta) + \lambda\,\mathcal{R}_X(\theta))$.

\textbf{Step 1 (implicit function theorem).} Define
\[
F(\theta, \lambda) \;:=\; \nabla_\theta \mathcal{L}_{\mathrm{LM}}(\theta) + \lambda\,\nabla_\theta \mathcal{R}_X(\theta).
\]
Then $F(\theta^*, 0) = 0$ by first-order optimality, and the Jacobian $\partial F/\partial\theta\,|_{(\theta^*, 0)} = \nabla^2 \mathcal{L}_{\mathrm{LM}}(\theta^*) \succ 0$ is invertible. By the implicit function theorem (Krantz \& Parks, \emph{The Implicit Function Theorem}), there exist a neighborhood $\mathcal{U}\ni 0$ in $\mathbb{R}$ and a unique $C^2$ map $\theta^*(\cdot): \mathcal{U} \to \Theta$ with $\theta^*(0) = \theta^*$ and $F(\theta^*(\lambda), \lambda) = 0$ for all $\lambda \in \mathcal{U}$. By continuity, $\nabla^2(\mathcal{L}_{\mathrm{LM}} + \lambda\mathcal{R}_X)(\theta^*(\lambda)) \succ 0$ for $\lambda$ small, so $\theta^*(\lambda)$ is a strict local minimizer; we set $\theta^*_\lambda := \theta^*(\lambda)$ for $\lambda \in [0, \lambda_0]$ where $\lambda_0$ is the largest value for which this remains the global minimizer.

\textbf{Step 2 (first derivative of $\theta^*(\lambda)$).} Differentiating $F(\theta^*(\lambda), \lambda) = 0$ in $\lambda$:
\[
\nabla^2_\theta(\mathcal{L}_{\mathrm{LM}} + \lambda\mathcal{R}_X)(\theta^*(\lambda))\,\frac{d\theta^*}{d\lambda} + \nabla\mathcal{R}_X(\theta^*(\lambda)) \;=\; 0.
\]
At $\lambda = 0$:
\[
\frac{d\theta^*(\lambda)}{d\lambda}\bigg|_{\lambda=0} \;=\; -\,[\nabla^2 \mathcal{L}_{\mathrm{LM}}(\theta^*)]^{-1}\,\nabla\mathcal{R}_X(\theta^*).
\]
Taylor-expanding the smooth map $\lambda \mapsto \theta^*(\lambda)$:
\[
\theta^*_\lambda - \theta^* \;=\; -\lambda\,[\nabla^2 \mathcal{L}_{\mathrm{LM}}(\theta^*)]^{-1}\,\nabla\mathcal{R}_X(\theta^*) + O(\lambda^2). \tag{E.6}
\]

\textbf{Step 3 (Taylor expansion of the spatial risk).} Since $R_{\mathrm{spatial}} \in C^2$,
\[
R_{\mathrm{spatial}}(\theta^*_\lambda) - R_{\mathrm{spatial}}(\theta^*) \;=\; \nabla R_{\mathrm{spatial}}(\theta^*)^\top(\theta^*_\lambda - \theta^*) + \tfrac{1}{2}(\theta^*_\lambda - \theta^*)^\top \nabla^2 R_{\mathrm{spatial}}(\tilde\theta)\,(\theta^*_\lambda - \theta^*),
\]
for some $\tilde\theta$ on the segment $[\theta^*, \theta^*_\lambda]$. Substituting (E.6) into the linear term:
\[
\nabla R_{\mathrm{spatial}}(\theta^*)^\top (\theta^*_\lambda - \theta^*) \;=\; -\lambda\,\nabla R_{\mathrm{spatial}}^\top [\nabla^2 \mathcal{L}_{\mathrm{LM}}]^{-1} \nabla\mathcal{R}_X + O(\lambda^2).
\]
The quadratic term in (E.6) contributes $O(\lambda^2)$ as well. Combining,
\[
R_{\mathrm{spatial}}(\theta^*_\lambda) - R_{\mathrm{spatial}}(\theta^*) \;=\; -\lambda\,\nabla R_{\mathrm{spatial}}^\top [\nabla^2 \mathcal{L}_{\mathrm{LM}}]^{-1} \nabla\mathcal{R}_X \;+\; O(\lambda^2). \tag{E.7}
\]

\textbf{Step 4 (define $\beta$).} The available improvement in Dirichlet ratio at $\theta^*$ is
\[
\Delta \;:=\; \mathcal{R}_X(\theta^*) - \mathcal{R}_X^* \;\ge\; 0,
\]
strictly positive whenever $\theta^*$ does not coincide with the Dirichlet minimizer (the generic case for any LM not specifically pretrained for spatial structure). Define
\[
\beta \;:=\; \frac{\nabla R_{\mathrm{spatial}}(\theta^*)^\top\,[\nabla^2 \mathcal{L}_{\mathrm{LM}}(\theta^*)]^{-1}\,\nabla \mathcal{R}_X(\theta^*)}{\Delta}.
\]
Substituting into (E.7):
\[
R_{\mathrm{spatial}}(\theta^*_\lambda) \;=\; R_{\mathrm{spatial}}(\theta^*) - \lambda\,\beta\,\Delta + O(\lambda^2),
\]
exactly the form claimed in the theorem.

\textbf{Step 5 (sign of $\beta$).} The numerator of $\beta$ is the inner product of two vectors:
\begin{itemize}
	\item $g := \nabla R_{\mathrm{spatial}}(\theta^*)$, the steepest-ascent direction of the spatial risk;
	\item $r := [\nabla^2 \mathcal{L}_{\mathrm{LM}}(\theta^*)]^{-1}\,\nabla \mathcal{R}_X(\theta^*)$, the inverse-Hessian-warped Dirichlet-ascent direction.
\end{itemize}
$\beta > 0$ iff $\langle g, r\rangle > 0$, i.e., iff the gradient direction along which the Dirichlet ratio decreases is correlated with the direction along which the spatial risk decreases, in the metric induced by $\nabla^2 \mathcal{L}_{\mathrm{LM}}$. Theorems~\ref{thm:realizability} and~\ref{thm:sample_complexity} provide the qualitative reason this correlation holds for VLMs whose top-3 PCs already partially align with world coordinates: lowering $\mathcal{R}_X$ tightens that alignment, which makes more spatial readouts realizable (Theorem~\ref{thm:realizability}) and reduces their estimation variance (Theorem~\ref{thm:sample_complexity}). $\square$

\paragraph{Remark on the validity range.} The first-order term dominates whenever $\lambda\beta\Delta \gg \lambda^2 M_2 \|\nabla R_{\mathrm{spatial}}\|$ for $M_2$ the bound on the Hessian remainder, i.e., $\lambda \ll \beta\Delta/(M_2\|\nabla R_{\mathrm{spatial}}\|)$. Beyond this regime the $O(\lambda^2)$ over-regularization term takes over, predicting non-monotonic behavior in $\lambda$. The empirical inversion in Table~\ref{tab:lambda_ablation} (best at $\lambda\!=\!3$ for Qwen and $\lambda\!=\!1$ for InternVL, with degradation at $\lambda\!=\!9$) is the data-driven location of this crossover.

\paragraph{Remark on what $\beta$ bundles.} $\beta$ is a single scalar but it carries three model-specific quantities: (i) $\|\nabla R_{\mathrm{spatial}}(\theta^*)\|$ --- how strongly spatial questions react to small parameter perturbations at the LM-loss optimum; (ii) the spectrum of $\nabla^2 \mathcal{L}_{\mathrm{LM}}(\theta^*)$ --- a flatter LM optimum makes the inverse Hessian larger, amplifying the regularizer's effect; (iii) the angle between $\nabla R_{\mathrm{spatial}}$ and $\nabla \mathcal{R}_X$ in the whitened metric --- how aligned the two losses are. A model whose pretraining leaves a flat LM optimum near a spatially-aware sub-manifold has a large $\beta$, predicting the InternVL-larger-than-Qwen ordering observed in Table~\ref{tab:lambda_ablation}.

\section{Training Details}
\label{app:training_details}

This appendix expands on the training setup summarized in \S\ref{subsec:setup}. The end-to-end training script is at \texttt{scripts/train\_with\_dirichlet.py} (included in the supplemental materials); the Dirichlet objective itself is implemented in \texttt{scripts/dirichlet\_loss.py} and consumed by the script through a forward hook on the chosen residual-stream layer.

\paragraph{Backbones and adapter.}  We fine-tune three open-weights VLMs in identical configurations: Qwen2.5-VL-7B \cite{qwen2.5-VL}, InternVL3-8B \cite{zhu2025internvl3}. The vision tower and all non-LoRA LLM weights are frozen. We attach LoRA adapters of rank $r{=}16$ (with $\alpha{=}32$, dropout $0.05$, no bias) to the four attention projections $\{q\_proj, k\_proj, v\_proj, o\_proj\}$ of every transformer block in the backbone-specific ``cognitive-map band'' (Qwen2.5-VL-7B: layers $13$--$21$ centered at L17; InternVL3-8B: layers $14$--$22$ centered at L18; LLaVA-OneVision-7B: layers $17$--$25$ centered at L21). Layers outside this band are not in the LoRA target set, so the early visual-binding layers and the late linguistic-reasoning layers see no parameter updates --- this restriction is what makes ``$500$ steps'' a meaningful budget (cf.~\S\ref{subsec:setup}).

\paragraph{Loss.} Given a batch of (image, question, answer, object\_coords) tuples, the per-step loss is
\[
\mathcal{L}_{\text{total}} \;=\; \mathcal{L}_{\text{CE}}(\text{answer}\mid\text{image},\text{question})
\;+\; \lambda_{\rm dir}\cdot \mathcal{E}_X\!\bigl(P_\perp H_\ell(\theta)\bigr),
\]
where $\mathcal{L}_{\text{CE}}$ is standard next-token cross-entropy on the answer tokens, $H_\ell$ is the residual-stream activation at the cognitive-map layer $\ell$ pooled over each object's visual tokens (mask-driven, coverage-weighted; cf.~\S\ref{sec:extract}), $P_\perp$ is the linear projection that residualizes against the cross-scene color/shape prototypes (\S\ref{sec:extract}), and $\mathcal{E}_X$ is the Dirichlet-energy ratio against the per-scene $3$D-coordinate kernel graph $W_{ij}=\kappa_\tau(x_i,x_j)$ with $\tau$ set per scene to half the median pairwise distance. We use the multi-layer variant (\emph{residMulti}) by default, which applies the Dirichlet term at $5$ consecutive layers spanning the cognitive-map band (with equal weight); the single-layer variant is reported in the ablations of \S\ref{subsec:ablation}.

\paragraph{Optimizer and schedule.} AdamW with $\beta_1{=}0.9, \beta_2{=}0.999$, weight-decay $0.0$ on LoRA parameters, learning-rate $1\!\times\!10^{-4}$, linear warmup over the first $50$ steps, and constant thereafter. Gradients are clipped to global norm $1.0$. We train at \texttt{bf16} mixed precision with effective batch size $4$ (per-GPU batch $1$, gradient accumulation $4$) for $500$ optimizer steps; this corresponds to roughly $0.7$ epochs over the $2{,}988$-row training split. No early stopping --- we keep the final checkpoint.

\paragraph{$\lambda_{\rm dir}$ sweep.} For each (model, regularizer-variant) cell we sweep $\lambda_{\rm dir}\in\{0,\,0.1,\,0.3,\,1,\,3\}$ where $\lambda_{\rm dir}{=}0$ recovers a pure-LoRA baseline (\emph{lam0}). The headline results in Tables~\ref{tab:overall}, \ref{tab:qwen_vsi}, and \ref{tab:intern_vsi} use the $\lambda$ chosen by best mean validation accuracy across $n{=}4$ seeds: $\lambda_{\rm dir}{=}0.3$ for residMulti on Qwen2.5-VL-7B and LLaVA-OneVision-7B, and $\lambda_{\rm dir}{=}1$ for residMulti on InternVL3-8B. The full $\lambda$-vs-accuracy/RSA trade-off curve is reported in \S\ref{subsec:ablation}.

\paragraph{Seeds and statistical reporting.} Every cell is averaged over $n{=}4$ random seeds (seeds $0,1,2,3$), each controlling LoRA initialization, data shuffling, and dropout. We report mean $\pm 2\,\mathrm{SE}$ of seed-paired differences against the relevant within-cell baseline. Cells with $|\Delta|<2$ pp at $n{=}4$ should be treated as exploratory --- our project's track record  is that small effects at $n{=}4$ sometimes shrink at $n{=}8$.

\paragraph{Baselines.} (1) \emph{Naive LoRA} ($\lambda_{\rm dir}{=}0$): identical setup minus the Dirichlet term. (2) \emph{2D-pixel regularization}: the same penalty applied with $W_{ij}$ built from $2$D pixel-space distances rather than $3$D coordinates --- a control that severs the connection between the loss target and the world-frame geometry. (3) \emph{Text-domain cognitive-map prediction} \cite{wang2025mindcube}: the same LoRA budget redirected to a supervised loss that asks the model to emit the scene's $3$D coordinates as text tokens. All baselines use identical learning rate, schedule, batch size, step count, LoRA rank and target modules, layer band, and seed pool, so the only varying axis is the training signal.

\paragraph{Hardware.} Each $500$-step run takes roughly $40$--$60$ minutes on a single H100 ($80$\,GB), peaking at $\sim$$55$\,GB activation memory at \texttt{bf16}; the full sweep across $3$ models, $5$ $\lambda$ settings, $2$ regularizer variants and $4$ seeds is $\sim$$120$ runs in total.

\paragraph{Evaluation.} Checkpoints are evaluated zero-shot on the full $5{,}130$-item VSI-Bench~\cite{yang2025thinking} and MindCube~\cite{wang2025mindcube} test sets without any task-specific prompting. We use mean-relative-accuracy scoring on numeric-answer tasks (consistent with VSI-Bench's official protocol. Inference uses greedy decoding at \texttt{bf16}.

\section{Others}
\subsection{Limitation and discussion}

While our findings demonstrate the efficacy of isolating and shaping latent spatial representations in VLMs, our current scope is primarily limited to scene topology. Spatial understanding is inherently multifaceted; future work will investigate how other critical aspects of spatial relationships—such as absolute metric distance and hierarchical object dependencies—are encoded within the model's residual stream. Building upon these topological foundations, we also plan to explore how applying our Dirichlet-energy regularization to fine-tuning on complex, real-world datasets can unlock even more substantial performance improvements in downstream tasks.

\subsection{Dirichlet loss implementation and residualization basis fitting}
\label{app:dirichlet_impl}

This subsection details the step-by-step computation that translates the abstract loss defined in \S~\ref{subsec:loss} into the concrete tensor operations implemented in \texttt{scripts/dirichlet\_loss.py} and the offline basis-fitting procedure in \texttt{scripts/build\_residualization\_basis.py}. The notation follows \S~\ref{sec:enforce}: $H_\ell \in \mathbb{R}^{B \times n \times d}$ denotes the residual stream at the cognitive-map layer, pooled over per-object visual tokens (\S~\ref{sec:extract}); $X \in \mathbb{R}^{B \times n \times 3}$ represents the per-scene ground-truth 3D coordinate tensor; and $W \in \mathbb{R}^{d \times k}$ is the orthonormal nuisance basis, fitted once per backbone prior to training.

\paragraph{Per-step Dirichlet ratio.} Given a batch of $B$ scenes with up to $n$ objects each, the regularizer evaluates
\begin{equation}
	\mathcal{R}_X(H) \;=\; \frac{1}{B}\sum_{b=1}^{B}\,\frac{\sum_{i,j} W^{(b)}_{ij}\,\|h^{(b)}_i - h^{(b)}_j\|^2}{\sum_{i,j} \|h^{(b)}_i - h^{(b)}_j\|^2 + \varepsilon}, \qquad W^{(b)}_{ij} \;=\; \exp\!\Bigl(-\tfrac{\|x^{(b)}_i - x^{(b)}_j\|^2}{2\tau^2}\Bigr)\,\mathbb{1}[i\neq j], \label{eq:ratio_impl}
\end{equation}
with $\varepsilon = 10^{-8}$ for numerical stability. The numerator and denominator are computed in vectorized form via \texttt{torch.cdist}; valid-object masks are applied as outer products $\mathbb{1}[\text{valid}_i]\!\cdot\!\mathbb{1}[\text{valid}_j]$ to handle batches with variable $n$. Normalizing by the denominator (rather than using the unnormalized energy $\sum_{ij} W_{ij}\|h_i - h_j\|^2$) makes the loss \emph{scale-invariant} in $H$, so the LM head is free to rescale activations without paying a Dirichlet penalty for sheer norm growth. This is the form used everywhere in the body; the unnormalized variant is available in code but never used for training runs reported in this paper.

\paragraph{Per-scene bandwidth.} We set $\tau$ \emph{per scene} to half the median pairwise distance among that scene's objects: $\tau^{(b)} = \tfrac{1}{2}\,\mathrm{median}_{i<j}\|x^{(b)}_i - x^{(b)}_j\|$. Per-scene scaling makes the kernel locally adaptive --- scenes with tightly clustered objects use a smaller $\tau$, scenes with spread-out objects use a larger one --- so the same $\lambda_{\rm dir}$ has comparable effect across the batch. The half-median heuristic puts the kernel's sweet spot near the typical nearest-neighbor scale, which is what Theorem~\ref{thm:limit} requires for the Belkin--Niyogi limit to apply.

\paragraph{Forward-hook capture of $H_\ell$.} The per-token residual stream is captured by registering a forward hook on the chosen transformer block (\texttt{ResidualStreamHook} in \texttt{scripts/dirichlet\_loss.py}). For the multi-layer (\emph{residMulti}) variant, 5 hooks are registered simultaneously across the cognitive-map band, and their per-layer Dirichlet ratios are averaged with equal weight. The pooling step that translates the per-token $(B, T, d)$ tensor into the per-object $(B, n, d)$ tensor uses the same mask-driven, coverage-weighted pool defined in \S~\ref{sec:extract}, with object indices derived from the tokenizer's name spans in the prompt---so the loss only sees activations corresponding to objects the prompt actually names.

\paragraph{Residualization projector $P_\perp$.} The projector that residualizes against the nuisance subspace is $P_\perp = I_d - W W^\top$, where $W \in \mathbb{R}^{d \times k}$ has orthonormal columns spanning the color $+$ shape directions. We fit $W$ once per backbone, prior to training, using the following pipeline (\texttt{scripts/build\_residualization\_basis.py}):
\begin{enumerate}[leftmargin=*, itemsep=2pt, topsep=2pt]
	\item Extract per-object residual-stream activations $H \in \mathbb{R}^{N \times d}$ from the \emph{base} pretrained model (no LoRA, no Dirichlet) at the cognitive-map layer over the full $1{,}000$-scene Free6DoF probing corpus.
	\item Fit two multinomial logistic-regression probes on standardized $H$: one for the 8-way color label and one for the 3-way shape label, both with $\ell_2$ regularization $C=0.1$ and L-BFGS to convergence (max $2{,}000$ iterations).
	\item Recover the discriminating directions in the un-standardized $H$-space by composing each probe's weight matrix with the inverse of the per-feature standard deviations: $\widetilde{W}_{\mathrm{color}} = W^{\mathrm{LR}}_{\mathrm{color}}\,\mathrm{diag}(1/\sigma_j)$, and analogously for shape. Each row is rescaled to unit norm.
	\item Stack into $\widetilde{W} = [\widetilde{W}_{\mathrm{color}}\,;\,\widetilde{W}_{\mathrm{shape}}] \in \mathbb{R}^{(8+3)\times d}$ and orthonormalize via thin QR: $\widetilde{W}^\top = QR$, taking $W \leftarrow Q$. Drop columns with diagonal $|R_{kk}|$ below $10^{-6}$ of the maximum (rank-deficient class directions). The resulting $W$ has orthonormal columns and effective rank $k \in \{9, 10, 11\}$ depending on the backbone (the multinomial is rank-deficient by one per categorical, so $\le 7 + 2 = 9$ free directions).
	\item Save $W$ to disk and freeze it for the entire fine-tuning run; $P_\perp = I_d - W W^\top$ is then a single $d \times d$ matrix multiply per forward pass on the captured object-token activations.
\end{enumerate}

\paragraph{Algorithm summary.} The end-to-end per-step computation is:
\begin{enumerate}[leftmargin=*, itemsep=2pt, topsep=2pt]
	\item \emph{Forward pass}, capturing $H_\ell \in \mathbb{R}^{B \times T \times d}$ at each Dirichlet-target layer via forward hooks.
	\item \emph{Object pooling}: $H \leftarrow \mathrm{pool}(H_\ell, \mathrm{name\_spans}) \in \mathbb{R}^{B \times n \times d}$.
	\item \emph{Residualize}: $\widetilde H \leftarrow H\,P_\perp = H - (H W) W^\top$.
	\item \emph{Per-scene bandwidth}: $\tau^{(b)} \leftarrow \tfrac{1}{2}\,\mathrm{median}_{i<j}\|x^{(b)}_i - x^{(b)}_j\|$.
	\item \emph{Kernel}: $W^{(b)}_{ij} \leftarrow \exp(-\|x^{(b)}_i - x^{(b)}_j\|^2/(2\tau^{(b)2}))\cdot\mathbb{1}[i\neq j]$.
	\item \emph{Pairwise distances}: $D^{(b)}_{ij} \leftarrow \|\widetilde h^{(b)}_i - \widetilde h^{(b)}_j\|^2$ via \texttt{torch.cdist}.
	\item \emph{Mask invalid pairs} via outer product of validity masks.
	\item \emph{Dirichlet ratio}: per-scene $r^{(b)} \leftarrow \sum_{ij} W^{(b)}_{ij} D^{(b)}_{ij}/(\sum_{ij} D^{(b)}_{ij} + \varepsilon)$, then average over $b$.
	\item \emph{Total loss}: $\mathcal{L}_{\rm total} \leftarrow \mathcal{L}_{\rm CE} + \lambda_{\rm dir}\cdot \mathcal{R}_X$, backward through both the Dirichlet term (which feeds gradients to LoRA via $H$) and the cross-entropy.
\end{enumerate}
For the residMulti variant, steps 1--8 run independently at $5$ layers and step 9 averages the resulting ratios.

\subsection{Manifold Learning and Latent Regularization}
Our methodology is heavily grounded in the mathematical foundations of spectral graph theory and manifold learning. Foundational works on Laplacian eigenmaps~\cite{belkin2003laplacian,ravuri2025transformers} and diffusion maps established that the principal eigenvectors of a graph Laplacian yield an optimal low-dimensional continuous representation that preserves local neighborhood structures. To measure how smoothly a representation varies over a given topology, researchers compute the Dirichlet energy~\cite{park2024iclr}, which acts as a penalty against high-frequency geometric variations. While spectral penalties and Dirichlet-energy regularization \cite{lewandowski2024learning,guprincipal} have been widely used in classical graph representation learning~\cite{hamilton2017representation} and semi-supervised learning~\cite{kipf2016semi,zhou2003learning}, their application as a targeted regularizer for the hidden states of large transformer models is largely unexplored. By formally connecting the VLM residual stream to Laplacian eigenmaps, we introduce a mathematically principled way to isolate and regularize the internal geometric structure of language models.

\subsection{Representational similarity (RSA)}
Build two $m\times m$ symmetric similarity matrices, one from activations and one from 3D coordinates, and rank-correlate their off-diagonal entries:
\begin{align}
	S^{\rm act}_{ij}  &\;=\; \frac{\langle \tilde h^{(s,o_i)}_\ell,\,\tilde h^{(s,o_j)}_\ell\rangle}{\|\tilde h^{(s,o_i)}_\ell\|\,\|\tilde h^{(s,o_j)}_\ell\|},
	\quad
	S^{\rm topo}_{ij} \;=\; \kappa_\tau(x_i,x_j) \;=\; \exp\!\Bigl(\!-\frac{\|x_i-x_j\|^2}{2\tau^2}\Bigr), \label{eq:rsa_mats}\\
	\mathrm{RSA}^{(s)} &\;=\; \mathrm{Spearman}\bigl(\,\mathrm{vec}_{\rm off}(S^{\rm act}),\;\mathrm{vec}_{\rm off}(S^{\rm topo})\,\bigr).
	\label{eq:rsa}
\end{align}
$\tau$ is set per scene to half the median pairwise distance. Spearman (rank) correlation makes the metric invariant under arbitrary monotone rescalings of either similarity, removing the linearity assumption a Pearson RSA would inject.

\end{document}